\providecommand*{\Dom}[1]{{\rm dom}}							%Domain
\providecommand{\diam}{\operatorname{diam}}						%diameter
\providecommand{\argmin}{\operatorname*{argmin}}				%argument yielding inf
\providecommand*{\cspan}[1]{\overline{\operatorname{span}}\left\{{#1}\right\}}
\providecommand{\diag}{\operatorname{diag}}						%Diagonal
\providecommand*{\N}[1]{\left\|{#1}\right\|} 					%Double bar norm, for use in new line equations
\providecommand*{\textSN}[1]{\lvert{#1}\rvert} 					%Sinlge bar norm, for use with inline text
\newcommand*{\DP}[2]{\left<{#1},{#2}\right>} 					%Euclidean inner product
\newcommand{\VA}{{\mathbf{A}}}
\newcommand{\VB}{{\mathbf{B}}}
\newcommand{\VH}{{\mathbf{H}}}
\newcommand{\VI}{{\mathbf{I}}}
\newcommand{\VK}{{\mathbf{K}}}
\newcommand{\VP}{{\mathbf{P}}}
\newcommand{\VU}{{\mathbf{U}}}
\newcommand{\VSigma}  {\mathbf{\Sigma}}
\newcommand{\Valpha}     {{\boldsymbol{\upalpha}}}
\newcommand{\Vbeta}      {{\boldsymbol{\upbeta}}}
\newcommand{\abs}{{\boldsymbol{\mathsf{a}}}}
\newcommand{\bbs}{{\boldsymbol{\mathsf{b}}}}
\newcommand{\dbs}{{\boldsymbol{\mathsf{d}}}}
\newcommand{\sbs}{{\boldsymbol{\mathsf{s}}}}
\newcommand{\xbs}{{\boldsymbol{\mathsf{x}}}}
\newcommand{\ybs}{{\boldsymbol{\mathsf{y}}}}
\newcommand{\zbs}{{\boldsymbol{\mathsf{z}}}}
\providecommand{\CA}{{\cal A}}
\providecommand{\CB}{{\cal B}}
\providecommand{\CC}{{\cal C}}
\providecommand{\CD}{{\cal D}}
\providecommand{\CH}{{\cal H}}
\providecommand{\CI}{{\cal I}}
\providecommand{\CL}{{\cal L}}
\providecommand{\CN}{{\cal N}}
\providecommand{\CO}{{\cal O}}
\providecommand{\CS}{{\cal S}}
\providecommand{\CX}{{\cal X}}
\providecommand{\CY}{{\cal Y}}
\providecommand{\bbE}{\mathbb{E}}
\providecommand{\bbN}{\mathbb{N}}
\providecommand{\bbR}{\mathbb{R}}
\providecommand{\Fc}{\mathfrak{c}}
\providecommand{\Ff}{\mathfrak{f}}
\newcommand{\dd}{\texttt{ddim}}
\newcommand{\childp}{\widetilde{C}_p}
\newcommand{\vol}{\operatorname*{Vol}}
\newcommand{\method}{\textsf{StreaMRAK }}
\newcommand{\falkon}{\textsf{FALKON }}
\newcommand{\lapkrr}{\textsf{LP-KRR }}
\newcommand{\methodE}{\textsf{StreaMRAK}}
\newcommand{\falkonE}{\textsf{FALKON}}
\newcommand{\lapkrrE}{\textsf{LP-KRR}}
\newcommand{\cof}{\Fc\Ff}
\newcommand{\cofp}{\Fc\Ff(p)}
\newcommand{\coflvl}{\Fc\Ff(Q_l)}
\newcommand{\rl}{r_{l}}
\newcommand{\rlp}{r_{l+1}}
\newcommand{\figcap}{\vspace{-5pt}}
\newcommand{\tabcap}{\vspace{-7pt}}
\theoremstyle{plain}
	\newtheorem{theorem}{Theorem}[section]
	\newtheorem{proposition}[theorem]{Proposition}
	\newtheorem{lemma}[theorem]{Lemma}
	\newtheorem{corollary}[theorem]{Corollary}
	\newtheorem{assumption}{Assumption}
	\newtheorem{remark}[theorem]{Remark}
	\newtheorem{definition}[theorem]{Definition}
	\newtheorem{problem}{Problem}
\numberwithin{equation}{section}
\begin{document}%\layout

\title{\vspace{0.8cm}StreaMRAK a Streaming Multi-Resolution Adaptive Kernel Algorithm\vspace{0.5cm}}

\newcommand{\footremember}[2]{%
    \footnote{#2}
    \newcounter{#1}
    \setcounter{#1}{\value{footnote}}%
}
\newcommand{\footrecall}[1]{%
    \footnotemark[\value{#1}]%
} 

\author{%
  Andreas Oslandsbotn\footremember{uio}{University of Oslo, Problemveien 7, 0315 Oslo, Norway}\footremember{simula}{Simula School of Research and Innovation, Martin Linges Vei 25, 1364 Fornebu, Norway}%
  \and 
  {\v Z}eljko Kereta\footremember{ucl}{University College London, Gower St, WC1E 6BT London, England}%
  \and 
  Valeriya Naumova\footremember{simula2}{Simula Research Laboratory, Martin Linges vei 25, 1364 Fornebu, Norway}%
  \and 
  Yoav Freund\footremember{ucsd}{University of California San Diego, 9500 Gilman Dr, La Jolla, CA 92093, United States}%
  \and Alexander Cloninger\footrecall{ucsd}%
  }
\date{}

\maketitle
\begin{abstract}
Kernel ridge regression (KRR) is a popular scheme for non-linear non-parametric learning. 
However, existing implementations of KRR require that all the data is stored in the main memory, which severely limits the use of KRR in contexts where  data size far exceeds the memory size.
Such applications are increasingly common in data mining, bioinformatics, and control. 
A powerful paradigm for computing on data sets that are too large for memory is the {\em streaming model of computation}, where we process one data sample at a time, discarding each sample before moving on to the next one.

In this paper, we propose \textsf{StreaMRAK} - a streaming version of KRR. \method improves on existing KRR schemes by dividing the problem into several levels of resolution, which allows continual refinement to the predictions. 
The algorithm reduces the memory requirement by continuously and efficiently integrating new samples into the training model. 
With a novel sub-sampling scheme, \method reduces memory and computational complexities by creating a {\em sketch} of the original data, where the sub-sampling density is adapted to the bandwidth of the kernel and the local dimensionality of the data.

We present a showcase study on two synthetic problems and the prediction of the trajectory of a double pendulum. The results show that the proposed algorithm is fast and accurate.
\end{abstract}

%###########################################################################
%%%%%%%%%%%%%%%%%%%%%%%%%%%%%                                              #
%%%%%%% Introduction %%%%%%%%                                              #
%%%%%%%%%%%%%%%%%%%%%%%%%%%%%                                              #
%###########################################################################
\section{Introduction}
Machine learning algorithms based on kernel ridge regression (KRR) \cite{scholkopf2002learning} is
an active field of research \cite{rudi2017falkon, Alaoui2014, Zhang2015divide, Avron2017, Burnaev2017}, with applications ranging from time series prediction in finance \cite{Exterkate2016Nonlinear}, parameter inference in dynamical systems \cite{Niu2016}, to pairwise learning \cite{Stock2018comparative}, face recognition \cite{An2007face} and drug estimation and gene analysis in biomedicine \cite{Li2015, Mohapatra2016Microarray}.
This paper develops a streaming variation of KRR using a radial kernel, a new sub-sampling scheme, and a multi-resolution formulation of the learning model.

Many popular data analysis software packages, such as $\mbox{Matlab}^{\mbox{\tiny TM}}$ require loading the entire dataset into memory. While the size of computer memory is growing fast, the size of available data sets is growing much faster, limiting the applicability of \textit{in-memory} methods.~\footnote{$\mbox{Simulink}^{\mbox{\tiny TM}}$, a companion software to $\mbox{Matlab}^{\mbox{\tiny TM}}$ supports streaming but has a much more limited computational model, targeted at signal processing applications.}

Streaming~\cite{muthukrishnan2005data} is a computational model where the input size is much larger than the size of memory. Streaming algorithms read one input item at a time, update their memory, and discard the item. The computer memory is used to store a {\em model} or a {\em sketch} of the overall data distribution, which is orders of magnitude smaller than the data itself. The development of streaming algorithms is experiencing increased popularity in the face of big data applications such as data mining \cite{fan2013mining} and bioinformatics \cite{lan2018survey}, where data sets are typically too large to be kept \textit{in-memory}. 
Many big data applications call for non-linear and involved models, and thus, the development of non-parametric and non-linear models is critical for successful learning.

Among the most popular non-parametric learning algorithms are kernel methods, which include well-known learning schemes such as the support vector machine (SVM) and KRR, to name a few. The appeal of kernel methods lies in their strong theoretical foundation \cite{scholkopf2002learning, kivinen2002online}, as well as their ability to map complex problems to a linear space without requiring an explicit mapping. A common class of kernels are radial kernels $k(\xbs,\tilde\xbs)=\Phi(\|\xbs- \tilde\xbs\|/r)$ for $\xbs, \tilde\xbs\in\CX\subseteq\bbR^D$ and $r>0$ \cite{scovel2010radial}. An example is a Gaussian kernel, for which the shape parameter $r$ is referred to as the kernel bandwidth. What is more, radial kernels are universal kernels (with a few exceptions \cite{micchelli2006universal}), meaning that they can approximate any bounded continuous function on $\CX$ arbitrarily well. However, in high dimensions kernel methods suffer from the "curse of dimensionality" and require large amounts of training data to converge. Furthermore, the computational complexity, memory requirement, and the number of parameters to learn grow unbounded with the number of training samples, a drawback known as the "curse of kernelization" \cite{wang2012breaking}. In the context of streaming, the prospect of unbounded data streams makes this shortcoming even more detrimental.

Although kernel-based learning schemes are typically formulated as convex optimization problems, which do not require tuning hyper-parameters such as learning rate etc., there is still a need to determine the optimal kernel. For the Gaussian kernel, this amounts to selecting the bandwidth. Classically, an optimal kernel is chosen through batch techniques such as leave-one-out and k-fold cross-validation \cite{loader1999bandwidth, cawley2004fast, arlot2010survey}. However, these approaches are inefficient as they spend significant time evaluating bad kernel hypotheses and often use multiple runs over the data, which is impossible in a streaming setting.

Despite the universality of radial kernels on $\CX\in\bbR^D$, this only guarantees the convergence of the model in the asymptotic regime and does not provide finite sample bounds. 
As a reaction, several works have shown the benefit of combining multiple kernels from a dictionary of kernel hypotheses. 
These strategies include multi-kernel learning (MKL) \cite{Zhang2015divide, lanckriet2004learning, bach2004multiple, sonnenburg2006large, buazuavan2012fourier}, multi-scale analysis \cite{bermanis2013multiscale, Rabin2018multiscale}, and the Laplacian pyramid (LP) \cite{Rabin2012, Leeb2019}. 
Combining these strategies with a localized kernel gives a frequency and location-based discretization similar to multi-resolution analysis, a well-established concept in signal processing and functional approximation through concepts such as wavelets \cite{graps1995introduction, akansu2010emerging}, diffusion wavelets \cite{coifman2006diffusion, maggioni2008diffusion}, and graph wavelets \cite{hammond2011wavelets, cloninger2021natural, vito2021waveletframes}.

To meet a need for non-linear non-parametric algorithms for streaming data, we propose the \textit{streaming multi-resolution adaptive kernel algorithm} (\textsf{StreaMRAK}) - a computationally and memory-efficient streaming variation of KRR. \method is a streaming algorithm that combines a streaming sub-sampling scheme with a multi-resolution kernel selection strategy and adapts the kernel bandwidth $r$ and the sub-sample density to each other over several levels of resolution. Furthermore, \method addresses the curse of dimensionality in a novel way, through the sub-sampling scheme and multi-resolution formulation.

\subsection{Setting}
\label{subsection:setting}
We consider a finite sample data-cloud $\CX$, $|\CX|=n$, that is sampled i.i.d. according to a fixed but unknown distribution ${\cal P}$ over $\bbR^D$. The target is a bounded and continuous function $f:\bbR^D \to \bbR$. 
We assume that the points in $\CX$ are placed 
in a {\em sequence} and that their order is random.~\footnote{The assumption that the sequence is randomly ordered allows us to draw statistical conclusions from prefixes.} 
Each instance $\xbs_i\in\CX$, for $i\in[n]$,  paired with a label $y_i$ where $y_i=f(\xbs_i)+\varepsilon_i$ and $\varepsilon_i\sim\CN(0,\sigma)$ represents noise. The task of learning is to train a model $\widehat{f}$ that is a good approximation of the target function $f$.

In this work, we think about the intrinsic dimension of $\CX$ as a local quantity, meaning it depends on the region $\CA \subseteq\CX$ and the radius $r$ at which we consider the point cloud.
Rooted in this way of thinking about the intrinsic dimension, \method is designed to handle domains where the local intrinsic dimension changes across different regions and resolutions.

To estimate the local intrinsic dimension in a "location and resolution sensitive" manner, we use the concept of the doubling dimension of a set, defined in Def. \ref{def:doubling_dim_at_range}. We note that our definition of the doubling dimension is related to the definition used in \cite{krauthgamer2004navigating, Beygelzimer2006}.

\begin{definition}[Covering number]
Consider a set $\CA$ and a ball $\CB(\xbs, r)$, with $r>0$ and $\xbs\in\CA$. 
We say that a finite set $\CS\subset\CB(x,r)$ is a covering of $\CB(x,r)$ in $\CA$ if $\CA\cap\CB(x,r)\subset \cup_{\xbs_i\in\CS} \CB(\xbs_i, r/2)$.
We define the covering number $\kappa(\CA, \xbs, r)$ as the minimum cardinality of any covering of $\CB(\xbs,r)$ in $\CA$.
\label{def:covering_number}
\end{definition}

\begin{definition}(Doubling dimension)
The doubling dimension $\dd(\CA, r)$  of a set $\CA$ is defined as $\dd(\CA, r) = \lceil\log \kappa(\CA,\xbs,r)\rceil$.
For an interval $\CI\subset \bbR_{>0}$ we define the doubling dimension as the least upper bound over $r\in\CI$, that is $\dd(\CA,\CI)=\max_{r\in\CI}\dd(\CA,r)$.
\label{def:doubling_dim_at_range}
\end{definition}

Using Def. \ref{def:doubling_dim_at_range} we say that the intrinsic dimension of $\CX$ changes with the location if there exist $\CA_1,\CA_2\subset\CX$ such that $\dd(\CA_1,r)\neq \dd(\CA_2,r)$ for some $r>0$. Similarly, we say that the intrinsic dimensionality of $\CX$ changes with the resolution, if there exist $r_1 \neq r_2$ such that the doubling dimension $\dd(\CA, r_1) \neq \dd(\CA, r_2)$ for $\CA\subset\CX$. 

In Fig. \ref{fig:Examples_on_intrinsic_dim} we consider three examples to provide further insight for the doubling dimension. In Fig. \ref{subfig:change_loc} we see a domain shaped like a dumbbell, where the spheres are high dimensional, and the bar connecting them is lower-dimensional, showing how the dimension can change with the location. Meanwhile, Fig. \ref{subfig:change_noise} illustrates a lower-dimensional manifold, embedded in $\bbR^3$, with manifold noise $\zeta_m$. We see that when the resolution is sufficiently small, so that $r\approx\zeta_m$, the doubling dimensionality increases towards the dimension of the ambient space $\bbR^D$. Furthermore, Fig. \ref{subfig:change_scale} shows a point cloud that is locally $2$-dimensional, but is embedded in a $3$-dimensional space. By reducing $r$ we can resolve this lower dimensionality, but if it is reduced further, we would eventually resolve the noise level, and the doubling dimension increases again. 

We also mention two special cases.
First, for large enough $r$ any set in $\bbR^D$ has a doubling dimension of at most $D$.
Second, if $\CA$ is a finite set of points in $\bbR^D$ and $r$ is smaller than the minimal distance between two points, then the number of balls of radius $r'<r$ required to cover $\CA$ is at most the number of points. Therefore the doubling dimension of $\CA$ at the range $(0,r]$ is zero. In other words, {\em any} actual (and therefore finite) training set has dimension zero for a small enough $r$, as illustrated in Fig. \ref{subfig:change_scale}. 

%%% Suggested placement of Figure 1
\begin{figure}[htb!]
    \centering
    \subfloat[\label{subfig:change_loc}\centering Intrinsic dimension changes with location.]{{\includegraphics[width=0.26\textwidth]{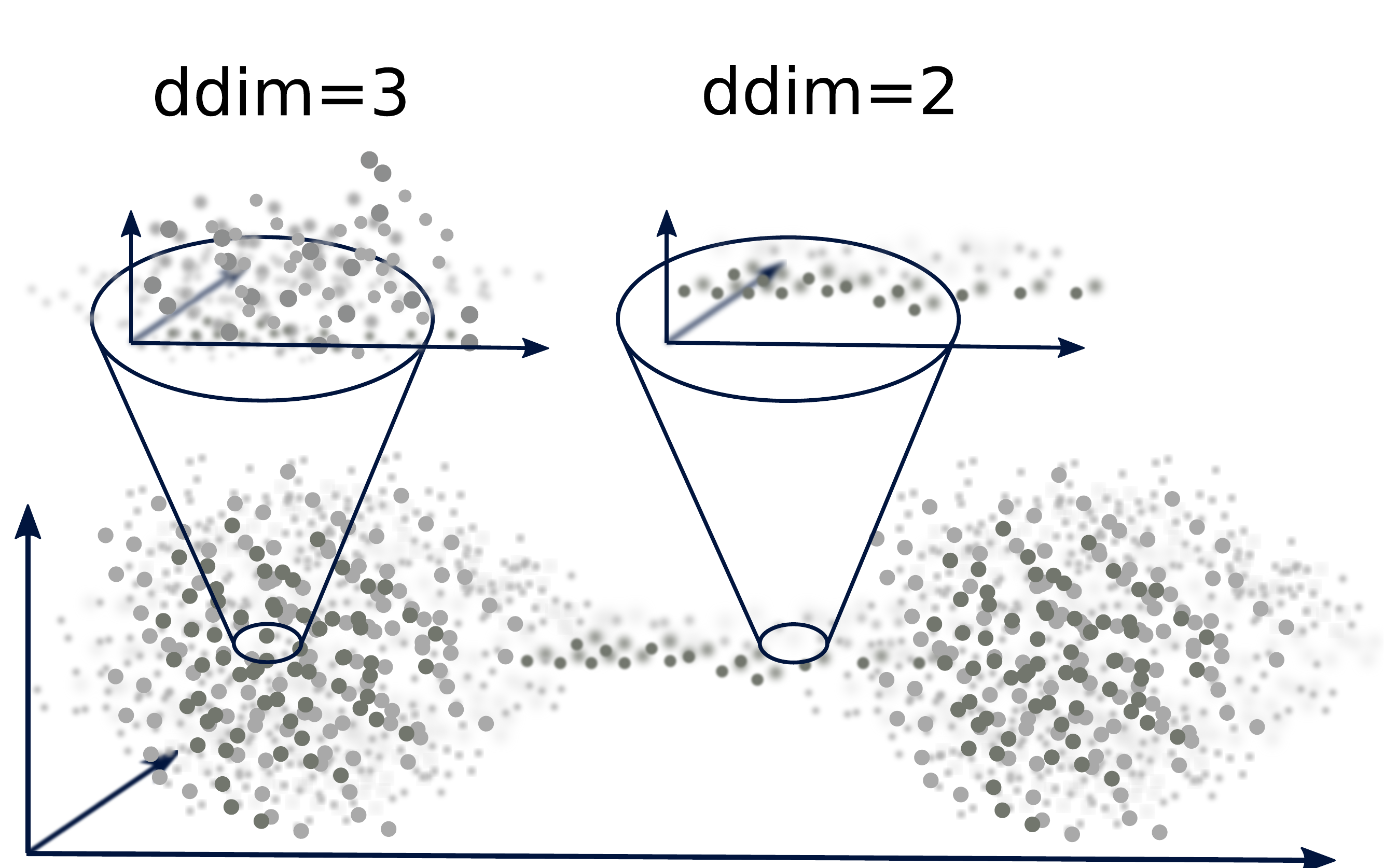} }}%
    \hfill
    \subfloat[\label{subfig:change_noise}\centering Noise has high intrinsic dimension.]{{\includegraphics[width=0.26\textwidth]{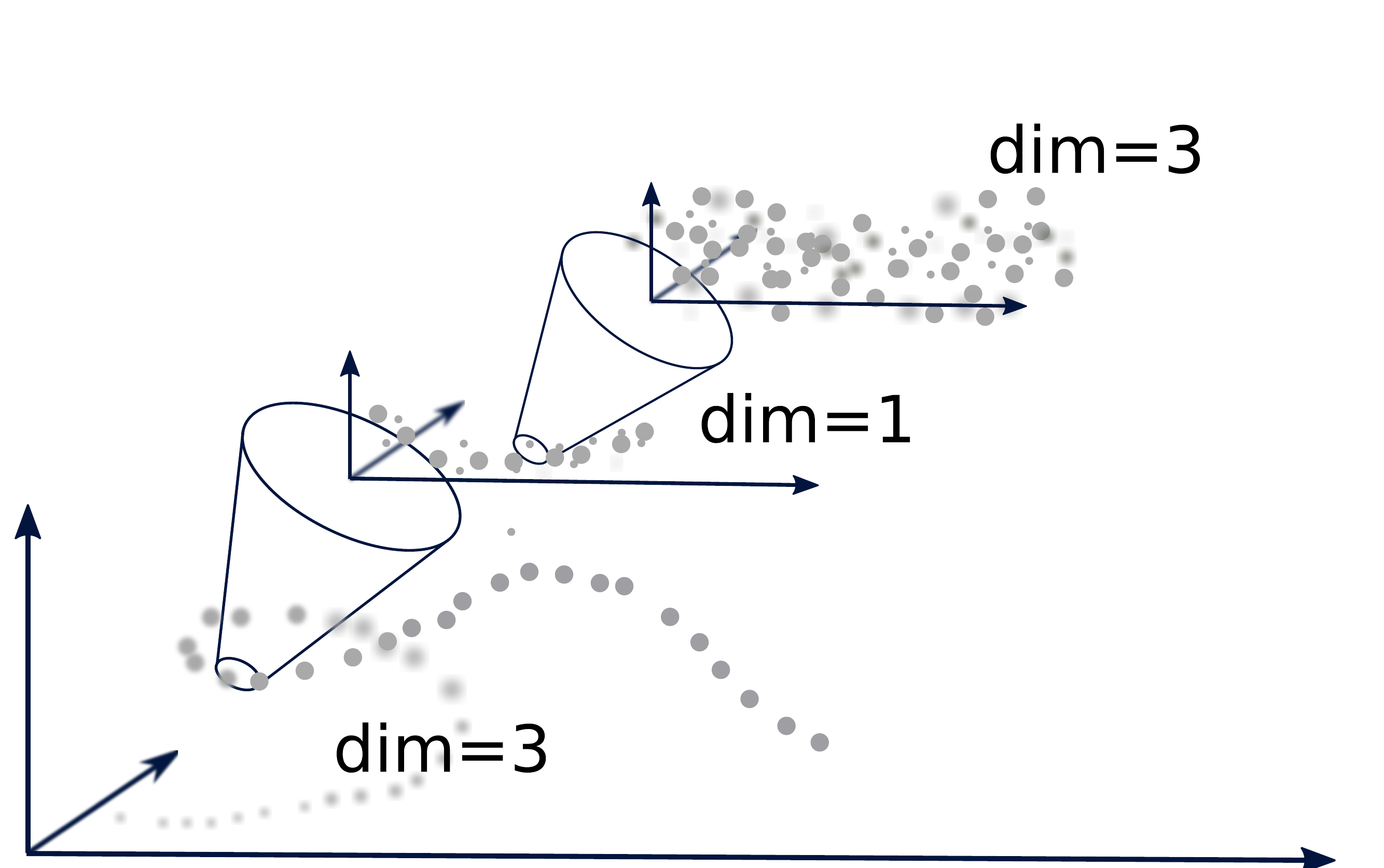} }}%
    \hfill
    \subfloat[\label{subfig:change_scale}\centering Intrinsic dimension changes with the resolution.]{{\includegraphics[width=0.26\textwidth]{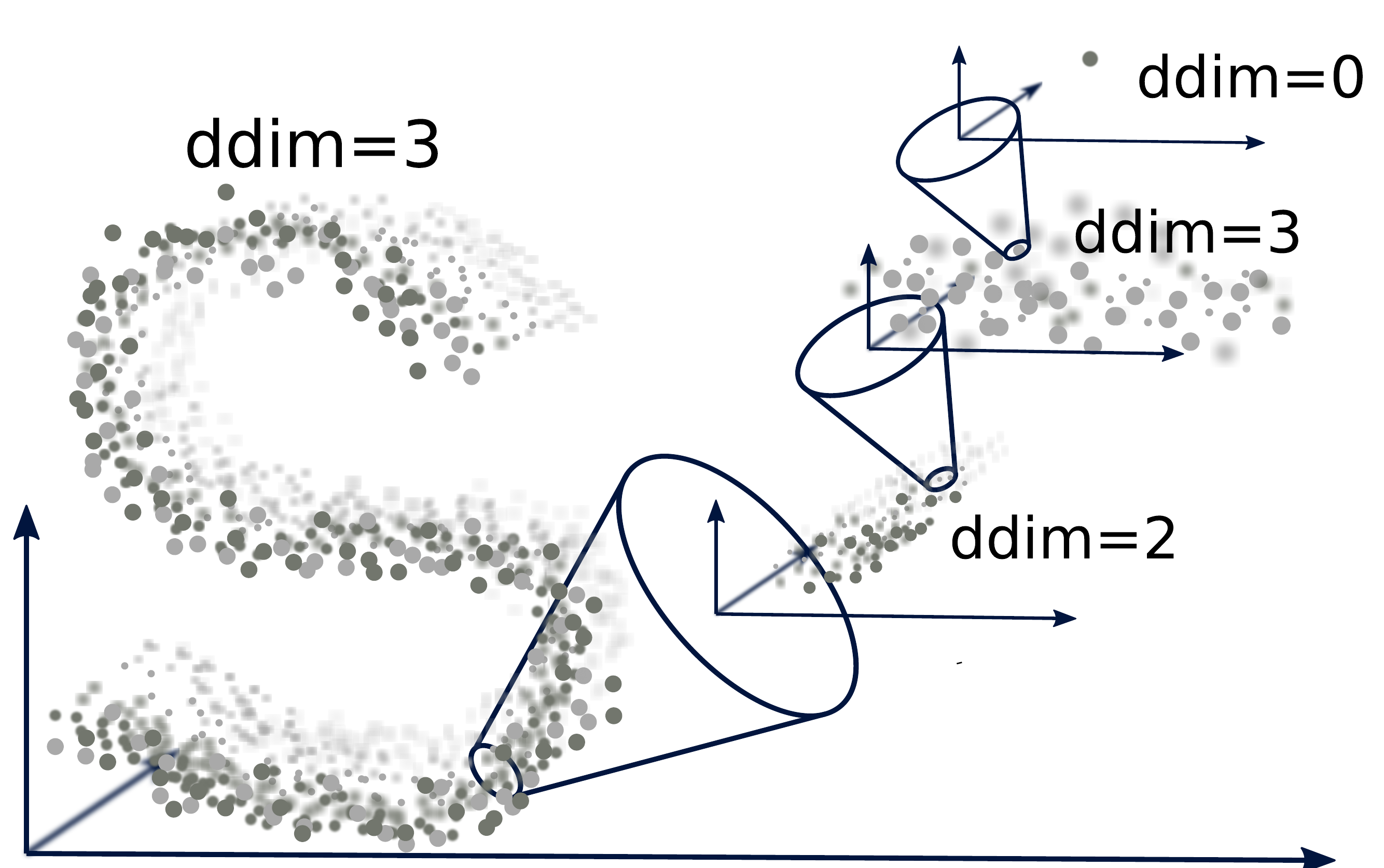} }}%
    \figcap
    \caption{\label{fig:Examples_on_intrinsic_dim}Three examples of variation in the intrinsic dimension. The coloring of the point clouds illustrates depth.}
\end{figure}

As an example of how the intrinsic dimension might change with respect to regions and resolutions, we consider a double pendulum system, a well-known chaotic system that depends heavily on its initial conditions \cite{shinbrot1992chaos}.
Systems with multiple pendulum elements are well known in engineering applications such as mechanical and robotic systems with several joints and are studied for their chaotic properties \cite{marcelo2016chaos}.

Let $\omega_i=\dot{\theta_i}$, for $i=1,2$, be angular velocities. We initialize $500$ pendulums with $\theta_1,\theta_2=180$, $\omega_1=10^{-2}$ and $\omega_2=10^{-1}$, measured in degrees, and perturb the angular velocities by $\varepsilon_i \sim \CN(0, 10^{-2}|\omega_i|)$. We iterate the system for $T=500$ time steps and let $\sbs^{(i)}_t=[\theta_1(t),\, \theta_2(t),\, \omega_1(t),\, \omega_2(t)]\in \bbR^4$ be the state of pendulum $i$ at $t\in \bbN$. We think of each state as a training point in a point cloud $\CX\subset\bbR^4$, where for instance the target function can be $\sbs^{(i)}_{t+\Delta} = f(\sbs^{(i)}_t)$, with $\Delta\in\bbN$. 

In Fig. \ref{subfig:phaseDiagramIntroduction_2d} we visualize the trajectory of four pendulums $P_0, P_1, P_2, P_3$, for which the trajectories are indistinguishable until a bifurcation occurs around $T=300$ time steps, and the trajectories start to diverge. In Fig. \ref{subfig:phaseDiagramIntroduction_3d_before_bifurcation} and Fig. \ref{subfig:phaseDiagramIntroduction_3d} we zoom in on the trajectory of all $500$ pendulums in regions before and after the bifurcation. These two regions, $\CA_1$ and $\CA_2$, are indicated by a blue and red circle, respectively, in Fig. \ref{subfig:phaseDiagramIntroduction_2d}. From the figures, it is clear that learning the trajectory in $\CA_1$ is significantly easier than in $\CA_2$, where learning the trajectory is more affected by the curse of dimensionality. 

We also note that the trajectories that remain close after the bifurcation will remain so until a new bifurcation occurs. The take-home message is that predicting the trajectory of a double pendulum is a hard problem because of a few regions where the intrinsic dimension blows up and makes the prediction hard. However, between these regions, the trajectory is easier to describe.
The spirit of this work aims to reduce the effort in such regions $\CA_2$ where the training data exhibit high intrinsic dimension and focus more on those regions $\CA_1$ where the data has a lower intrinsic dimension, i.e. is more well behaved.

%%% Suggested placement of Figure 2
\begin{figure}[htb!] 
    \centering
    \subfloat[\label{fig:DoublePendIllustration}\centering Double pendulum]{{\includegraphics[width=0.19\textwidth]{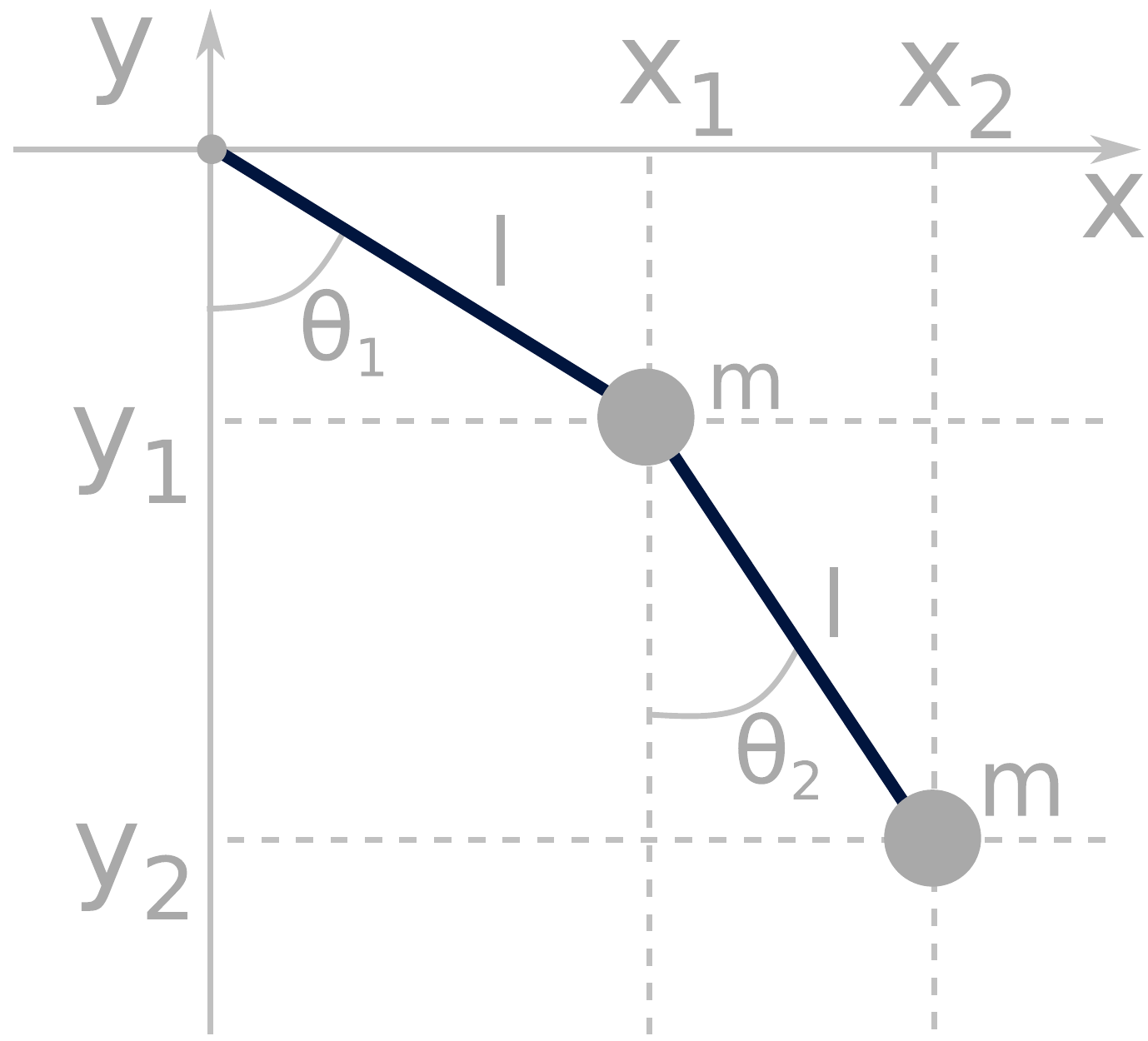} }}%
    \hfill
    \subfloat[\label{subfig:phaseDiagramIntroduction_2d}\centering 2-dim projection of the pendulum training data $\CX$.]{{\includegraphics[width=0.21\textwidth]{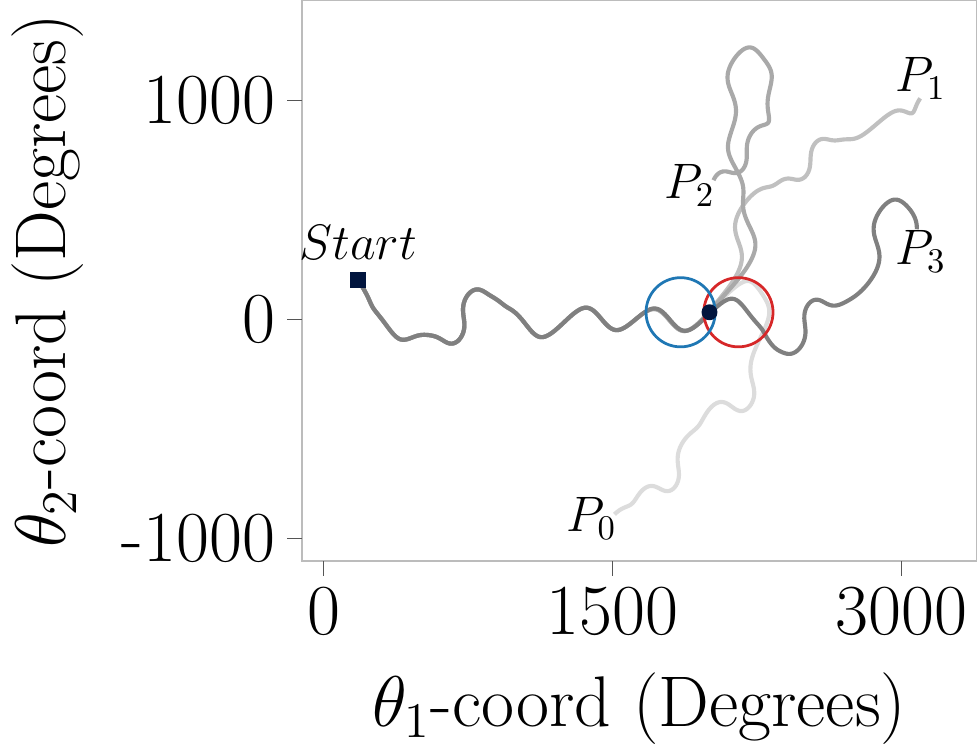} }}%
    \hfill
    \subfloat[\label{subfig:phaseDiagramIntroduction_3d_before_bifurcation}\centering 3-dim projection of the training data in the blue circle in (b).]{{\includegraphics[width=0.26\textwidth]{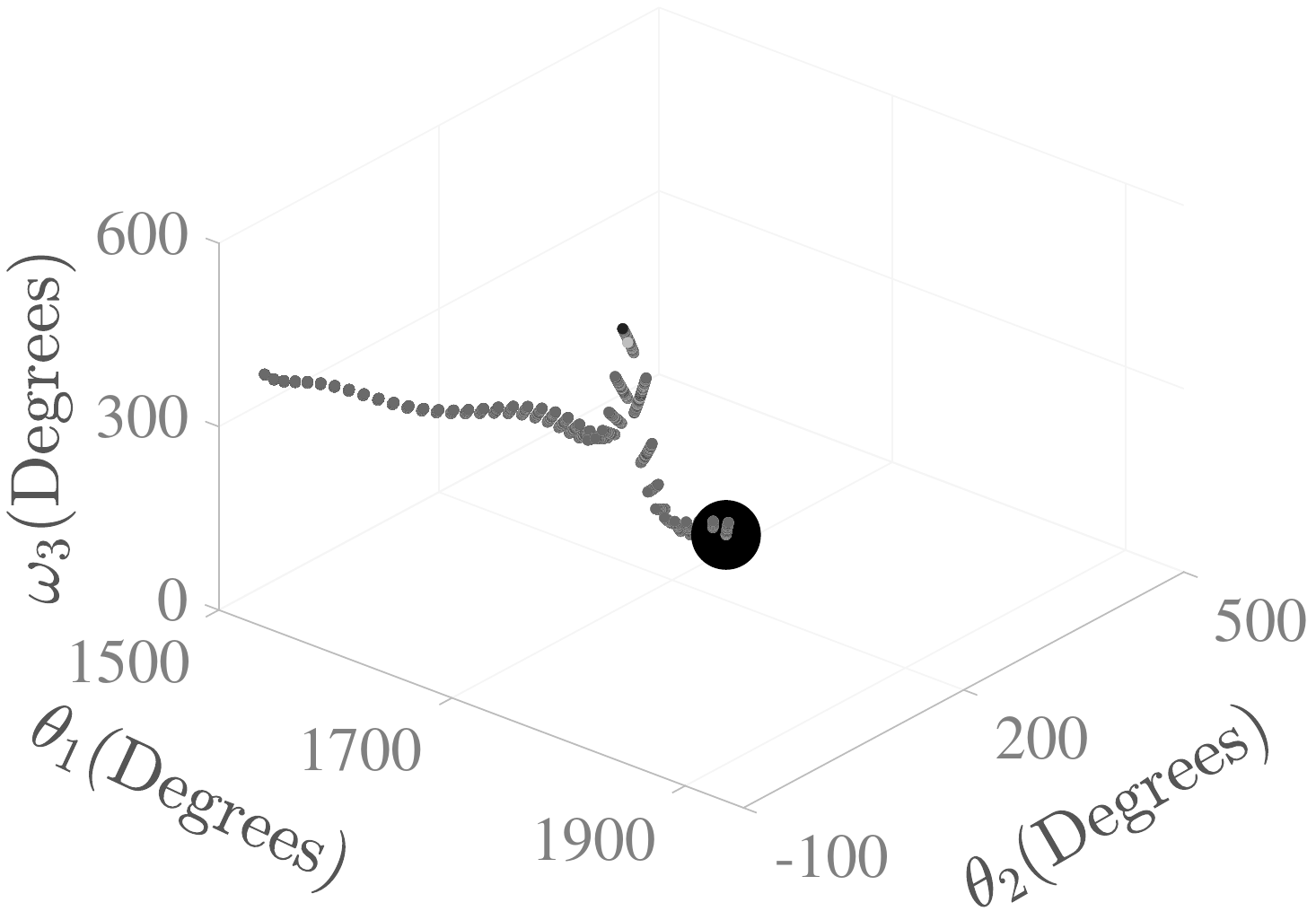} }}%
    \hfill
    \subfloat[\label{subfig:phaseDiagramIntroduction_3d}\centering 3-dim projection of the training data in the red circle in (b).]{{\includegraphics[width=0.26\textwidth]{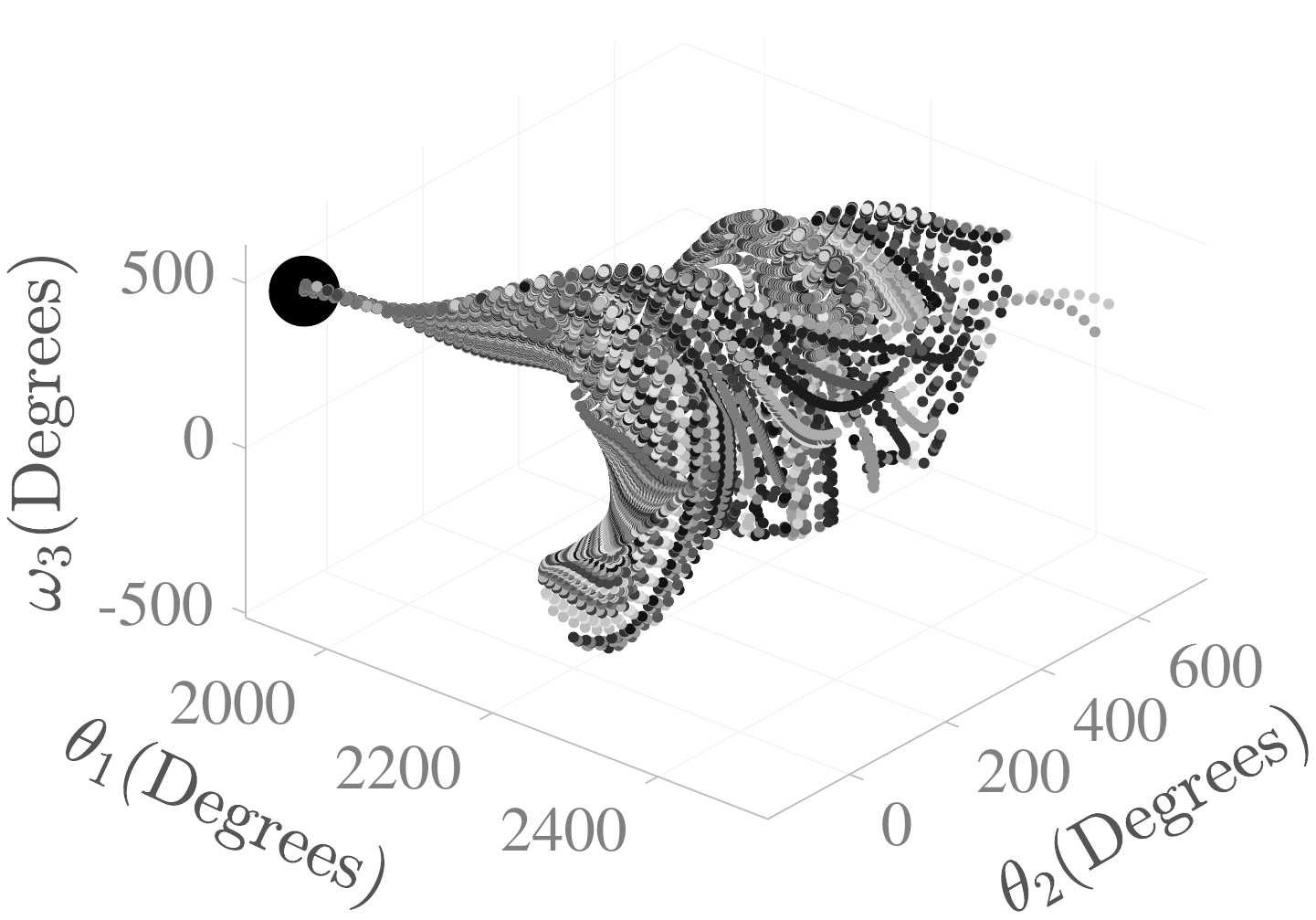} }}%
    \figcap
    \caption{(a) Illustration of a double pendulum. Here $l$ and $m$ are the length and mass of the pendulum rods, and $\theta_1, \theta_2$ are the angles. Furthermore, $(x_1, y_1)$ and $(x_2,\,y_2)$ are the positions of the point masses of the two pendulums. (b) Phase diagram of four double pendulums $P_0, P_1,P_2,P_3$, iterated for $T=500$ time steps. The bifurcation point at step $T=300$ is indicated with a black solid circle. (c) and (d) includes the $\omega_1$ axis and zoom in on respectively the blue and red circles in (b).}
    \label{fig:IntroIllustration}%
\end{figure}

\subsection{Contribution and comparison to related work}
Contributions of this work can be divided into three components. 

\begin{enumerate}[label=(C\arabic*)]
    \item \label{contribution_LP} A multi-resolution variation of the state-of-the-art KRR solver \falkon \cite{rudi2017falkon}, using the LP, which refines the predictions at each level of resolution by regressing on the errors from the previous level.

    \item \label{contribution_subSamp} A novel sub-sampling scheme for kernel methods, tailored for use in combination with the LP, that can handle the curse of dimensionality and does not require the data to be \textit{in-memory}.
    
    \item \label{contribution_Stream} Development of a streaming variation of \falkonE, where the time and memory requirements depend on the doubling dimensionality and the level of resolution, instead of the number of training points. see Props. \ref{prop:memory_streamrak_and_DCT}-\ref{prop:Time_req_solver}.
\end{enumerate}

In the following, we give further details on these contributions and compare them to related work. 

The computational backbone of \method is based on the state-of-the-art KRR solver \falkon \cite{rudi2017falkon}, which among other things combines sub-sampling and preconditioning to process large data sets efficiently.
However, \falkon relies on selecting an optimal kernel bandwidth, which can be inefficient and hard within a streaming setting.

Inspired by the success of existing multi-resolution approaches \cite{Zhang2015divide, lanckriet2004learning, bach2004multiple, sonnenburg2006large, buazuavan2012fourier,bermanis2013multiscale}, our first contribution \ref{contribution_LP} addresses the issue of selecting an optimal kernel bandwidth by introducing a multi-resolution reformulation of \falkon using a variation of the LP.

The LP scheme originated in image representation \cite{Burt2009} and was introduced to machine learning by \cite{Rabin2012} for efficient data representation.
The LP refines the prediction at several levels of resolution, and at each level, reduces the bandwidth used at the previous level by a constant factor.
In doing so, the selection of a single optimal bandwidth is avoided, and the resulting approach has greater flexibility.
The LP is similar to ideas in wavelet analysis that have shown great success in numerous applications.
However, typical wavelet architectures \cite{graps1995introduction, akansu2010emerging,coifman2006diffusion, maggioni2008diffusion, hammond2011wavelets, cloninger2021natural} require upfront construction of a wavelet basis, which is not compatible with a data-adaptive kernel. In this work, we aim to show that the LP is a viable multi-resolution scheme and can be modified to the streaming setting. Furthermore, we experimentally show that it significantly improves the estimation accuracy, and inspired by  \cite{Leeb2019} we provide convergence bounds for the LP in the context of radial kernels and KRR.

Let us now discuss our second contribution \ref{contribution_subSamp}.
\falkon addresses the curse of kernelization by combining Nystr\"{o}m sub-sampling, conjugate gradient, and preconditioning, and achieves time and memory requirements of $\mathcal{O}(n\sqrt{n})$ and $\mathcal{O}(n)$ respectively, where $n$ is the number of samples. 
In recent years there have been several efforts to address the curse of kernelization in similar ways through sub-sampling techniques such as sketching \cite{Alaoui2014, Avron2017}, randomized features \cite{rahimi2008random, le2013fastfood, yang2015carte, zhang2021sigma} and Nystr\"{o}m sub-sampling \cite{williams2001using, smola2000sparse, cloninger2017prediction, ma2018power, ma2018kernel}. However, despite their successes, these techniques are in principle \textit{in-memory} type algorithms since they require access to the training data in advance of the training and are not optimized for streaming. 

Furthermore, \falkon selects the sub-samples uniformly over the input domain $\CX$, and the LP uses the same training set for each level. However, when learning with a radial kernel, the density of samples should be related to the bandwidth of the kernel. Otherwise, a too-small bandwidth will lead to bad interpolation properties, while a too-large bandwidth gives an ill-conditioned system \cite{bermanis2013multiscale}. 
Since the LP scheme reduces the kernel bandwidth at each level of resolution, it would be problematic to use the same sub-sample density. Furthermore, due to the curse of dimensionality, the covering number increases exponentially with the doubling dimension. 
Therefore, if doubling dimensionality varies across different regions of the domain $\CX$, as illustrated by Fig. \ref{subfig:change_loc}, then the number of sub-samples necessary to maintain the density for a given bandwidth will also vary.

Our second contribution \ref{contribution_subSamp} provides an alternative sub-sampling strategy, which adapts the sub-sampling density to the kernel bandwidth. This strategy is similar to tuning the kernel bandwidth to the data, which is used in online algorithms to avoid the use of cross-validation \cite{Zhang2020, chen2016kernel, fan2016kernel}. Although expensive to calculate, especially in high dimensions, a similar strategy is used in graph-based methods, where the kernel bandwidth is adapted to the $k$-nearest neighbor distance of each training point  \cite{cheng2020convergence}.

The sub-sampling strategy we propose is based on a damped cover-tree (DCT), which is a modified version of the cover-tree (CT) \cite{Beygelzimer2006}. The CT is a tree-based data structure originally intended for nearest neighbor search in metric spaces. It is closely related to navigation nets \cite{krauthgamer2004navigating}, but with improved space requirements:  $\mathcal{O}(n)$ in memory and $\mathcal{O}(c^6n\log n)$ in time. In this work, we show how the CT structure can be used to organize the samples hierarchically with increasing density for each new level in the tree and how it can adapt the sub-sample density to kernel bandwidths in the LP. 

However, the problem with an adaptive sub-sampling strategy is its vulnerability to the curse of dimensionality. In regions of high doubling dimensions, the number of samples to achieve a certain density increases exponentially, as quantified by Def. \ref{def:doubling_dim_at_range}. This means that the number of sub-samples from the CT will quickly grow too large for efficient computing. The danger is to waste resources on samples from subsets and levels where the doubling dimension is so large that good interpolation cannot be achieved for any viable sample sizes. This would only serve to slow down the computation and not increase the precision. 

Due to this, the DCT introduces a damping property, which gradually suppresses the selection of sub-samples where the doubling dimensionality is large. This has the additional advantage of allowing to choose more sub-samples from regions where the doubling dimensionality is small. Thus, the DCT can diminish the impact of the curse of dimensionality. Furthermore, the DCT can be built continuously as new samples come in, making it ideal for a streaming computational model.

Our third contribution \ref{contribution_Stream}, relies on the changes implemented with \ref{contribution_LP} and \ref{contribution_subSamp}. In particular, \method can operate as a streaming algorithm and efficiently organizes the sub-samples as it builds the multi-resolution kernel. Furthermore, the sub-sampling and kernel construction allows for continuous integration of new training points into the kernel matrix. Moreover, the DCT, the multi-resolution construction, and the KRR solver can all be multi-threaded and parallelized.

\subsection{Organization of the paper}
The paper is organized as follows. Section \ref{section:kernel_methods} introduces kernel methods and the \falkon algorithm, as well as the LP.  Section \ref{sect:Adaptive sub-sampling} introduces the adaptive sub-sampling scheme and the DCT. \method is described in Section \ref{section:TheStreaMRAKalgo} and an analysis of the algorithm is given in Section \ref{section:analysis}. Finally, Section \ref{section:ExperimentsMain} presents several numerical experiments and Section \ref{section:Outlook} gives an outlook for further work. The Appendix includes further mathematical background and the proofs.

\subsection{Notation}
We denote vectors $\abs \in \bbR^D$ with boldface and matrices $ \VA \in \bbR^{n\times m}$ with bold uppercase, and $\VA^\top$ denotes the matrix transpose.  We use $\VK_{nm}$ for kernel matrices,  where the dimensionality is indicated by the subscripts. We reserve $n$ for the number of training samples and $m$ for the number of sub-samples. 
The $ij$-th element of a kernel matrix is denoted $[\VK_{nm}]_{ij}$, while for other matrices we use $\VA_{ij}$. The notation $a_i$ indicates $i$-th element of a vector $\abs$.
Furthermore, we use $f([\xbs_n])$ to denote $(f(\xbs_1),\ldots, f(\xbs_n))^\top \in \bbR^n$, and $[m]$ to denote $\{i\}^{m}_{i=1}$. The notation $\xbs_i$ indicates the $i$-th training example. 
We use $\abs^{(l)}$ and $\VA^{(l)}$, where $l$ refers to a specific level in the LP and the DCT. We take $\|\cdot\|$ to be the $L^2$ norm and $\|\cdot\|_{\CH}$ to be the RKHS norm.
We denote the intrinsic dimension of a manifold with $d$ and the dimension of the embedding with $D$.
By $\mathbbm{1}_{\CS}(\xbs)$ we denote the indicator function, which evaluates to $1$ if $\xbs\in\CS$ and $0$ otherwise, of a set $\CS\subset \bbR^D$.

%###########################################################################
%%%%%%%%%%%%%%%%%%%%%%%%%%%%%                                              #
%%% Kernel methods and LP %%%                                              #
%%%%%%%%%%%%%%%%%%%%%%%%%%%%%                                              #
%###########################################################################
\section{Kernel methods}
\label{section:kernel_methods}
Consider a positive definite kernel $k:\CX\times\CX\rightarrow\bbR$, defined on an input space $\CX\subset\bbR^D$.
Given data $\{(\xbs_i,y_i):i\in[n]\}$ of samples from $\CX\times\bbR^D$, kernel ridge regression computes an estimator by minimising 
\begin{align*}
        \widehat f_{n,\lambda} =\argmin_{f\in\widetilde\CH} \frac{1}{n}\sum_{i=1}^n (f(\xbs_i)-y_i)^2  + \lambda \N{f}_\CH^2,
\end{align*}
where $\CH$ is the Hilbert space induced by the kernel.
This allows to reduce the problem to a linear system
\begin{align}\label{eqn:KRR}
    (\VK_{nn}+\lambda n \VI_n)\Valpha = \ybs, \text{ for } [\VK_{nn}]_{ij}=k(\xbs_i,\xbs_j),\text{ and } \ybs=(y_1,\ldots,y_n)^\top.
\end{align}
Coefficients $\Valpha=(\alpha_1,\ldots,\alpha_n)^\top$ define the estimator by $f(\xbs)=\sum_{i=1}^n \alpha_ik(\xbs,\xbs_i)$.
However, solving \eqref{eqn:KRR} using traditional methods has a time complexity of $\CO(n^2)$, which can be costly for large $n$ \cite{rudi2017falkon}. 

\falkon \cite{rudi2017falkon} addresses this issue by sub-sampling the columns of $\VK_{nn}$, which reduces the effective complexity while maintaining accuracy.
Namely, denote $\Gamma_n=\{\xbs_1,\ldots,\xbs_n\}$ and for $m\ll n$ let $\widetilde\Gamma_m=\{\widetilde{\xbs}_1,\ldots, \widetilde{\xbs}_m\}$ be Nystr\"{o}m centers (i.e. a randomly selected subset of $\Gamma_n$).
Minimizing
\begin{equation}
    \widehat f_{n,m,\lambda} =\argmin_{f\in\widetilde\CH_M} \frac{1}{n}\sum_{i=1}^n (f(\xbs_i)-y_i)^2  + \lambda \N{f}_\CH^2,
\end{equation}
where $\widetilde\CH_m=\cspan{k(\cdot, \widetilde{\xbs}_j):j\in[m]}$, leads to a linear system 
\begin{align*}
\VH \widetilde{\Valpha} = \zbs, \text{ for } \VH = \VK_{nm}^\top\VK_{nm} + \lambda n\VK_{mm}, \text{ and } \zbs =\VK_{nm}\ybs.
\end{align*}
Here $[\VK_{nm}]_{ij}=k(\xbs_i, \widetilde{\xbs}_j)\in\bbR^{n\times m}$ is the column-subsampled matrix and the estimator is given by $\widehat f_{n,m,\lambda}(\xbs)=\sum_{j=1}^m \widetilde{\alpha}_j k(\xbs,\widetilde{\xbs}_j)$.
To further reduce the time complexity \falkon uses a suitable preconditioner to reduce the condition number.
The preconditioner is defined as $  \VB\VB^\top = (n/m \VK_{mm}^2+\lambda n\VK_{mm})^{-1}$,
which is a natural (lower complexity) approximation of the ideal preconditioner $\VA\VA^\top = (\VK_{nm}^\top\VK_{nm}+\lambda n\VK_{mm})^{-1}$.
We now solve for $\widetilde{\Valpha}$ from the system of equations
\begin{equation}
\VB^\top\VH\VB\Vbeta = \VB^\top\zbs, \text{ for } \VH = \VK_{nm}^\top\VK_{nm} + \lambda n\VK_{mm}, \text{ } \zbs =\VK_{nm}\ybs, \text{ and } \widetilde{\Valpha}=\VB\Vbeta.
\label{eq:Falkon_system}
\end{equation}
This is solved iteratively, using the conjugate gradients with early stopping.
Choosing $m=\CO(\sqrt{n})$ still ensures optimal generalisation (i.e. same as KRR), while reducing the computational complexity to $\CO(n\sqrt{n})$.

\subsection{Streaming adaptation of \falkon}
Matrices and vectors involved in the linear system in \eqref{eq:Falkon_system} can be separated into two classes: those that depend only on sub-samples in $\widetilde\Gamma_m$; and those ($\VK_{nm}^\top\VK_{nm}$ and $\zbs$) that also depend on all the training points $\Gamma_n$.
Critically, terms in both groups are all of size $m$, which allows to reduce the complexity.
Consider now the set of sub-samples $\widetilde\Gamma_m$ to be fixed, and assume new training points, in the form $\{(\xbs_q, y_q):q=n+1,\ldots, n+t)\}$, are coming in a stream.
We can then update the second class of terms according to
\begin{align}
    \big[ (\VK_{(n+t)m})^\top \VK_{(n+t)m} \big]_{ij} &= \big[ (\VK_{nm})^\top \VK_{nm} \big]_{ij} + \sum_{q=n+1}^{n+t} k(\xbs_q, \widetilde{\xbs}_i)k(\xbs_q, \widetilde{\xbs}_j), 
    \label{eq:updateFormula_KnmTKnm}\\
\big[(\VK_{(n+t)m})^\top \ybs\big]_i&= z_i+\sum_{q=n+1}^{n+t} k(\xbs_q, \widetilde{\xbs}_i)y_q.
    \label{eq:updateFromualte_Zm}
\end{align}

Thus, only sub-samples $\widetilde\Gamma_m$, matrices $\big(\VK_{nm}\big)^\top\VK_{nm}$, $\VK_{mm}$ and $\zbs$, need to be stored.
However, in order to continuously incorporate new training points into Eqs. \eqref{eq:updateFormula_KnmTKnm} and  \eqref{eq:updateFromualte_Zm}, sub-samples $\widetilde\Gamma_m$ must be determined in advance. 
Whereas this works if all the data is provided beforehand, it cannot be done if the data arrives sequentially. 
In this work, we address this through a multi-resolution framework. The overall estimator is composed of a sequence of estimators defined at different resolution levels of the domain. 
Correspondingly, the set of sub-samples $\widetilde \Gamma_m$ consists of smaller sets $\widetilde \Gamma_{m^{(l)}}^{(l)}$ that correspond to individual levels of resolution.
The sets $\widetilde \Gamma_{m^{(l)}}^{(l)}$ are filled as the data streams in, and once a set for a given level is deemed complete, we proceed with updating  \eqref{eq:updateFormula_KnmTKnm} and  \eqref{eq:updateFromualte_Zm}.

Further details of how the sets $\widetilde \Gamma_{m^{(l)}}^{(l)}$ are constructed, and the corresponding criteria, are provided in Sections \ref{sect:Adaptive sub-sampling} and \ref{section:TheStreaMRAKalgo}. 
We begin by describing the multi-resolution framework of estimators.

\subsection{The Laplacian pyramid}
\label{section:TheLaplacianPyramid}
The LP \cite{Burt2009, Rabin2012} is a multi-resolution regression method for extending a model $\widehat{f}$ to out-of-sample data points $\xbs\in \CX/\Gamma_n$. The LP can be formulated for radial kernels in the form
\begin{equation}
    k(\xbs_i, \xbs_j) = \Phi\bigg(\frac{\|\xbs_i-\xbs_j\|}{r}\bigg),
    \label{eq:radialBasisFunction}
\end{equation}
where $r>0$ is a shape parameter that determines the decay of $\Phi$ with respect to $\|\xbs_i-\xbs_j\|$, see \cite{scovel2010radial}. 
The idea underpinning the LP is to approximate the target function sequentially,  where at each stage we regress on the errors from the previous stage. In other words, we begin with a rough approximation using a large shape parameter for which $\Phi$ decays slowly and then improve the approximation by fitting the resulting error and reducing the shape parameter.
In the LP, the estimator at level $L\in\bbN$ is defined recursively as
\begin{equation}
    \widehat{f}^{(L)}(\xbs) = \sum_{l=0}^L s^{(l)}(\xbs) = s^{(L)}(\xbs)+\widehat f^{(L)}(\xbs),
    \label{eq:laplacian_pyramid_model}
\end{equation}
where  $\hat f^{(0)}=s^{(0)}$, and $s^{(l)}(\xbs)$ is a correction term defined by
\begin{equation}
    s^{(l)}(\xbs) = \sum_{i=1}^n \alpha^{(l)}_i k^{(l)}(\xbs,\xbs_i).
    \label{eq:LP_approx_at_specific_level}
\end{equation}
The coefficients $\Valpha^{(l)}=(\alpha^{(l)}_1,\ldots,\alpha^{(l)}_n)^\top$ are computed by conducting KRR on the residuals, i.e. errors, from the estimator at the previous level. Namely, 
$\Valpha^{(l)} = (\VK^{(l)}_{nn} + \lambda n \VI)^{-1}\dbs^{(l)}$, where 
\begin{equation}
    \dbs^{(l)} = \begin{cases} 
    \ybs, & \text{if}\quad l = 0 \\ 
    \ybs-\widehat{f}^{(l-1)}([\xbs_n]), & \text{otherwise}
    \end{cases}.
    \label{eq:residual_at_level}
\end{equation}

For a \falkon adaption of this scheme, we only need to modify how per-level coefficients are computed.
Following \eqref{eq:Falkon_system} we iteratively solve
\begin{equation}
    (\VB^{(l)})^\top \VH^{(l)} \VB^{(l)} \Vbeta^{(l)} = (\VB^{(l)})^\top \big(\VK^{(l)}_{nm}\big)^\top\dbs^{(l)},
\label{eq:FALKONlinearSystem_in_LP}
\end{equation} 
where $\VB^{(l)}$ is the corresponding preconditioner, and $\VH^{(l)} = (\VK_{nm}^{(l)})^\top\VK_{nm}^{(l)} + \lambda n\VK_{mm}^{(l)}$, and set $\widetilde{\Valpha}^{(l)} = \VB^{(l)}\Vbeta^{(l)}$.

\begin{remark} In this paper, we construct the kernel matrices $\VK^{(l)}$ on a particular class of radial kernels, namely the Gaussian kernel
\begin{equation*}
    k^{(l)}(\xbs, \widetilde{\xbs}_i) = \exp{\bigg(-\frac{\|\xbs-\widetilde{\xbs}_i\|^2}{2r_l^2}\bigg)},
\end{equation*}
where $r_l>0$ is the shape parameter (the kernel bandwidth) at level $l$.
\label{remark:we_use_radial_basis_kernels}
\end{remark}

%###########################################################################
%%%%%%%%%%%%%%%%%%%%%%%%%%%%%                                              #
%%% The damped cover tree %%%                                              #
%%%%%%%%%%%%%%%%%%%%%%%%%%%%%                                              #
%###########################################################################
\section{The damped cover tree}
\label{sect:Adaptive sub-sampling}
This work introduces a data-driven sub-sampling method that we call the damped cover-tree (DCT). The DCT is a modification of the cover-tree (CT) \cite{Beygelzimer2006}, a data structure based on partitioning a metric space, initially designed to facilitate nearest neighbor search.
The goal of the DCT is to modify and simplify the CT to allow a viable sub-sampling scheme. 

Let $(\CX, \|\cdot\|)$ be a normed space where the input domain $\CX\subset \bbR^D$ is bounded, such that the diameter $r_0=\diam(\CX)$ is finite. The DCT is a tree structure where each node $p$ of the tree is associated with a point $\xbs_p\in \CX$, and which is built sequentially as data points arrive.
Furthermore, let $Q_l$ be a set (herein called a cover-set) containing all the nodes at a level $l\ge0$ in the given tree.
A level is associated with an integer $l$ and a radius $\rl=2^{-l}r_0$, where $l=0$ denotes the root level containing only one node and $l$ increases as we descend deeper into the tree.
DCT has three invariants, of which the first two are also invariants of the CT.

\begin{enumerate}[label=({I\arabic*})]
  \setcounter{enumi}{0}
 \item\label{inv:covering} (\textsf{Covering invariant}) For all $p \in Q_{l+1}$ there exists $q \in Q_l$ such that $\|\xbs_q-\xbs_p\| < \rl$.
 \item\label{inv:separation} (\textsf{Separation invariant}) For all $q, p \in Q_l$ where $\xbs_q\neq \xbs_p$, we have $\|\xbs_q-\xbs_p\| > \rl$.
\end{enumerate}
We add that the standard CT includes a third invariant, the so-called nesting invariant, which requires $Q_l\subseteq Q_{l+1}$, but this is not desired for our purpose.

To introduce the last invariant of the DCT, we first need the following definition.
\begin{definition}[The covering fraction] Let $p\in Q_l$ be a node, and $\xbs_p$ the associated point in $\CX$.
Furthermore, let $\childp=\{c_i\}_{i=1}^k$ be the children of $p$, and $\xbs_{c_i}$ the corresponding points in $\CX$. The covering fraction of a node $p$ is defined as
\begin{equation*}\label{eqn:cover_fraction}
    \cofp = \frac{\vol\bigg(\CB(\xbs_p,\rl) \cap \bigcup\limits_{c_i\in \childp} \CB(\xbs_{c_i}, \rlp)\bigg)}{\vol{\big(\CB(\xbs_{p}, \rl)\big)}}.
\end{equation*}
\label{def:cover_fraction}
\end{definition}

The covering fraction is the proportion of the volume of $\CB(\xbs_p, \rl)$ that is covered by balls around its children of half the radius.
This quantity is directly related to \ref{inv:separation}, which enforces the radius $\rl$ to reduce by a factor of $2$ for each new level, starting from an initial radius $r_0>0$. The
covering fraction allows us to capture the vulnerability of the standard CT to the curse of dimensionality.

For example, consider two regions $\CA_1,\CA_2\subseteq\CX$, for which the doubling dimension at radius $r_l$ is $\dd(\CA_1, r_l) > \dd(\CA_2, r_l)$. A node $p\in\CA_1$ at level $l$ will then need exponentially more children to be covered, than a node $q\in\CA_2$ at the same level $l$. 
This exacerbates the deeper we go into the tree. Therefore, the CT would have significantly more nodes from regions where the doubling dimension is large.

We recall now that sub-sampling is in kernel methods intended to reduce the computational complexity. 
For this purpose, it is desirable to keep the number of sub-samples from each level within a budget of reasonable size. 
On the other hand, a too low sub-sample density will lead to poor interpolation performance. 
Due to the exponential growth of the number of nodes with respect to the doubling dimension, it would be desirable to avoid wasting our budget on sub-samples from regions and radii with a large doubling dimension, 
as this would require dedicating an (exponentially) large number of points to achieve good interpolation, which is not feasible. 
Moreover, in high dimensional regions, we likely cannot learn anything more than a simple function, for which a lower sampling density would suffice.

To reduce the number of sub-samples from regions of large doubling dimensionality, we introduce the following damping invariant as the third invariant of the DCT.

\begin{enumerate}[label=({I\arabic*})]
  \setcounter{enumi}{2}
 \item\label{inv:damping}(\textsf{Damping invariant})
Let $\CD_{\cof}\in (0,1)$ be some threshold and let $\childp$ and $\cofp$ be as in Def. \ref{def:cover_fraction}. 
Then any node $q$ whose parent node $p$ does not satisfy $\cofp \geq \CD_{\cof}$ does not have children of its own.
\end{enumerate}

The damping invariant forces the tree to devote more resources to regions of lower doubling dimension by making it harder for nodes in regions with higher doubling dimensions to have children. 
In other words, the practical effect of the damping invariant is to stop the vertical growth of the DCT if the doubling dimension becomes large.
This is because the covering number grows exponentially with the dimensionality, ensuring $\cofp \geq \CD_{\cof}$ gets correspondingly harder to achieve.

\begin{remark}
In Section \ref{section:analysis_of_the_DCT}, we analyze the damping invariant in more detail and show how the damping suppresses vertical growth of the DCT more for regions of high doubling dimension than for regions of lower doubling dimensionality.
\end{remark}

\subsection{Construction of the DCT}
\label{subsection:DCT_construction}
We now discuss how the DCT is constructed and updated as the data streams in.
First, it is important to restate that we use the DCT to replace the Nystr\"om sampling, which was in \falkon used to reduce the complexity of the ridge regressor.
Consequently, not all of the streamed data (that is, not every training point) will be added to the tree, but only those whose inclusion into the tree would not violate the invariants \ref{inv:covering}-\ref{inv:damping}.
In other words, the tree consists of only those training points that help resolve the data space at the relevant resolution level.
Thus, each node $p$ in the DCT is associated with a unique training sample $\xbs_p$, but not every training sample will be represented by a node in the tree.
Note that this is different from the standard CT, which aims to organize all of the training data into a geometrical leveled data structure.

The construction of the DCT consists of a series of checks which examine whether adding a given data point to the DCT would, or would not, violate invariants \ref{inv:covering}- \ref{inv:damping}. 
When a new point $\xbs_q$ arrives from the data stream the goal is to identify  the deepest level $l$ for which there exists a node $p$ such that $\|\xbs_q-\xbs_p\|\leq \rl$. This corresponds to finding the nearest node in the tree that could serve as a parent. 

We achieve this in the following way. The first training point is identified as the root node to which we associate the radius $r_0$. 
For each new point, we proceed in a top-down manner, starting from the root node\footnote{We assume that all new points $\xbs_q$ are within a ball of radius $r_0$ around this node, which holds for a large enough $r_0$}. 
We then check whether $\xbs_q$ would violate the separation invariant at the next level. In other words, if there exists a node $p$ such that $\|\xbs_q-\xbs_p\|<\rl$. 
If such a node does not exist, then $\xbs_q$ is added to the set of children of the root node, and we update the covering fraction estimate for the root node.
Otherwise, if such a node does exist, we repeat the process, checking the separation invariant among the children of the corresponding node, and proceed further down the tree.

Assume we arrived to a node $p$ at level $l\ge 1$, and we have $\|\xbs_q-\xbs_p\|\le \rl$.
We then check if $p$ is allowed to have children, that is if the damping invariant is satisfied.
If it is not satisfied, the point $\xbs_q$ is dismissed (it is not added to the tree).
On the other hand, if $p$ is allowed to have children, we check whether the separation invariant holds, i.e., if there exists a child $c$ of the node $p$ such that $\|\xbs_q-\xbs_c\|<\rlp$.
If that were the case, the separation invariant would be violated, and the recursion is applied again by considering $c$ as the potential parent node.
However, if such a child does not exist, that is, if the separation invariant is not violated, then $\xbs_q$ is added to the set of children of the node $p$.
More details are given in Alg. \ref{alg:DampedCoverTree}. 

Some comments are needed to elucidate how are the steps described above applied in practice.
First, note that the covering fraction from Def. \ref{def:cover_fraction} cannot be calculated explicitly, since the volume terms require knowing the intrinsic dimensionality. 
Therefore, it is necessary to use an estimator instead. 
For this purpose, we interpret $\cofp$ as the probability that a sample $\xbs\sim\textrm{Uni}(\CB(\xbs_p, r))$ will be within $\CB_{c} \coloneqq \bigcup_{c_i\in \childp}\CB(\xbs_{c_i},r/2)$, where $\childp$ are the children of $p$. 
This probability can be estimated by considering the checks of the separation invariant \ref{inv:separation}, conducted on the last $N$ points that were inside $\CB(\xbs_p,r)$, as a series of independent random trials. 
We use the following running average as an estimator of the covering fraction
\begin{equation}
    (\cofp)_t = (1-\alpha)(\cofp)_{t-1} + \alpha \mathbbm{1}_{\CB_c}(\xbs_t),
    \label{eq:estimator_of_covering_fraction}
\end{equation}
where $\mathbbm{1}_{\CB_c}(\xbs_t)$ is the indicator function, and $\alpha>0$ is a weighting parameter. 
This approximates a weighted average of the outcome of the $N$ last draws (cf. Appendix \ref{appendixB}). 
Note that this reduces the memory requirements, since instead of storing $N$ trial outcomes for each node in the tree, as required had we used an average of the last $N$ trials, we store only a single value for each node in the tree.

Second, the separation invariant is in practice too strict since it results in too few points added to the tree, and thus a worse kernel estimator.
Moreover, checking the separation invariant adds to the computational complexity.
Therefore, we introduce the following relaxation. 
Assume we have a new point $\xbs_q$ and arrived at a node $p$ at level $l$.
We then first conduct a random Bernoulli trial, with the failure probability 
\begin{equation}
    q_\xbs = \frac{1}{1+\exp{\big[\-h\tan\big(\pi(\|\xbs_q- \xbs_p\|/\rl-\frac{1}{2})\big) \big]}},
\end{equation}
where $h$ is the hardness of the threshold. 
In other words, the probability of failure is proportional to the distance between $\xbs_q$ and $\xbs_p$ - the larger the distance, the more likely the failure.
If the trial's outcome is a failure, then the check for the separation invariant is ignored, and the algorithm continues.
If it is a success, we proceed by first checking the separation invariant.
This means that the probability to ignore the separation invariant increases as $\xbs_q$ gets farther from $\xbs_p$.

\subsection{Sub-sampling from the DCT}
\label{subsection:subsampling_from_DCT}
We now discuss how the DCT is used for sub-sampling the training points. 
By organizing the training points into cover-sets $Q_l$ the DCT allows a hierarchical sub-sampling.
Even though cover-sets $Q_l$ significantly reduce the number of training points, they are for practical purposes still too large for efficient sub-sampling. Due to this, we restrict ourselves to a subset $\widetilde\Gamma^{(l)}\subseteq Q_l$ of candidate sub-samples called landmarks.

\begin{definition}[Landmarks]
Let $Q_l$ be the cover-set at level $l$ in a DCT. 
We define the set of candidate landmarks at level $l$ as $\widetilde\Gamma^{(l)}=\{\xbs_p \mid p\in Q_l \text{ and }\cofp \geq \mathcal{D}_{\cof} \}$, and the set of landmarks (of size $m$) as any subset $\widetilde\Gamma^{(l)}_m = \{\widetilde{\xbs}^{(l)}_1,\ldots, \widetilde{\xbs}^{(l)}_m\}\subset \widetilde\Gamma^{(l)}$ of size $m$.
\label{def:landmarks}
\end{definition}

Some remarks are in order.
First, by Def. \ref{def:landmarks}, candidates for landmarks at level $l$ are only those nodes allowed to have children (according to the damping invariant \ref{inv:damping}).
This design choice implies that the set of candidate landmarks will contain more points from regions with a lower doubling dimension than points from regions with a higher doubling dimension. 
This is because the larger the doubling dimension is, the more children nodes are needed to cover a given parent node.

Second, Def. \ref{def:landmarks} suggests using only a subset of candidate landmarks as sub-samples. 
We refer to a result from \cite{rudi2017falkon} which states that good statistical accuracy of the estimator is achieved if the number of sub-samples is proportional to the square root of the number of samples.
At level $l$ we therefore use a set of landmarks which is of size  $m^{(l)} = \delta_0 \sqrt{\textSN{Q_l}}$, where $\delta_0>0$ is a constant.

The third point that requires attention concerns the question of when the landmarks should be selected.
To that end, we use the covering fraction of a level, which,  with a slight abuse of notation, we denote as $\coflvl$. Moreover, we compute $\coflvl$ as
\begin{equation}
    (\coflvl)_t = (1-\alpha)(\coflvl)_{t-1} + \alpha \mathbbm{1}_{\CB_\text{level}}(\xbs_t),
    \label{eq:estimator_of_covering_fraction_for_level}
\end{equation}
where $\mathcal{B}_{level} = \bigcup\limits_{p\in Q_l} \mathcal{B}(\xbs_{p}^{(l)}, \rl)$.
Moreover, analogously to the damping invariant, let $\mathcal{D}_{level}\in(0,1)$ be some threshold.
We then say that a level $l$ is sufficiently covered when $\coflvl \geq \mathcal{D}_{level}$.

\begin{remark} We note that as the level increases, our estimate of $\coflvl$ through Eq. \eqref{eq:estimator_of_covering_fraction_for_level} will be increasingly more sensitive to subsets $\CA\subset\CX$ of low doubling dimension than to subsets of large doubling dimension. 
This is because the damping invariant \ref{inv:damping} makes it harder for nodes in high dimensions to have children.
Consequently, we will have fewer points in deeper levels that belong to high dimensional regions. Because of this, the estimator in Eq. \eqref{eq:estimator_of_covering_fraction_for_level} is biased towards using more sub-samples from lower dimensional regions.
\end{remark}

Sub-sampling from a level $l$ goes as follows. 
As training points arrive, we build the tree and continuously update the covering fraction of a level. 
Once that level is sufficiently covered, that is, once $\coflvl \geq \mathcal{D}_{level}$, we extract the set of landmarks by sub-sampling $m^{(l)}$ points from the pool of candidate landmarks $\widetilde\Gamma^{(l)}$.

%###########################################################################
%%%%%%%%%%%%%%%%%%%%%%%%%%%%%                                              #
%%%%%%%%% StreaMRAK %%%%%%%%%                                              #
%%%%%%%%%%%%%%%%%%%%%%%%%%%%%                                              #
%###########################################################################
\section{StreaMRAK}
\label{section:TheStreaMRAKalgo}
In this section, we present \method and clarify how it synthesizes concepts from Sections \ref{section:kernel_methods} and \ref{sect:Adaptive sub-sampling}, and utilizes them in a streaming context. 
The workflow of \method can be divided into three threads that can run in parallel,  subject to some inter-dependencies. 
These are the sub-sampling thread, the training thread, and the prediction thread. Overviews of these threads are given next, and the reader is referred to Algorithm \ref{alg:PseudoCode_StreaMRAK} in the Appendix for further details.

\subsection{Sub-sampling thread}
In the sub-sampling thread \method collects and organizes the training data into a DCT. 
Namely, as new training pairs are collected, the covering \ref{inv:covering} and separation \ref{inv:separation} are checked, and the covering fraction is updated as described in Section \ref{subsection:DCT_construction}.
Moreover, the set of landmarks for each level is updated, as described in Section \ref{subsection:subsampling_from_DCT}. Once the set of landmarks for a given level $\widetilde\Gamma^{(l)}_m$ is completed, the landmarks and the estimator for the corresponding level can be used in the remaining two threads.

\subsection{Training thread} 
The model is trained at level $l$ when two conditions are met. First, coefficients of the previous level $l-1$ in the LP must have been calculated, i.e. previous training thread must finish. 
Second, landmarks $\widetilde\Gamma^{(l)}_{m^{(l)}}$ at level $l$ must be ready. 

In the first step, we define the kernel matrix on the landmarks by
\begin{equation}\label{eq:kernelmatrix_on_landmarks}
[\VK_{mm}^{(l)}]_{ij} = k^{(l)}(\widetilde{\xbs}_i,\widetilde{\xbs}_j),\text{ for }\widetilde{\xbs}_i \in \widetilde\Gamma^{(l)}.
\end{equation}
In the second step we consider $\big(\VK^{(l)}_{nm}\big)^\top\VK^{(l)}_{nm} \in \mathbb{R}^{m^{(l)}\times m^{(l)}}$ and $\big(\VK^{(l)}_{nm}\big)^\top\dbs^{(l)}_n \in \mathbb{R}^{m^{(l)}}$
which in addition to landmarks depend on the training points.
They are updated continuously as new training points come in, according to Eq. \eqref{eq:updateFormula_KnmTKnm} and Eq. \eqref{eq:updateFromualte_Zm}. 
However, they are not updated indefinitely, but only until new training points do not significantly alter the matrices according to the following criterion.

\begin{definition}(Sufficient training points) Let $\VA_n:=(\VK^{(l)}_{nm})^\top \VK^{(l)}_{nm}$, and $\bbs_n:=\big(\VK^{(l)}_{nm}\big)^\top\dbs^{(l)}_n$. Let $\delta_1,\delta_2,\delta_3>0$ be three constants. We consider the number of training points at a level $l$ sufficient when either $n\geq \delta_3$ or
\begin{equation*}
    \bigg\|\frac{\VA_{n}}{n} -\frac{ \VA_{n+1}}{n+1}\bigg\|_{\infty} \leq \delta_1 \text{ and } \quad     \bigg\|\frac{\bbs_{n}}{n} - \frac{\bbs_{n+1}}{n+1}\bigg\| \leq \delta_2.
\end{equation*}
\label{def:sufficien_training_points}
\end{definition}

After enough training samples are collected according to Def. \ref{def:sufficien_training_points}, the correction term $s^{(l)}$ is obtained by solving for the coefficients $\widetilde\alpha^{(l)}_{1},\dots,\widetilde\alpha^{(l)}_{m^{(l)}}$ using Eq. \eqref{eq:FALKONlinearSystem_in_LP}. 
The new prediction model $\widehat{f}^{(L)}$ is obtained by adding $s^{(l)}$ to the previous model, according to Eq. \eqref{eq:laplacian_pyramid_model}. 

\subsection{Prediction thread}
In this thread \method makes provides the latest version of the trained LP model in Eq. \eqref{eq:laplacian_pyramid_model}. This means that if $L$ is currently the highest level that has been trained, the prediction for new points $\xbs$ is made using the model $\widehat{f}^{(L)}(\xbs)$.

%###########################################################################
%%%%%%%%%%%%%%%%%%%%%%%%%%%%%                                              #
%%%%%%%%% Analysis %%%%%%%%%%                                              #
%%%%%%%%%%%%%%%%%%%%%%%%%%%%%                                              #
%###########################################################################
\section{Analysis}
\label{section:analysis}
In this section, we first analyze the damping invariant of the DCT. We then offer theoretical results on the convergence properties of the LP in the context of KRR. Finally, we offer estimates of the time and memory requirements of \method.

\subsection{Analysis of the DCT} 
\label{section:analysis_of_the_DCT}
As discussed in Section \ref{sect:Adaptive sub-sampling}, the DCT adds a given training point to the set of nodes of the tree if conditions \ref{inv:separation} and \ref{inv:damping} are satisfied, and the points are otherwise discarded.
In particular, the damping invariant \ref{inv:damping} makes it harder for a node to have children. 
The guiding idea is that damping should reduce the impact of the curse of dimensionality by making it harder for nodes in regions of higher doubling dimension to have children, and in doing so it should effectively stop the vertical growth of the tree in corresponding regions. 
Therefore, it is critical to understand how and to what degree the damping affects high dimensional regions more than low dimensional ones. 

In a statistical sense, the damping should treat all nodes in regions of the same doubling dimension equally. 
Therefore, to gain insight into the damping, it suffices to analyze its effects concerning the doubling dimension on a single node $p$. 
In this case, the effect of damping can be measured by analyzing how many training points must pass through $p$, in the sense of Alg. \ref{alg:DampedCoverTree}, before children of $p$ are allowed to have children of their own. 
This can be modeled by considering the expected number of training points $\xbs_i\sim{\rm Uni}\big( \CB(\xbs_p,r)\big)$ necessary to cover $\CB(\xbs_p, r)$  with balls of radius $r/2$ around points $\xbs_i$. 

Consider $\xbs_i\sim {\rm Uni}\big( \CB(\xbs_p,r)\big)$, and let a set $\CS_p$ be built in a succession of trials $i=1,\ldots,N_{t}$ so that 
\[\xbs_i\in\CS_p \text{ if } \|\xbs_i-\xbs\|\geq \frac{r}{2} \text{ for all } \xbs \in\CS_p.\]
In other words, a newly sampled point $\xbs_i$ will only be added to the set $\CS_p$ if it its pairwise distances from all the points that are already in $\CS_p$ are at least $r/2$.

\begin{problem}\label{problem:Problem_setup_CT_filling_node}
Let $\widetilde C_p$ denote the set of children of the node $p$, constructed from the above-described trials. 
What is the expected number of trials $N_{t}$ needed to ensure $\cofp=1$?
\end{problem}

Since there is no unique set $\CS_p$ such that the corresponding set of children $\widetilde C_p$ ensures $\cofp=1$, the sample space for Problem \ref{problem:Problem_setup_CT_filling_node} corresponds to all admissible sets $\CS_p$, which vary in both the number and the location of points they contain. 
Characterizing all such sets corresponds to a disordered sphere packing problem \cite{jeffrey2012statistical}, which is an NP-hard combinatorial problem \cite{Hifi2009}. 
For a theoretical analysis of this problem, defining a probability measure over the sample space is necessary. 
However, in this level of generality, neither the sample space nor the probability measure admit a workable definition, with currently available mathematical tools \cite{jeffrey2012statistical}. 
Although some theoretical insights are possible under simplifications on the sample space, this analysis is restrained to a limited number of spheres and configurations. 

Due to these difficulties, we consider a simplified setting where we instead consider an average case. 
If the set $\CS_p$ is such that $\CB(\xbs_i,r)\subset\bigcup_{\xbs_i\in\CS_p}\CB(\xbs_i,r/2)$, which corresponds to $\cof(p)=1$,
then each of the balls $\CB(\xbs_i,r/2)$ occupies on average $\frac{1}{|\CS_p|}$ of the total volume of $\CB(\xbs_p,r)$, assuming none of the balls are covered by a union of other balls. 
Therefore, as $\CS_p$ is being built, adding a point to $\CS_p$ will, on average, reduce the unoccupied volume of $\CB(\xbs_p,r)$ by $\frac{1}{|\CS_p|}$. 
Moreover, it can be shown that the number of elements in such a set satisfies $2^{\dd-1}\leq |\CS_p| \leq 5^\dd$, see Lemma \ref{lemma:Bound_on_c_D}, where $\dd\coloneqq \dd(\CS_p, r)$ is the doubling dimension of $\CS_p$.
Based on these considerations we introduce a simplified setting for the average case of Problem \ref{problem:Problem_setup_CT_filling_node}.

\begin{assumption}
Problem \ref{problem:Problem_setup_CT_filling_node} can be approximated by dividing the ball $\CB(\xbs_p, r)$ into a union of $c_d$ fixed (and known) disjoint bins $\CB_i$ of size $(1/c_d)\vol\big(\CB(\xbs_p, r)\big)$.
\label{assumption:simplified_problem_setting}
\end{assumption}

Note that the bins referred to in Assumption \ref{assumption:simplified_problem_setting} correspond to regions around the children of the node $p$. Assumption \ref{assumption:simplified_problem_setting} reduces the average case of Problem \ref{problem:Problem_setup_CT_filling_node} to a form of the classical coupons collector's problem \cite{flajolet1992birthday}, which considers $n$ coupons with the same probability of being drawn. 
Through a series of randomized trials with replacement, the goal is to obtain a copy of each coupon.
Relevant for Problem \ref{problem:Problem_setup_CT_filling_node} is estimating the stopping time $T$, which counts the number of trials before all coupons are collected, and which satisfies $\bbE[T] = n H_n$, where $n$ denotes the number of coupons and $H_n$ is the $n$-th harmonic number \cite{flajolet1992birthday}. 

In terms of Problem \ref{problem:Problem_setup_CT_filling_node}, and under Assumption \ref{assumption:simplified_problem_setting}, we can therefore identify $T=N_{t}$, $n=|\CS_p|$ and $\bbE[N_{t}|\text{Node}\, p] = |\CS_p| H_{|\CS_p|}$. Combining the bound $\ln(n) + \frac{1}{2} \leq H_n \leq \ln(n) + 1$ (from \cite{klambauer1979problems}), with the bound on $|\CS_p|$ from Lemma \ref{lemma:Bound_on_c_D} we have
\begin{equation}
  2^{\dd-1}((\dd-1)\ln 2+1/2) \leq \bbE[N_{t}|\text{Node}\, p] \leq 5^\dd(\dd\ln 5+1).
  \label{eq:bound_T_p}
\end{equation}
With the same strategy, we can bound the number of trials until the cover-fraction of a level reaches $1$, as
\begin{equation}
  2^{l(\dd-1)}({l(\dd-1)}\ln 2+1/2) \leq \bbE[N_{t}|\text{Level}\,l] \leq 5^{l\dd}(l\dd\ln 5+1).
  \label{eq:bound_T_l}
\end{equation}

From Eq. \eqref{eq:bound_T_p} we see that the number of training points $\bbE[N_{t}|\text{node}\, p]$ grows exponentially with the doubling dimensionallity $d$.
In other words, significantly more trials are needed to achieve $\cofp = \CD_{\cof}$ for nodes in regions with a large doubling dimension than it is for nodes in regions with a lower doubling dimension.
Consequently, through the damping invariant, the DCT restricts the vertical growth of the tree comparatively more the higher the doubling dimension of the local region.

\subsection{Time and memory requirements}
\label{subsect:memory_usage_analysis}
This section analyzes the memory requirements of \method, which involve storing the DCT and the linear system components used to update the coefficients. Furthermore, we consider the computational requirements, which consist in solving the coefficient equations.
Both the memory and computational requirements need to be analyzed per level $l$ of the tree due to the multi-resolution nature of the estimator and the tree organization of the data.

For the analysis, we consider a simplified setting where we assume that the doubling dimension is constant for all levels and all subsets of $\CX$, and that the number of children $c_d$ is the same for all nodes. At the end of the section we describe a more general setting.

In the following, we assume that the growth of the DCT stops at a level $L$.
In other words, level $L$ is the last level at which there are nodes. 
In practice, the growth of the DCT slows down exponentially fast with the product of the doubling dimension $\dd\coloneqq\dd(\CX, r_L)$ and the level $l$. 
This can be seen from Eq. \eqref{eq:bound_T_l}, which shows that the number of training points necessary to fill up a level grows exponentially with $l\dd$. 
Therefore, in practice, no new levels will be added to the DCT when $l\dd$ is large enough, which effectively makes the last level $L$ independent of the number of training points. 
Furthermore, from Lemma \ref{lemma:Bound_on_c_D} we know that $c_d$ is bounded by $2^{\dd-1} \leq c_d\leq 5^\dd$, which shows that also $c_d$ is independent of the number of training points. 

\begin{proposition}
The memory requirement of \method is
$\mathcal{O}\big(\sum_{l=0}^{L} c_d^{l}\big)$.
\label{prop:memory_streamrak_and_DCT}
\end{proposition}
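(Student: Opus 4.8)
The plan is to itemise everything \method holds in memory, bound each item at level $l$ by $\mathcal{O}(c_d^{l})$, and sum over the $L+1$ levels. Under the standing simplifications (constant doubling dimension $\dd$ and a fixed branching factor $c_d$ at every node) there are two contributions to account for: the DCT data structure itself, and, at each level $l$, the small collection of linear-algebra objects that drive the coefficient update \eqref{eq:FALKONlinearSystem_in_LP}.

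First I would bound the DCT. Since $|Q_0|=1$ and each node has at most $c_d$ children, the level-$l$ cover-set satisfies $|Q_l|\le c_d^{l}$, so the tree has $\mathcal{O}\big(\sum_{l=0}^{L} c_d^{l}\big)$ nodes. Each node $p$ stores its point $\xbs_p\in\bbR^D$, pointers to its children, and --- by design of the running-average estimator \eqref{eq:estimator_of_covering_fraction} --- only the single scalar $(\cofp)_t$ rather than a window of $N$ past trial outcomes; hence each node costs $\mathcal{O}(1)$ (treating the ambient dimension $D$ as a constant), and the whole tree costs $\mathcal{O}\big(\sum_{l=0}^{L}c_d^{l}\big)$.

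Next I would bound the per-level linear-system storage. By Def. \ref{def:landmarks} the number of landmarks at level $l$ is $m^{(l)}=\delta_0\sqrt{|Q_l|}\le \delta_0\,c_d^{l/2}$. The objects that must persist are $\VK_{mm}^{(l)}$, the preconditioner $\VB^{(l)}$, the Gram-type matrix $(\VK_{nm}^{(l)})^\top\VK_{nm}^{(l)}$, the vector $(\VK_{nm}^{(l)})^\top\dbs^{(l)}$, and the coefficients $\widetilde{\Valpha}^{(l)}$, all of dimension at most $m^{(l)}\times m^{(l)}$; together they occupy $\mathcal{O}\big((m^{(l)})^2\big)=\mathcal{O}(\delta_0^2 c_d^{l})=\mathcal{O}(c_d^{l})$. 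The key point --- and the reason the bound does not involve the number of training points $n$ --- is that the tall matrix $\VK_{nm}^{(l)}\in\bbR^{n\times m^{(l)}}$ is never materialised: the streaming identities \eqref{eq:updateFormula_KnmTKnm} and \eqref{eq:updateFromualte_Zm} fold each incoming training pair directly into the $m^{(l)}\times m^{(l)}$ matrix and the length-$m^{(l)}$ vector. The stored prediction model $\widehat f^{(L)}$ from \eqref{eq:laplacian_pyramid_model} adds only coefficients and landmarks, i.e. $\mathcal{O}\big(\sum_l m^{(l)}\big)=\mathcal{O}\big(\sum_l c_d^{l/2}\big)$, which is dominated by the preceding terms.

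Summing the two contributions over $l=0,\dots,L$ yields the claimed $\mathcal{O}\big(\sum_{l=0}^{L}c_d^{l}\big)$. The step I expect to need the most care is the inequality $|Q_l|\le c_d^{l}$, since this is precisely where the uniform-branching simplification enters; the honest justification is that $c_d$ is a genuine constant only because Lemma \ref{lemma:Bound_on_c_D} traps it between $2^{\dd-1}$ and $5^{\dd}$, while the growth bound \eqref{eq:bound_T_l} forces the last level $L$ to stabilise independently of $n$. In the general location- and resolution-dependent setting one carries the level-wise branching factors through the same accounting, replacing $c_d^{l}$ by a product of the per-level factors; I would record that variant as a remark rather than prove it in detail.
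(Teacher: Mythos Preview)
Your proposal is correct and follows essentially the same approach as the paper: split the memory into the DCT nodes and the per-level linear-system objects, use $|Q_l|\le c_d^{l}$ together with $m^{(l)}=\delta_0\sqrt{|Q_l|}$ to bound each contribution by $\mathcal{O}(c_d^{l})$, and sum over levels. Your write-up is in fact more thorough than the paper's own proof, which lists only $(\VK_{nm}^{(l)})^\top\VK_{nm}^{(l)}$, $\VK_{mm}^{(l)}$, and $\zbs$ without explicitly noting the per-node $\mathcal{O}(1)$ storage, the preconditioner, or the crucial fact that $\VK_{nm}^{(l)}$ is never materialised.
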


\begin{proof}
The memory requirement of the DCT is determined by the number of nodes in the tree. Given that the number of children is the same for all nodes.
If the number of children per node is $c_{d}$, then the total number of nodes at level $l$ is $c_d^l$.
Thus, the memory needed to store the DCT with $L$ levels is $\CO(\sum_{l=0}^{L} c_d^l)$.

To store the linear system on level $l$ we need the matrices $\big(\VK_{nm^{(l)}}\big)^\top\VK_{nm^{(l)}},\,\VK_{m^{(l)}m^{(l)}}\in\bbR^{m^{(l)}\times m^{(l)}}$ and the vector $\zbs\in\bbR^{m^{(l)}}$. 
The number of landmarks $m^{(l)}$ at level $l$ is chosen as $m^{(l)}=\delta_0\sqrt{|Q_l|}$, where $|Q_l|$ is the number of nodes at level $l$. Since $|Q_l|$ is $\CO(c_d^l)$, it follows that $m^{(l)}\times m^{(l)}$ is also $\CO(c_d^l)$ per level, and the desired conclusion follows.
\end{proof}

Note that with a fixed $L$ and $n$ larger than $\mathcal{O}\big(\sum_{l=0}^{L} c_d^{l}\big)$, then the memory requirement is independent of $n$.
We also note that if the deepest level satisfies $L\rightarrow\infty$, then the number of nodes is determined by the number of training points, and the memory requirement would thus, in the worst case, become $\mathcal{O}(n)$, the same as for the standard cover-tree.

Next, we discuss the construction of the DCT, where adding a new point to the set of nodes requires a search through the tree.

\begin{proposition}
\label{prop:Time_req_DCT_insertion}
Inserting a new point into the DCT, cf. Algorithm \ref{alg:DampedCoverTree}, requires $\CO(c_d L)$ operations.
\end{proposition}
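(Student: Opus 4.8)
The plan is to read the cost directly off Algorithm \ref{alg:DampedCoverTree}, which processes a new point $\xbs_q$ by a single top-down pass along one root-to-leaf path of the DCT. First I would establish that this pass touches at most $L+1$ nodes. It starts at the root (level $0$); at a node $p$ at level $l$ the algorithm either terminates --- by attaching $\xbs_q$ as a child of $p$, or by dismissing $\xbs_q$ because the damping invariant \ref{inv:damping} is not satisfied at $p$ --- or it recurses on a \emph{single} deeper node, namely the child of $p$ lying within $\rlp$ of $\xbs_q$ (the one that would violate the separation invariant \ref{inv:separation} at the next level). Since the DCT has levels $0,\dots,L$, the recursion unfolds at most $L$ times, so the number of visited nodes is $\CO(L)$.

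Second, I would bound the work performed at a single visited node $p$ at level $l$. Testing the damping invariant \ref{inv:damping} costs $\CO(1)$, since the current estimate $(\cofp)_t$ is stored at the node and the test is one comparison with $\CD_{\cof}$. Drawing the relaxation Bernoulli trial with failure probability $q_\xbs$ is $\CO(1)$. The separation check among the children of $p$ requires computing the distances $\|\xbs_q-\xbs_c\|$ for every $c\in\childp$ and comparing each with $\rlp$; in the simplified setting (and more generally by Lemma \ref{lemma:Bound_on_c_D}, which gives $2^{\dd-1}\le c_d\le 5^{\dd}$) the node $p$ has at most $c_d$ children, so this is $\CO(c_d)$ arithmetic operations, treating the ambient dimension $D$ as a constant. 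If the check triggers a descent we continue with the unique offending child; otherwise we append $\xbs_q$ to $\childp$ and perform the running-average updates of Eq. \eqref{eq:estimator_of_covering_fraction} and Eq. \eqref{eq:estimator_of_covering_fraction_for_level}, each an $\CO(1)$ update. Hence the per-node cost is $\CO(c_d)$, and multiplying by the $\CO(L)$ visited nodes gives $\CO(c_d L)$.

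I expect the main obstacle to lie in the first step rather than in the arithmetic bookkeeping: one must be explicit that insertion genuinely follows a single path and never needs to maintain, at each level, a set of candidate ancestors the way a nearest-neighbour query on a classical cover tree does. This is by construction --- Algorithm \ref{alg:DampedCoverTree} descends greedily into a child witnessing a separation violation at the next level rather than exploring all level-$l$ nodes --- so the correctness of the invariants \ref{inv:covering}--\ref{inv:damping} for the resulting tree is handled in Section \ref{subsection:DCT_construction}; for the complexity claim it suffices that exactly one child is followed per step. A minor point worth recording alongside the proof is that both $c_d$ and $L$ are independent of the number of training points $n$ (as noted before Proposition \ref{prop:memory_streamrak_and_DCT}), so the bound $\CO(c_d L)$ does not degrade as the stream grows.
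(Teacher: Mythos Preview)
Your proposal is correct and follows essentially the same approach as the paper: a single root-to-leaf descent of depth at most $L$, with $\CO(c_d)$ work per level coming from the separation check against the children of the current potential parent. Your write-up is in fact more detailed than the paper's (which omits the $\CO(1)$ bookkeeping for the damping check, the Bernoulli trial, and the covering-fraction updates, and does not explicitly contrast with the multi-candidate search of a classical cover tree), but the core argument is the same.
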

\begin{proof}
For a point $\xbs_q\in\CX$ to be analyzed at level $L$, we need to have analyzed it at the previous $l<L$ levels.
At each level, we must, in the worst case, check the separation invariant with all children of the current potential parent $p^{(l)}$, before finding a node $c$ such that $\|\xbs_q-\xbs_c\|\leq 2^{-l}r_0$, that would serve as the next potential parent. 
This requires at most $c_d$ operations per level, giving $L c_d$ total operations over the $L$ levels. 
The same number of operations is necessary if a node is discarded at level $L$.
\end{proof}

Lastly, we analyze the computational requirements for solving the linear system.
\begin{proposition}
\label{prop:Time_req_solver}
The time requirement for solving the linear system in Eq. \eqref{eq:Falkon_system} is $\mathcal{O}\big(\delta_3 m^{(l)}+\big(m^{(l)}\big)^3\big)$ per level, where $\delta_3$ is given in Def. \ref{def:sufficien_training_points},
\end{proposition}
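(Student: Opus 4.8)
The plan is to decompose the work of solving \eqref{eq:Falkon_system}, in the per-level form \eqref{eq:FALKONlinearSystem_in_LP}, into three stages and bound each in terms of $m^{(l)}$ and $\delta_3$: (i) assembling the data-dependent right-hand side $\zbs^{(l)}=(\VK^{(l)}_{nm})^\top\dbs^{(l)}$ from the stream; (ii) forming the preconditioner $\VB^{(l)}$ from the landmark kernel matrix $\VK^{(l)}_{mm}$; and (iii) running the preconditioned conjugate-gradient iteration and the back-substitution $\widetilde{\Valpha}^{(l)}=\VB^{(l)}\Vbeta^{(l)}$. Adding the three bounds then gives the claim.

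For stage (i), I would use that $\zbs^{(l)}$ and the $m^{(l)}\times m^{(l)}$ block $(\VK^{(l)}_{nm})^\top\VK^{(l)}_{nm}$ appearing in $\VH^{(l)}$ are not recomputed but maintained incrementally as training points stream in, via \eqref{eq:updateFormula_KnmTKnm} and \eqref{eq:updateFromualte_Zm}. By Def. \ref{def:sufficien_training_points}, at most $\delta_3$ training points are consumed before level $l$ is declared trained. Each such point contributes to $\zbs^{(l)}$ one residual evaluation, $m^{(l)}$ kernel evaluations and $m^{(l)}$ scalar multiply--adds, i.e.\ $\mathcal{O}(m^{(l)})$ work, so the right-hand side is ready at total cost $\mathcal{O}(\delta_3 m^{(l)})$; the normal-equations block is produced by the same pass and enters the solve as a dense $m^{(l)}\times m^{(l)}$ matrix. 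Forming $\VK^{(l)}_{mm}$ on the landmarks and then $\VH^{(l)}=(\VK^{(l)}_{nm})^\top\VK^{(l)}_{nm}+\lambda n\VK^{(l)}_{mm}$ costs $\mathcal{O}((m^{(l)})^2)$, which is absorbed by stage (iii) below.

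For stage (ii), the preconditioner satisfies $\VB^{(l)}(\VB^{(l)})^\top=\big((n/m^{(l)})(\VK^{(l)}_{mm})^2+\lambda n\VK^{(l)}_{mm}\big)^{-1}$; as in \falkon one obtains $\VB^{(l)}$ from the matrix product $(\VK^{(l)}_{mm})^2$ together with two Cholesky factorizations and triangular inversions, each of which is $\mathcal{O}((m^{(l)})^3)$, so stage (ii) is $\mathcal{O}((m^{(l)})^3)$. For stage (iii), I would invoke that the conjugate-gradient loop is run with early stopping, so its iteration count is a constant $t$ independent of $m^{(l)}$ and $n$; each iteration performs a fixed number of matrix--vector products against the $m^{(l)}\times m^{(l)}$ matrices $\VB^{(l)}$, $(\VB^{(l)})^\top$ and $\VH^{(l)}$, hence $\mathcal{O}((m^{(l)})^2)$ per iteration and $\mathcal{O}((m^{(l)})^2)$ in total, and the final product $\widetilde{\Valpha}^{(l)}=\VB^{(l)}\Vbeta^{(l)}$ is another $\mathcal{O}((m^{(l)})^2)$. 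Combining stages (i)--(iii) yields $\mathcal{O}(\delta_3 m^{(l)})+\mathcal{O}((m^{(l)})^3)+\mathcal{O}((m^{(l)})^2)=\mathcal{O}\big(\delta_3 m^{(l)}+(m^{(l)})^3\big)$.

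The delicate point, which I expect to be the main obstacle, is the bookkeeping of what is charged to ``solving the linear system'' as opposed to the streaming stage: I must argue that when the solve begins the $m^{(l)}\times m^{(l)}$ block inside $\VH^{(l)}$ is already available at no cost beyond the $\mathcal{O}(\delta_3 m^{(l)})$ attributed above --- this relies on folding its incremental rank-one updates (and the monitoring of the stopping criterion in Def. \ref{def:sufficien_training_points}) into the streaming pass, and on that criterion capping the number of updates by $\delta_3$ --- and that the conjugate-gradient iteration count is genuinely $\mathcal{O}(1)$ by early stopping on the residual (cf.\ the \falkon analysis), so that the iterative phase contributes only $\mathcal{O}((m^{(l)})^2)$ and the cubic term comes solely from forming the preconditioner.
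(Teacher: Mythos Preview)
Your decomposition into stages is correct and reaches the stated bound, but it is considerably more elaborate than what the paper does. The paper's proof is essentially a one-liner: it cites the known \falkon time complexity $\CO(nmt+m^3)$ (with $n$ the number of training points, $m$ the number of landmarks, $t$ the number of conjugate-gradient iterations, which is bounded), observes from Def.~\ref{def:sufficien_training_points} that at most $\delta_3$ training samples are used at each level, and substitutes $n\le\delta_3$, $m=m^{(l)}$.

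So where you re-derive the internal cost structure of \falkon stage by stage (right-hand side assembly, preconditioner factorization, CG loop), the paper simply treats the \falkon complexity as a black-box input and plugs in the level-specific parameters. Your approach has the advantage of showing explicitly where each term originates and of being tailored to the streaming implementation (precomputed $m^{(l)}\times m^{(l)}$ blocks rather than on-the-fly products with $\VK_{nm}$); the paper's approach has the advantage of brevity and of leaning on an already-established result. Your ``delicate point'' paragraph about what is charged to the solver versus the stream is a real subtlety that the paper sidesteps by citing the \falkon bound wholesale; in particular, the incremental rank-one updates to $(\VK^{(l)}_{nm})^\top\VK^{(l)}_{nm}$, which you gloss as ``produced by the same pass,'' would cost $\CO(\delta_3(m^{(l)})^2)$ if counted explicitly, and this is not obviously dominated by $\delta_3 m^{(l)}+(m^{(l)})^3$ in all regimes. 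The paper avoids this by inheriting the $nmt$ term directly from \falkon rather than accounting for the precomputation separately.
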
 
\begin{proof}
The time requirement of \falkon is $\CO(nmt+m^3)$ where $n$ is the number of training points, $m$ the number of landmarks and $t$ the number of iterations of the conjugate gradient (which has an upper bound). 
By Def. \ref{def:sufficien_training_points}, \method uses at most $\delta_3$ training samples at each level. 
Since $m^{(l)}$ is the number of landmarks at level $l$, the result follows.
\end{proof}

Assume that the domain $\CX$ can be divided into disjoint subsets $\CA_1,\dots,\CA_t\subset\CX$ for which the doubling dimension $\dd(\CA_i, r_{l})$ differs based on $\CA_i$ and radius $r_{l}$. Let the number of children of a node $\xbs_p\in \CA_i$ at level $l$ be $c_{d,i,l}$. In this scenario, the growth of the DCT will stop at different levels $L_i$ for different subsets $\CA_i$. The final time and memory requirements would therefore be the sum of the contribution from each subset $\CA_i$. In other words, the memory would be $\mathcal{O}(\sum_{i=1}^t\sum_{l=0}^{L_i}c_{d,i,l}^{l})$, and similarly the time requirement per point insertion would be $\mathcal{O}(\sum_{i=1}^t\sum_{l=0}^{L_i} c_{d,i,l})$. We note that $c_{d,i,l}$ and $L_i$ depend on the dimensionality of the data, but are independent of $n$. Therefore, so are the time and memory requirements.

\subsection{Convergence of the LP formulation of the KRR}
This section analyzes the conditions for which the LP approximates the training data  $y_i=f(\xbs_i)$, with respect to the number of levels.
A similar analysis was previously done for the LP in the context of kernel smoothers \cite{Leeb2019}. However, to the best of our knowledge, this is the first time the LP formulation of KRR has been analyzed in this way.

Consider the LP estimator $\widehat{f}^{(l)}$ as defined in Eq. \eqref{eq:laplacian_pyramid_model}, but without sub-sampling.
From the recurrence relationship for the residuals $\dbs^{(l)}$ in Eq. \eqref{eq:residual_at_level} by induction it follows 
\begin{equation}
    \widehat{f}^{(l+1)}([\xbs_n])-f([\xbs_n])=(\VI-\VP_{nn}^{(l)})( \widehat{f}^{(l)}([\xbs_n])-f([\xbs_n]),
\end{equation}
where $\VP_{nn}^{(l)}\coloneqq\VK^{(l)}_{nn}(\VK^{(l)}_{nn}+\lambda n \VI)^{-1}$, cf. Lemma \ref{lemma:residual_expression}. 

\begin{theorem}
Let $\widehat{f}^{(l)}$ be the LP estimator defined in Eq. \eqref{eq:laplacian_pyramid_model} and let $\lambda$ be a regularization parameter. Furthermore, let $0<\sigma_{l,n} \leq \dots \leq \sigma_{l,1}$ be the eigenvalues of $\VK^{(l)}_{nn}$. For $L>0$ we then have
\begin{equation*}
    \|\widehat{f}^{(L+1)}([\xbs_n])-f([\xbs_n])\| \leq \prod_{l=0}^L(1-\varepsilon(l)) \|\widehat{f}^{(0)}([\xbs_n])-f([\xbs_n])\|, \quad \text{where} \quad \varepsilon(l)=\frac{\sigma_{l,n}}{n\lambda+\sigma_{l,n}}.
\end{equation*}

\label{thm:LP_KRR_convergence}
\end{theorem}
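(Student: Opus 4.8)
The plan is to unroll the one-step residual recursion into a product of linear maps and then control each factor in the spectral norm. By Lemma~\ref{lemma:residual_expression} (with the noiseless labels $\ybs=f([\xbs_n])$ assumed in this subsection, so that no stochastic term enters), iterating
\[
\widehat{f}^{(l+1)}([\xbs_n])-f([\xbs_n])=(\VI-\VP_{nn}^{(l)})\bigl(\widehat{f}^{(l)}([\xbs_n])-f([\xbs_n])\bigr)
\]
from $l=0$ to $l=L$ gives
\[
\widehat{f}^{(L+1)}([\xbs_n])-f([\xbs_n]) = \Bigl(\prod_{l=0}^{L}(\VI-\VP_{nn}^{(l)})\Bigr)\bigl(\widehat{f}^{(0)}([\xbs_n])-f([\xbs_n])\bigr).
\]
Taking $L^2$ norms and using submultiplicativity of the induced (spectral) matrix norm, it then suffices to prove $\|\VI-\VP_{nn}^{(l)}\|=1-\varepsilon(l)$ for each $l$.

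The second step is an algebraic simplification of the factor. Since $\VP_{nn}^{(l)}=\VK^{(l)}_{nn}(\VK^{(l)}_{nn}+\lambda n\VI)^{-1}$, writing $\VI=(\VK^{(l)}_{nn}+\lambda n\VI)(\VK^{(l)}_{nn}+\lambda n\VI)^{-1}$ yields
\[
\VI-\VP_{nn}^{(l)} = \lambda n\,(\VK^{(l)}_{nn}+\lambda n\VI)^{-1}.
\]
Because $\VK^{(l)}_{nn}$ is symmetric with eigenvalues $0<\sigma_{l,n}\le\dots\le\sigma_{l,1}$, this matrix is symmetric positive definite, and diagonalising $\VK^{(l)}_{nn}$ shows its eigenvalues are $\lambda n/(\sigma_{l,j}+\lambda n)$ for $j\in[n]$. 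Since $t\mapsto \lambda n/(t+\lambda n)$ is decreasing, the largest eigenvalue — which equals the spectral norm — is attained at the smallest $\sigma_{l,j}$, namely $\sigma_{l,n}$, giving $\|\VI-\VP_{nn}^{(l)}\| = \lambda n/(\sigma_{l,n}+\lambda n) = 1-\varepsilon(l)$. Substituting into the product bound finishes the argument.

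I do not expect a genuine obstacle here: the proof is essentially a telescoping of linear maps together with one eigenvalue computation. The only point to watch is the direction of monotonicity — it is the \emph{smallest} eigenvalue $\sigma_{l,n}$ of $\VK^{(l)}_{nn}$ that governs the spectral norm of $\VI-\VP_{nn}^{(l)}$ (the map contracts least along the eigenvector of smallest kernel eigenvalue), which is precisely why $\varepsilon(l)$ is defined through $\sigma_{l,n}$. It is also worth remarking that each factor satisfies $0<1-\varepsilon(l)<1$, so the bound genuinely contracts, and that if the labels carried noise one would additionally have to track a $\VP_{nn}^{(l)}$-weighted noise term picked up at each level.
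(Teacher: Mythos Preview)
Your proof is correct and follows essentially the same approach as the paper: both invoke Lemma~\ref{lemma:residual_expression} to obtain the one-step residual recursion, bound each factor $\|\VI-\VP_{nn}^{(l)}\|$ by diagonalising $\VK^{(l)}_{nn}$, and then iterate. The only cosmetic difference is that you first unroll the recursion into a matrix product and then apply submultiplicativity, whereas the paper takes norms step by step and recurses; your intermediate identity $\VI-\VP_{nn}^{(l)}=\lambda n(\VK^{(l)}_{nn}+\lambda n\VI)^{-1}$ is exactly what the paper's SVD computation yields.
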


From Thm. \ref{thm:LP_KRR_convergence} it follows that the LP estimator will converge as $l\rightarrow \infty$, since $\sigma_{l,n}>0$ and therefore $1-\varepsilon(l) \in (0, 1)$ for all $l$. In Thm. \ref{thm:LP_KRR_convergence_rate} we characterise how $\varepsilon(l)$ depends on the level $l$ to give insight on the nature of this convergence.

\begin{theorem}
The LP estimator $\widehat{f}^{(l)}$ from Eq. \eqref{eq:laplacian_pyramid_model} converges with increasing level $L$ to the training data $f(\xbs_i)$, cf. Thm. \ref{thm:LP_KRR_convergence}, with the rate $\prod_{l=0}^L(1-\varepsilon(l))$,
where 
\begin{equation}
    1-\varepsilon(l) \leq \big(1+C_{1,D} 2^{-Dl}\exp\big(-C_{2,D} 4^{-l}\big)/n\lambda\big)^{-1},
    \label{eq:first_bound_LP}
\end{equation}
for
\begin{equation*}
    C_{1,D} = \frac{1}{2}(6\sqrt{2})^D\Gamma(D/2+1)^{\frac{D-1}{D+1}}\bigg(\frac{\pi}{9}\bigg)^{\frac{D}{D+1}}\bigg(\frac{r_0}{\delta}\bigg)^D \quad \text{and} \quad C_{2,D} = 1152\bigg(\frac{\pi\Gamma^2(D/2+1)}{9}\bigg)^{\frac{2}{D+1}}\bigg(\frac{r_0}{\delta}\bigg)^2,
\end{equation*}
where $\Gamma$ is the gamma function.

Furthermore, for $l > \log_2(\sqrt{D/2}(r_0/\delta))$ we have the tighter bound
\begin{equation}
    1-\varepsilon(l) < \bigg(1+\big(1-2^{1+\frac{1}{\ln{2}}(C_3 D - g(l))}\big)/n\lambda\bigg)^{-1},
    \label{eq:sec_bound_LP}
\end{equation}
where $g(l)=4^{l-\log_2{r_0/\delta}}$ and $C_3=(\ln{(1+1/4)}+2\ln{2})$.
\label{thm:LP_KRR_convergence_rate}
\end{theorem}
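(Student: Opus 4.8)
Recalling from Theorem~\ref{thm:LP_KRR_convergence} that $\varepsilon(l)=\sigma_{l,n}/(n\lambda+\sigma_{l,n})$, we have $1-\varepsilon(l)=\bigl(1+\sigma_{l,n}/(n\lambda)\bigr)^{-1}$, which is strictly decreasing in $\sigma_{l,n}$. Hence the first displayed inequality is equivalent to the eigenvalue bound $\sigma_{l,n}\ge C_{1,D}\,2^{-Dl}\exp(-C_{2,D}4^{-l})$ and the second to $\sigma_{l,n}>1-2^{\,1+\frac{1}{\ln 2}(C_3 D-g(l))}$. Writing $2^{-Dl}=(r_l/r_0)^D$, $4^{-l}=(r_l/r_0)^2$, and $g(l)=4^{\,l-\log_2(r_0/\delta)}=\delta^2/r_l^2$, where $\delta$ denotes the minimal separation of the training set $\{\xbs_1,\dots,\xbs_n\}$ (the quantity entering $C_{1,D},C_{2,D},g(l)$), both targets are lower bounds on the smallest eigenvalue of a Gaussian RBF Gram matrix at bandwidth $r_l=2^{-l}r_0$ in terms of $r_l$, $D$ and $\delta$. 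The plan is to prove one such bound that holds at every level $l$ (yielding \eqref{eq:first_bound_LP}) and a sharper one that becomes available once $r_l$ is small relative to $\delta$ (yielding \eqref{eq:sec_bound_LP}).

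\textbf{Step 2 (the bound \eqref{eq:first_bound_LP}).} I would invoke the classical Narcowich--Ward/Wendland lower bound for the least eigenvalue of a Gaussian interpolation matrix. Via Bochner's theorem, $\Valpha^\top\VK^{(l)}_{nn}\Valpha=(2\pi)^{-D}\int_{\bbR^D}\bigl|\sum_i\alpha_i e^{\mathrm i\langle\omega,\xbs_i\rangle}\bigr|^2\,\widehat{k^{(l)}}(\omega)\,d\omega$ with $\widehat{k^{(l)}}(\omega)\propto r_l^{D}\exp(-r_l^2\|\omega\|^2/2)$ radially decreasing; restricting the integral to $\|\omega\|\le M$, bounding $\widehat{k^{(l)}}$ below by its value at $M$, and applying the Narcowich--Ward lower frame inequality $\int_{\|\omega\|\le M}\bigl|\sum_i\alpha_i e^{\mathrm i\langle\omega,\xbs_i\rangle}\bigr|^2 d\omega\ge c_D M^D\|\Valpha\|^2$, valid when $M$ is of order $1/\delta$ below an explicit threshold, gives $\sigma_{l,n}\gtrsim r_l^D M^D\exp(-r_l^2M^2/2)$. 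Substituting $M\propto 1/\delta$ and $r_l=2^{-l}r_0$ and collecting constants (the $\Gamma(D/2+1)$ and $(\pi/9)$ powers come from the Bessel-function estimates behind the frame inequality, the factors $6$ and $\sqrt{2}$ from the optimal choice of $M$ and from the $2r_l^2$ in the Gaussian) reproduces \eqref{eq:first_bound_LP} with the stated $C_{1,D}$ and $C_{2,D}$.

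\textbf{Step 3 (the bound \eqref{eq:sec_bound_LP}).} For small bandwidth the previous estimate is weak, so instead I would use Gershgorin / the Schur test: since $[\VK^{(l)}_{nn}]_{ii}=1$, $\sigma_{l,n}\ge 1-\max_i\sum_{j\neq i}e^{-\|\xbs_i-\xbs_j\|^2/(2r_l^2)}$. Fixing $\xbs_i$, partition the remaining points into the dyadic annuli $\CB(\xbs_i,2^{k+1}\delta)\setminus\CB(\xbs_i,2^{k}\delta)$, $k\ge 0$. Since the sample is $\delta$-separated, a packing argument bounds the number of training points in $\CB(\xbs_i,2^{k+1}\delta)$ by $5^{D}\,2^{Dk}$ (the factor $5^{D}$ from Lemma~\ref{lemma:Bound_on_c_D} together with $\dd\le D$, and $2^{Dk}$ from $k$ radius-doublings). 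Points in the $k$-th annulus lie at distance $\ge 2^k\delta$, so it contributes at most $5^{D}2^{Dk}\exp(-4^{k}g(l))$ (up to an absolute constant in the exponent). Under the hypothesis $l>\log_2(\sqrt{D/2}\,(r_0/\delta))$, i.e. $g(l)>D/2$, the ratio of consecutive terms is $<1$, so the series is dominated by its $k=0$ term up to a factor $2$; hence the off-diagonal row sum is at most $2\cdot 5^{D}e^{-g(l)}=2^{\,1+\frac{1}{\ln 2}(C_3 D-g(l))}$, using $C_3=\ln(1+1/4)+2\ln 2=\ln 5$ and $2^{1/\ln 2}=e$. This yields \eqref{eq:sec_bound_LP}.

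\textbf{Main obstacle.} The delicate point is the bookkeeping in Step~3: extracting the packing constant \emph{exactly} $5^{D}$ (rather than a cruder $c^{D}$) from Lemma~\ref{lemma:Bound_on_c_D}, and verifying that under the precise threshold $g(l)>D/2$ the geometric tail sums to at most twice its leading term, so that the prefactor is exactly $2$ and the exponent is exactly $g(l)$; this forces a careful choice of annulus scaling. By comparison, tracking the dimension-dependent constants through the rescaling in Step~2 to land on the stated closed forms for $C_{1,D}$ and $C_{2,D}$ is routine, if lengthy.
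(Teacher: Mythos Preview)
Your Steps 1 and 2 follow the paper's argument. For \eqref{eq:first_bound_LP} the paper simply quotes Wendland's lower bound \cite[Cor.~12.4]{Wendland2004Scattered}, $\sigma_{l,n}\ge C_D 2^D(2\beta)^{-D/2}\delta^{-D}\exp(-4M_D^2/(\delta^2\beta))$ with the stated $M_D,C_D$, and substitutes $\beta=(\sqrt{2}\,r_l)^{-2}$; your Bochner/frame sketch is exactly the mechanism behind that corollary.

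Your Step 3 diverges from the paper in a way that matters for the constants. After Gershgorin and the annular decomposition, the paper does \emph{not} sum the super-exponential series $\sum_k 5^D 2^{Dk}\exp(-c\,4^k g(l))$ directly. Instead it invokes the polynomial majorant $\exp(-r^2)\le C_T\, r^{-T}$ with $C_T=(T/2e)^{T/2}$, which turns the annular sum into a genuine geometric series in $t$ with ratio $2^{D-T}$ (summing to at most $2$ once $T>D$), and then \emph{optimises over the free parameter $T$}: the minimiser is $T^*=2\cdot 4^{l-\log_2(r_0/\delta)}$, and evaluation at $T^*$ is what produces the exponent $-g(l)$ and the threshold $l>\log_2(\sqrt{D/2}\,r_0/\delta)$ (namely $T^*>D$). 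Your direct summation cannot land on the stated exponent: the kernel value at distance $2^k\delta$ is $\exp(-4^k\delta^2/(2r_l^2))=\exp(-4^k g(l)/2)$, so already the $k=0$ term is $5^D e^{-g(l)/2}$, and bounding the tail by the leading term still leaves exponent $g(l)/2$, not $g(l)$. Your parenthetical ``up to an absolute constant in the exponent'' is doing real work here --- that constant is $1/2$ and it persists to the final line. A smaller point: the packing bound the paper actually uses is the direct volume count of Lemma~\ref{lemma:upperBoundOnNumberOfPointsInBall}, $(2^{k+2}+1)^D\le (5\cdot 2^k)^D$, rather than Lemma~\ref{lemma:Bound_on_c_D} via the doubling dimension. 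The obstacle you flag (extracting $5^D$ exactly) is routine in the paper; the step that actually fixes the specific form of \eqref{eq:sec_bound_LP} is the $e^{-r^2}\le C_T r^{-T}$ device and the subsequent optimisation over $T$, which your outline omits.
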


We note that the bound in Eq. \eqref{eq:first_bound_LP} underestimates the rate of convergence for lower levels but improves as the levels increase. 
Furthermore, Thm. \ref{thm:LP_KRR_convergence_rate} shows that the convergence rate increases with the level $l$. 
In fact, the bound in Eq. \eqref{eq:first_bound_LP} can be simplified with an \textit{a fortiori} bound of the same form, where $C_{1,D}=\frac{1}{2}\big(\frac{12.76}{2^{3/2}}\big)^D\big(\frac{D^D}{\Gamma(D/2+1)}\big)\big(\frac{r_0}{\delta}\big)^D$ and $C_{2,D}=(12.76\sqrt{2}D)^2(r_0/\delta)^2$, which ensures that $1-\varepsilon(l)$ decreases monotonically for $l<\log_2(\sqrt{D/2}(r_0/\delta))+\log_2(25.52\sqrt{2})$. see Remark \ref{remark:fortiori_bound} and Corollary \ref{corollary:Monotonicaly_increasing}.

On the other hand, when $l > \log_2(\sqrt{D/2}(r_0/\delta))$ the tighter bound from Eq. \eqref{eq:sec_bound_LP} ensures that $1-\varepsilon(l)$ continues to decreases monotonically. 
Moreover, as $l\rightarrow\infty$ each new level reduces the residual error by $(1+1/n\lambda)^{-1}$. 
We can also observe that the convergence rate is reduced by the number of training points $n$, but this effect can be mitigated by reducing the regularization parameter $\lambda$. 
We also note that Thm. \ref{thm:LP_KRR_convergence} and Thm. \ref{thm:LP_KRR_convergence_rate} are derived for a vector of numbers on the training data $\Gamma_n\subset \CX$, without assumptions on the target function. In other words, the LP estimator can approximate the training data for any function $f:\Gamma_n\rightarrow\bbR$, to arbitrary precision, by including sufficiently many levels.

\begin{corollary} If the residual $\dbs^{(l)}=(\widehat{f}^{(l)}([\xbs_n])-f([\xbs_n]))$ at level $l$ only projects non-trivially onto the eigenvectors with eigenvalue $\sigma_{l,n} \geq \sigma_{\text{cut-off}}$, then we say the residual is spectrally band-limited with respect to the kernel. If the residual $\dbs^{(l)}$ is spectrally band-limited, then $1-\epsilon(l)< n\lambda/(n\lambda + \sigma_{\text{cut-off}})$.
\label{cor:spectrally_bandlim_res}
\end{corollary}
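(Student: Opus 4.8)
The plan is to revisit the one-step error bound that underlies Theorem \ref{thm:LP_KRR_convergence} and sharpen it under the band-limited hypothesis. The only input I would need is the error recurrence from Lemma \ref{lemma:residual_expression},
\[
\widehat f^{(l+1)}([\xbs_n]) - f([\xbs_n]) = (\VI - \VP_{nn}^{(l)})\,\dbs^{(l)},\qquad \VP_{nn}^{(l)} = \VK_{nn}^{(l)}(\VK_{nn}^{(l)} + n\lambda\VI)^{-1},
\]
with $\dbs^{(l)} = \widehat f^{(l)}([\xbs_n]) - f([\xbs_n])$. First I would rewrite the step operator in resolvent form, $\VI - \VP_{nn}^{(l)} = n\lambda\,(\VK_{nn}^{(l)} + n\lambda\VI)^{-1}$, which follows at once from $(\VK_{nn}^{(l)} + n\lambda\VI)(\VK_{nn}^{(l)} + n\lambda\VI)^{-1} = \VI$.

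Next I would diagonalize: since $\VK_{nn}^{(l)}$ is symmetric positive definite it has an orthonormal eigenbasis $u_1,\dots,u_n$ with eigenvalues $\sigma_{l,1}\ge\dots\ge\sigma_{l,n}>0$, and $\VI - \VP_{nn}^{(l)}$ has the same eigenvectors with eigenvalues $n\lambda/(\sigma_{l,i}+n\lambda)$, each strictly decreasing in $\sigma_{l,i}$. Expanding $\dbs^{(l)} = \sum_i c_i u_i$, the band-limited hypothesis says $c_i = 0$ whenever $\sigma_{l,i} < \sigma_{\text{cut-off}}$, so only the eigenvalues $n\lambda/(\sigma_{l,i}+n\lambda)$ with $\sigma_{l,i} \ge \sigma_{\text{cut-off}}$ act on $\dbs^{(l)}$, and each of these is at most $n\lambda/(n\lambda + \sigma_{\text{cut-off}})$. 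Summing squared components then yields $\|\dbs^{(l+1)}\| \le \big(n\lambda/(n\lambda + \sigma_{\text{cut-off}})\big)\,\|\dbs^{(l)}\|$, with strict inequality as soon as some eigenvalue carrying mass of $\dbs^{(l)}$ lies strictly above $\sigma_{\text{cut-off}}$. Since $1-\varepsilon(l)$ in Theorem \ref{thm:LP_KRR_convergence} is exactly the one-step reduction ratio $\|\dbs^{(l+1)}\|/\|\dbs^{(l)}\|$ in the worst case — i.e.\ the operator norm $n\lambda/(n\lambda+\sigma_{l,n})$ of $\VI-\VP_{nn}^{(l)}$, attained at the smallest eigenvalue — this is precisely the claimed bound $1-\varepsilon(l) < n\lambda/(n\lambda+\sigma_{\text{cut-off}})$.

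I do not expect a genuine obstacle here; the content is essentially the spectral bookkeeping above, together with the observation that the band-limited subspace, being a direct sum of eigenspaces of $\VK_{nn}^{(l)}$, is invariant under $\VI-\VP_{nn}^{(l)}$, so the restricted operator norm is the relevant quantity. The one point that needs care is conceptual rather than technical: one must make explicit that in Theorem \ref{thm:LP_KRR_convergence} the factor $1-\varepsilon(l) = n\lambda/(n\lambda+\sigma_{l,n})$ is the full operator norm of $\VI-\VP_{nn}^{(l)}$, so that confining the residual to the high-eigenvalue directions of $\VK_{nn}^{(l)}$ legitimately replaces $\sigma_{l,n}$ by the cut-off $\sigma_{\text{cut-off}}$ in that factor. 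I would also note that $\sigma_{\text{cut-off}}$ need not itself be an eigenvalue — only the inequality $\sigma_{l,i}\ge\sigma_{\text{cut-off}}$ on the supporting indices is used.
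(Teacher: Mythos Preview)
Your proposal is correct and takes essentially the same approach as the paper: both arguments diagonalize $\VI-\VP_{nn}^{(l)}$ in the eigenbasis of $\VK_{nn}^{(l)}$, observe that the band-limited residual lives in the span of eigenvectors with eigenvalue at least $\sigma_{\text{cut-off}}$, and conclude that the relevant operator norm is $n\lambda/(n\lambda+\sigma_{\text{cut-off}})$ rather than $n\lambda/(n\lambda+\sigma_{l,n})$. The paper's proof is a one-line pointer to the splitting $\VP_{nn}^{(l)}=\VP_{nn,k}^{(l)}+(\VP_{nn,k}^{(l)})^\perp$ with $(\VP_{nn,k}^{(l)})^\perp\dbs^{(l)}=0$; your version spells out the same spectral bookkeeping explicitly, and you correctly flag the conceptual point that $1-\varepsilon(l)$ must here be read as the effective one-step contraction factor on the band-limited subspace rather than its original definition as the full operator norm.
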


%###########################################################################
%%%%%%%%%%%%%%%%%%%%%%%%%%%%%                                              #
%%%%%%% Experiments %%%%%%%%%                                              #
%%%%%%%%%%%%%%%%%%%%%%%%%%%%%                                              #
%###########################################################################
\section{Experiments}
\label{section:ExperimentsMain}
This section presents comparative numerical experiments of the proposed estimator on three problems.
In Section \ref{section:varsinus_experiment} we consider a one-dimensional regression problem, and in Section \ref{section:dumbell_experiment} we consider a dumbbell-shaped domain that consists of two 5-dimensional spheres connected by a 2-dimensional plane.
Lastly, in Section \ref{section:doublePend}, we forecast the trajectory of a double pendulum, which is a well-known chaotic system \cite{shinbrot1992chaos}. 

We compare \method with \falkon \cite{rudi2017falkon} and an LP modification of KRR (\lapkrrE). Both \falkon and \lapkrr rely on the standard Nystr\"{o}m sub-sampling \cite{williams2001using, smola2000sparse}. Furthermore, \falkon does not rely on a multi-resolution scheme but uses instead a single bandwidth, found by cross-validation.

Throughout the experiments, we set the threshold for the number of sub-samples (landmarks) in \method to be $10\sqrt{\textSN{Q_l}}$, where $Q_l$ is the set of nodes at level $l$ in the DCT. We note that to choose the sub-sample size, \falkon and \lapkrr require $n$ to be known beforehand. For \falkon we let the number of Nystr\"{o}m landmarks be $10\sqrt{n}$, where $n$ is the number of training samples. Meanwhile, for \lapkrr{} we sub-sample $\sqrt{n}$ Nystr\"{o}m landmarks, which are then used for all levels. 

We also need to pre-select the number of training points for \lapkrr and \falkonE. For \falkon{} we use the entire training set, as in \cite{rudi2017falkon}. 
%For \lapkrr it is also common 
Similarly, it is also common for the LP to use the entire training set at each level \cite{Rabin2012, Leeb2019}.
However, for large data sets, it might be better to include fewer data points. Therefore, we also use a version of the \lapkrr where we divide the total training data equally between the levels.

Throughout the experiments, we measure the performance of \methodE, \falkonE, and \lapkrr by estimating the mean square error
\begin{equation}
    MSE(y, y_\textit{pred}) = \frac{1}{\Upsilon\Lambda}\sum_{k=1}^\Upsilon \frac{1}{n_k}||\ybs_{k}-\ybs_{k}^\textit{pred}||^2, \text{ with } 
    \Lambda= \max_{\substack{k\in[\Upsilon]\\ i\in[n_k]}}[\ybs_k]_i - \min_{\substack{k\in[\Upsilon]\\ i\in[n_k]}}[\ybs_k]_i,
    \label{eq:MSEequationExperiments}
\end{equation}
where $\Upsilon$ is the number of test runs we average over, $n_k$ is the number of test points at test run $k$, and $\ybs_{k}, \ybs_{k}^\textit{pred}\in \bbR^{n_t}$ are the target values and predictions respectively, and $\Lambda$ is the normalisation factor.

%%%%%%%%%%%%%%%%%%%%%%%%%%%%%
%%% Multi-res benchmark %%%%%
%%%%%%%%%%%%%%%%%%%%%%%%%%%%%
\subsection{Multi-resolution benchmark}
\label{section:varsinus_experiment}
We consider the function,
\begin{equation}\label{eq:varsinus_target}
    f(x) = \sin{\bigg(\frac{1}{x+0.01}\bigg)}
    , \text{ for } x\in\left[0,\frac{\pi}{4}\right].
\end{equation}
In the experiment  we use a training set of $n=2.2\times 10^{6}$ samples and a test set of $1.3 \times 10^{5}$ samples. We use the non-uniform gamma distribution $\Gamma(\alpha, \beta)$ with $\alpha=1,\,\beta = 2$ to sample the training data.

The number of training points used at each level in \method is determined by setting $\delta_1$ and $\delta_2$ from Def. \ref{def:sufficien_training_points} to $10^{-3}$. With this choice, \method selects between \SI{30244}{} and \SI{40100}{} training points for each level. For comparison, \falkon uses all the $\SI{2.2e6}{}$ training points. Furthermore, for \lapkrr we run two experiments: \lapkrr (1) using $\SI{1.1e5}{}$ training points at each level and \lapkrr (2) using $\SI{2.2e6}{}$ training points at each level.

%%% Suggested placement of table 1
\begin{table}[htb!]
\caption{Comparison of \methodE, \lapkrr and \falkon for the target in Eq. \eqref{eq:varsinus_target}. For each level $l$ we show the number of landmarks, the mean square error (MSE), and the accumulated time to train the prediction model (Time). 
In parenthesis, in the time column of the \falkon row, is the time to find the optimal bandwidth through cross-validation.}
\tabcap
\begin{center}
\begin{tabular}{c|c|c|c|c}
    & Level &  \# Landmarks & MSE & Time \\ \hline
    \multirow{4}{*}[+0.5mm]{\method} & 5 & \SI{47}{} & \SI{2.55e-1}{} & \SI{77}{\second} \\ 
    & 10 & \SI{392}{} & \SI{3.69e-2}{} & \SI{116}{\second} \\ 
    & 15 & \SI{1525}{} & \SI{8.63e-6}{} & \SI{497}{\second} \\ 
    & 16 & \SI{2302}{} & \SI{6.18e-6}{} & \SI{1194}{\second} \\ \hline
    \multirow{4}{*}[-2mm]{\begin{tabular}{@{}c@{}}\lapkrr (1) \\ $n_l =\SI{1.1e5}{}$ \end{tabular}} & 5 & \SI{1483}{} & \SI{2.56e-1}{} & \SI{143}{\second} \\ 
    & 10 & \SI{1483}{} & \SI{3.65e-2}{} & \SI{413}{\second} \\ 
    & 15 & \SI{1483}{} & \SI{8.72e-6}{} & \SI{825}{\second} \\ 
    & 16 & \SI{1483}{} & \SI{6.85e-6}{} & \SI{922}{\second} \\ 
    & 18 & \SI{1483}{} & \SI{6.55e-6}{} & \SI{1136}{\second} \\ \hline
    \multirow{4}{*}[+1.5mm]{\begin{tabular}{@{}c@{}}\lapkrr (2) \\ $n_l =\SI{2.2e6}{}$ \end{tabular}} & 5 & \SI{1483}{} & \SI{2.56e-1}{} & \SI{2963}{\second} \\ 
    & 10 & \SI{1483}{} & \SI{3.64e-2}{} & \SI{8704}{\second} \\ 
    & 18 & \SI{1483}{} & \SI{8.91e-6}{} & \SI{23113}{\second} \\ \hline
    \methodE & -- & \SI{14830}{} & \SI{5.7e-3}{} & \SI{4642}{\second}+(\SI{27930}{\second})
\end{tabular}
\end{center}
\label{table:compare_falkon_mrfalkon_streamrak_on_varsinus}
\end{table}

%%% Suggested placement of figure 3
\begin{figure}[htb!]
    \centering
    \subfloat[\label{subfig:varsinus_streamrak_pred_lvl3}\centering Level 3]{{\includegraphics[width=0.26\textwidth]{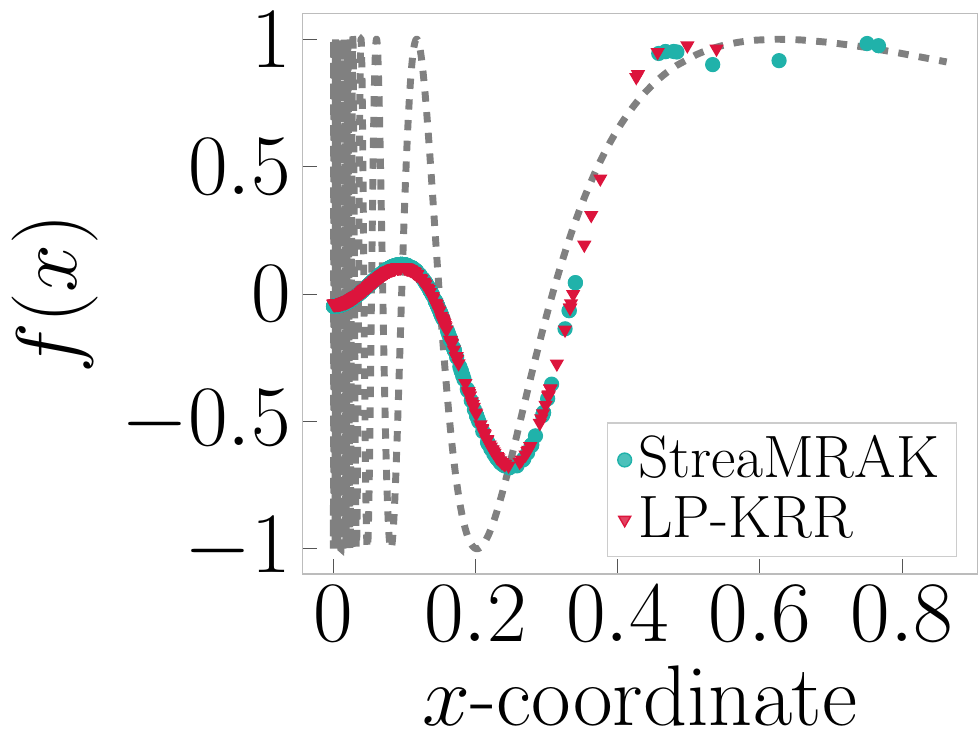} }}%
    \subfloat[\label{subfig:varsinus_streamrak_pred_lvl5}\centering Level 5]{{\includegraphics[width=0.23\textwidth]{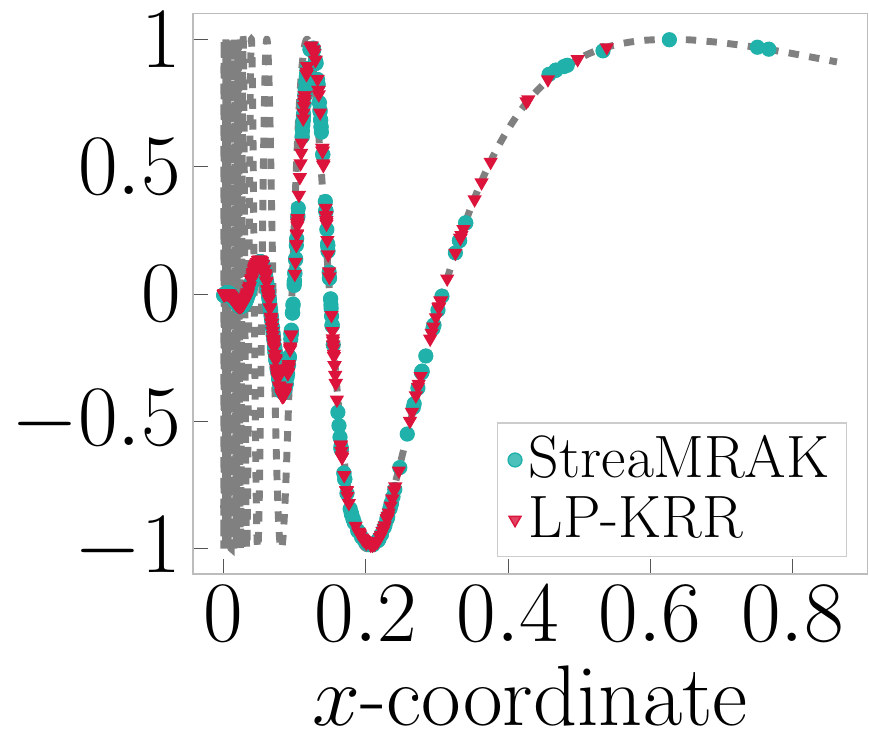} }}%
    \subfloat[\label{subfig:varsinus_streamrak_pred_lvl16}\centering Level 16]{{\includegraphics[width=0.23\textwidth]{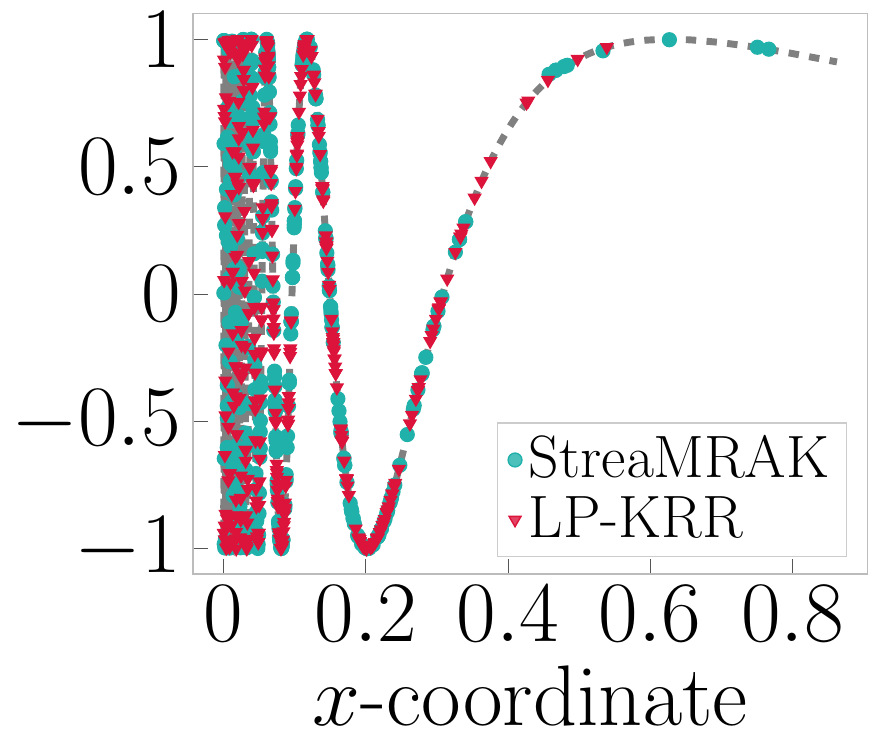} }}%
    \subfloat[\label{subfig:varsinus_falkon_pred}\centering Single level ]{{\includegraphics[width=0.239\textwidth]{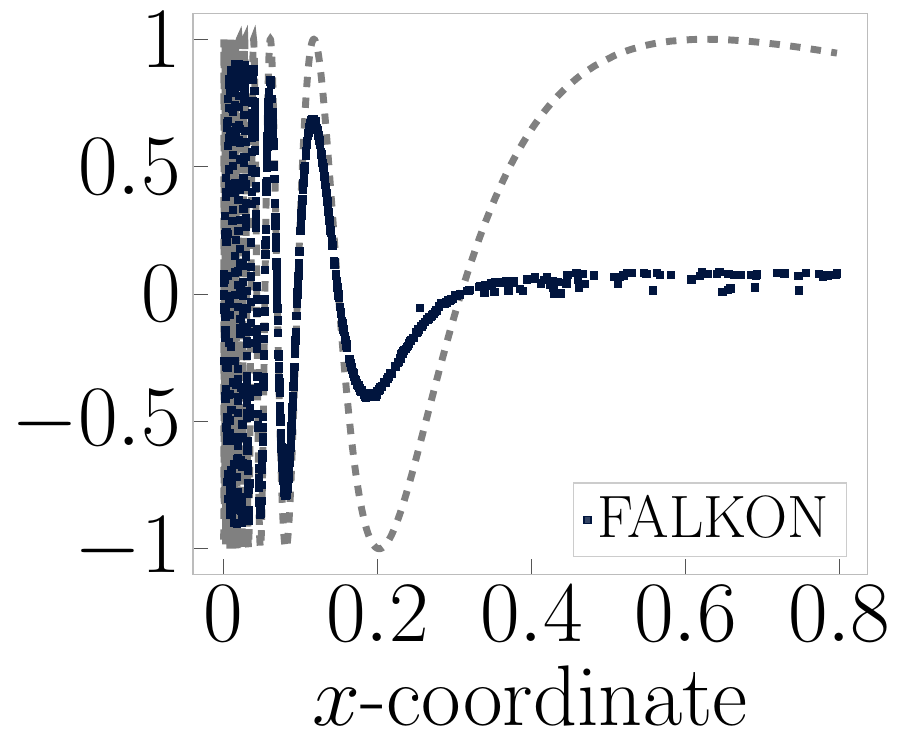} }}%
    \figcap
    \caption{(a)-(d) shows the target function $f(x)$ from Eq. \eqref{eq:varsinus_target} as a grey dotted line. The light-blue circles indicates the predicted values made by \methodE. Similarly the red triangles indicates the predictions made by \lapkrr and the dark blue squares the predictions made by \falkonE.}
    \label{fig:varsinus_predictions}%
\end{figure}

Results are presented in Table \ref{table:compare_falkon_mrfalkon_streamrak_on_varsinus}, and the prediction results are illustrated in Fig. \ref{subfig:varsinus_streamrak_pred_lvl3}-\ref{subfig:varsinus_falkon_pred}. The results show that \method and both \lapkrr schemes perform much better than \falkonE. The reason is that \falkon uses only one bandwidth $r$, while the multi-resolution schemes \method and \lapkrrE, utilize a bandwidth regime $r_l = 2^{-l}r_0$ that varies with the level $l$. The consequence is that \method and \lapkrr approximate the low-frequency components of $f$ when the bandwidth is large, and then target the high-frequency components of $f(x)$ gradually as the bandwidth decreases. These results illustrate the benefits of a multi-resolution scheme over a single bandwidth scheme. 

From Table \ref{table:compare_falkon_mrfalkon_streamrak_on_varsinus}, we also observe that \lapkrr (2) is significantly slower than \method and \lapkrr (1). This is because it uses the entire training set at each level. Therefore, since \lapkrr (1) and \lapkrr (2) achieve comparable precision, we see that including all training points at each level is not always necessary. 

A closer comparison of \method and \lapkrr is given in Fig. \ref{fig:mse_varsin_mrfalkon_vs_streamrak}. In particular, in Fig. \ref{subfig:varsin_mse_vs_level} we see that the two algorithms achieve very similar precision. However, comparing the training times in Fig. \ref{subfig:varsin_mse_vs_time}, we see that \method trains each level faster and therefore achieves better precision earlier than \lapkrr (1).

%%% Suggested placement of figure 4
\begin{figure}[tb!]
    \centering
    \subfloat[\label{subfig:varsin_mse_vs_level}\centering Mean square error]{{\includegraphics[ width=0.435\textwidth]{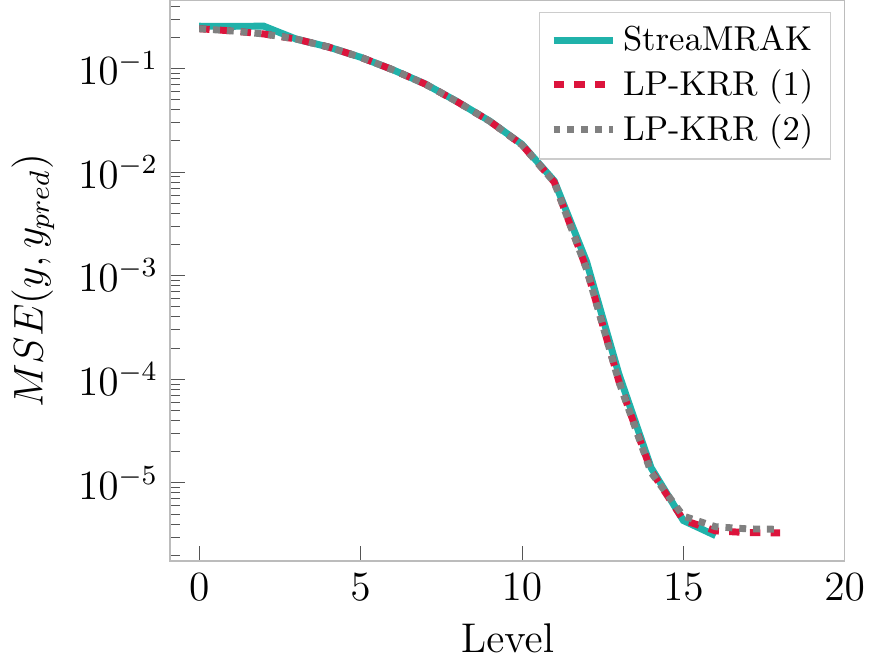} }}%
    \qquad\qquad
    \subfloat[\label{subfig:varsin_mse_vs_time}\centering Training time]{{\includegraphics[width=0.41\textwidth, trim={3pt 3pt 0pt 0pt}]{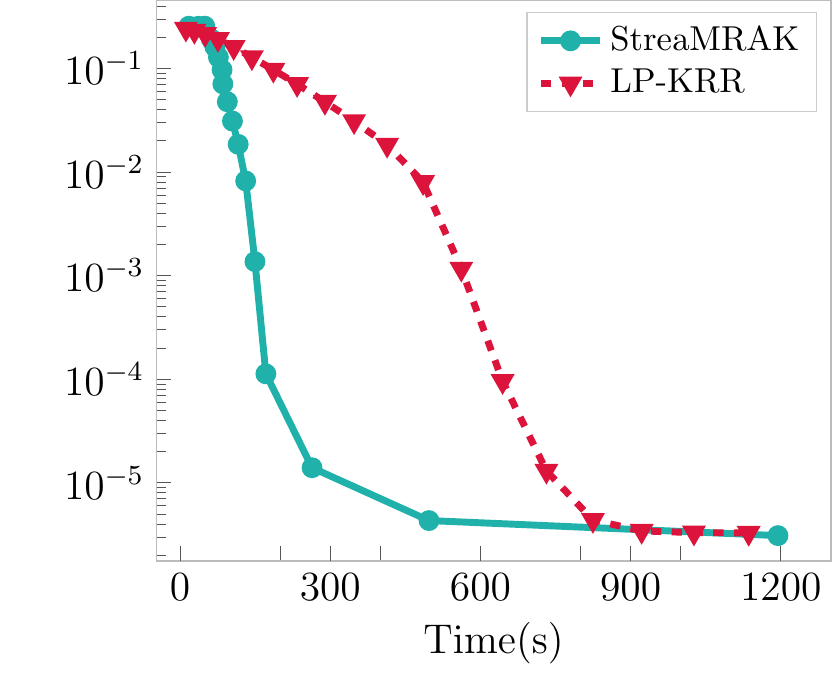} }}%
    \figcap
    \caption{Comparison of \methodE{} and \lapkrrE. (a) shows the mean square error calculated according to Eq. \eqref{eq:MSEequationExperiments} with the target function from Eq. \eqref{eq:varsinus_target}. Along the x-axis is the number of levels included in the model. (b) The x-axis shows the accumulated training time until a level in the LP is completed. The y-axis shows the MSE of the prediction using the currently available model. The blue circles indicate the prediction error of \method and the red triangles indicate the prediction error of \lapkrr (1).}
    \label{fig:mse_varsin_mrfalkon_vs_streamrak}%
\end{figure}

%%% Suggested placement of figure 5
\begin{figure}[htb!]
    \centering
    \subfloat[\label{subfig:avg_nn_lvl_3_streamrak_mrfalkon}\centering Level 3]{{\includegraphics[width=0.305\textwidth, trim={0pt 0pt 1pt 1pt}]{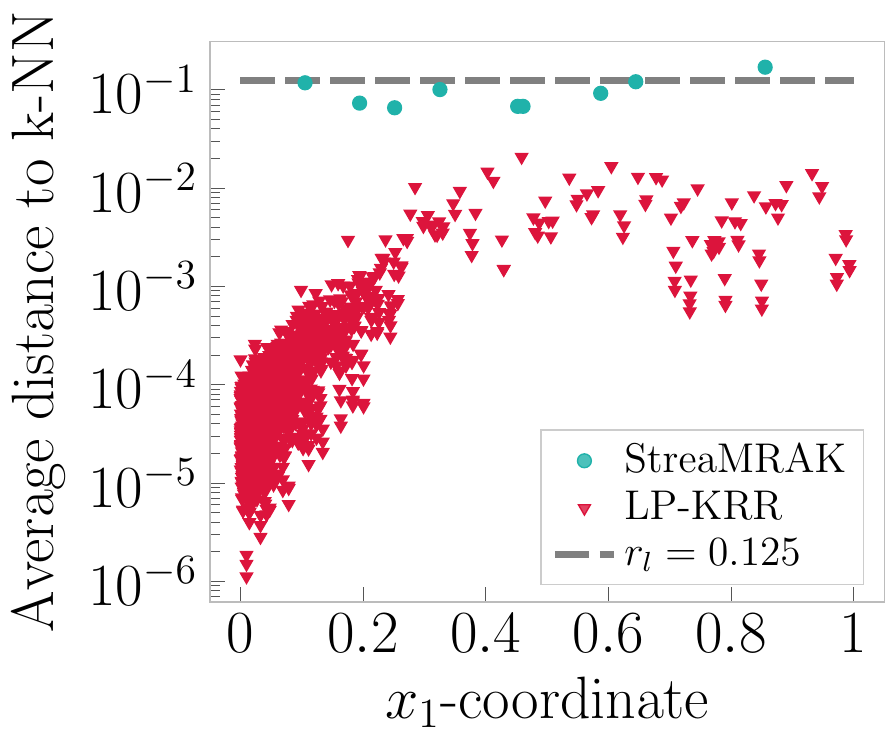} }}%
    \qquad
    \subfloat[\label{subfig:avg_nn_lvl_5_streamrak_mrfalkon}\centering Level 5]{{\includegraphics[width=0.28\textwidth]{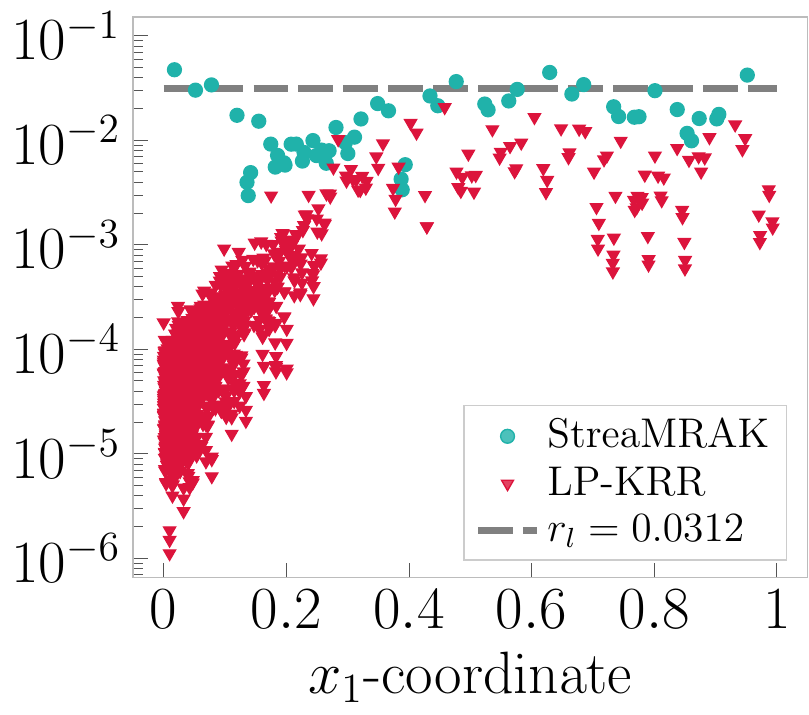} }}%
    \qquad
    \subfloat[\label{subfig:avg_nn_lvl_16_streamrak_mrfalkon}\centering Level 16]{{\includegraphics[width=0.28\textwidth]{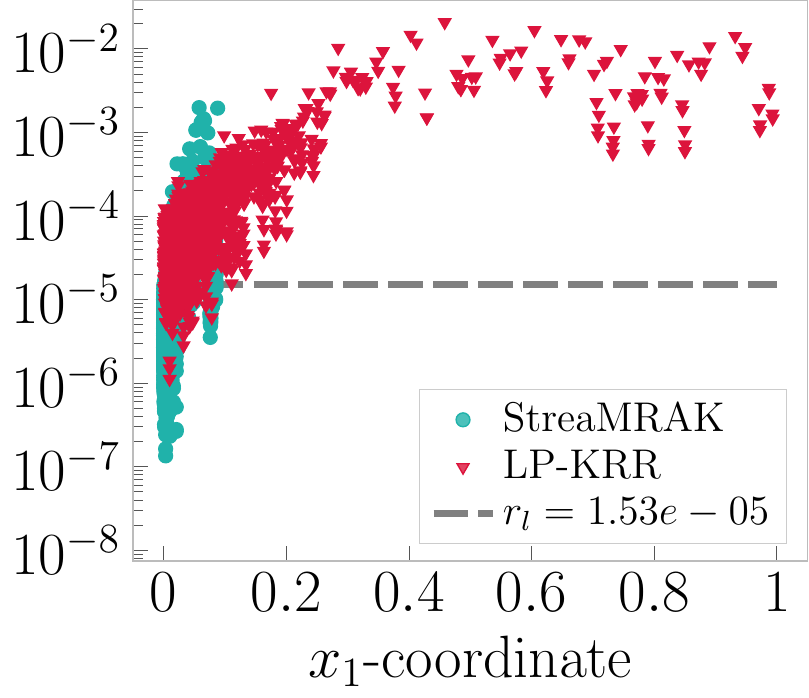} }}%
    \figcap
    \caption{(a)-(c) shows the landmarks with their position along the $x_1$ axis and the average distance to their 2 nearest neighbors along the y-axis. Here the red triangles are the Nystr\"{o}m landmarks of \lapkrr and the light-blue circles the landmarks of \method. The grey dotted line is the bandwidth at the given level.}
    \label{fig:avg_nn_streamrak_mrfalkon}%
\end{figure}

In Fig. \ref{fig:avg_nn_streamrak_mrfalkon} we show the average distance of each landmark to their 2 nearest neighbors (2-NN distance). 
Two aspects of the selection require attention.
As opposed to \lapkrrE, \method selects landmarks such that the 2-NN distance is comparable to the bandwidth used at a specific level. 
In addition, \method saves computational power by not choosing landmarks in regions where the 2-NN distance is too low compared to the bandwidth. 
In  Fig. \ref{subfig:avg_nn_lvl_16_streamrak_mrfalkon} this can be observed for level $l=16$ for landmarks with $x\ge 0.2$.
Due to the non-uniform sample distribution with a higher density around $x=0$, the adaptive sub-sampling is able to select more landmarks in the region close to $x=0$, where $f$ oscillates with high frequency. 
Furthermore, \method stops predicting at level $16$ because level $17$ is not yet covered with a high enough density of landmarks. 
Meanwhile, \lapkrr continues, but as seen from Fig. \ref{subfig:varsin_mse_vs_level} the improvements after level $15$ are not significant because the density of Nystr\"{o}m samples is too low compared to the bandwidth.

%%%%%%%%%%%%%%%%%%%%%%%%%%%%%
%% Adapt sub-samp benchmark %
%%%%%%%%%%%%%%%%%%%%%%%%%%%%%
\subsection{Adaptive sub-sampling benchmark}
\label{section:dumbell_experiment}
We consider a dumbbell-shaped domain embedded in $\bbR^5$, consisting of two 5-dimensional spheres connected by a 2-dimensional plane. A projection of the input domain in $\bbR^3$ is shown in 
Fig. \ref{fig:Compare_landmarks_selection_StreaMRAK_vs_MRFALKON} (a)-(c). Furthermore, as target we consider the following function,
\begin{equation}\label{eq:dumbell_target}
    f(\xbs) = 
    \begin{cases}
    A\sin(Bx_1+ \phi) + (x_1 + 2), & 1 < x_1 < 3 \\
    1, & \text{ otherwise }\\
    
    \end{cases}, \text{ for } \xbs\in[-1,5]\times[-1, 1]^4,
\end{equation}
where  $A, B$ and $\phi$ are chosen so that $f\in \CC^1([-1,5]\times[-1, 1]^4, \bbR^5)$. For the experiments, we consider a training set of $\SI{1.9e+6}{}$ samples and a test set of $\SI{6e+5}{}$ samples, all sampled uniformly at random from the input domain. We note that we purposefully chose a simple function in the high dimensional regions because complicated functions in high dimensions require far too many points to be satisfactorily learned.

To determine the number of training points for \methodE, we let $\delta_1=\SI{1e-3}{}$ and $\delta_2 = \SI{1e-4}{}$,  cf. Def. \ref{def:sufficien_training_points}.
With this choice \method selects between \SI{30100}{} and \SI{40100}{} training points for each level. \falkon again uses all the $\SI{1.9e6}{}$ training points and for \lapkrr we consider two settings: \lapkrr (1) using  $\SI{1.8e5}{}$ training points at each level, and \lapkrr (2) using  $\SI{1.9e6}{}$ training points at each level.

The results for \textsf{StreaMRAK}, \lapkrrE, and \falkon are presented in Table \ref{table:dumbell_experiment_complexSinus}. We observe that \method achieves a better prediction than both \falkon and \lapkrrE because it adapts the sub-sampling density to the level of resolution.   

%%% Suggested placement of table 2
\begin{table}[htb!]
\caption{Comparison of \methodE, \lapkrrE, and \falkon predictions of the target function in Eq. \eqref{eq:dumbell_target}. For each level $l$ we show the number of landmarks, the mean square error (MSE), and the accumulated time to train the prediction model (Time). In parenthesis, in the time column of the \falkon row, is the time to find the optimal bandwidth through cross-validation.}
\begin{center}
\tabcap
\begin{tabular}{c|c|c|c|c}
    & Level &  \# Landmarks & MSE & Time \\ \hline
    \multirow{4}{*}[-2mm]{\method}& 4 & \SI{352}{} & \SI{1.29e-3}{} & \SI{64}{\second} \\ 
    & 5 & \SI{2667}{} & \SI{1.27e-3}{} & \SI{1398}{\second} \\ 
    & 6 & \SI{1858}{} & \SI{8.31e-4}{} & \SI{1462}{\second} \\ 
    & 8 & \SI{1329}{} & \SI{2.75e-5}{} & \SI{2307}{\second} \\ \hline
    \multirow{4}{*}[-3mm]{\begin{tabular}{@{}c@{}}\lapkrr (1) \\ $n_l =\SI{1.8e5}{}$ \end{tabular}}& 4 & \SI{1375}{} & \SI{1.28e-3}{} & \SI{386}{\second} \\ 
    & 5 & \SI{1375}{} & \SI{1.26e-3}{} & \SI{520}{\second} \\ 
    & 6 & \SI{1375}{} & \SI{9.10e-4}{} & \SI{671}{\second} \\ 
    & 8 & \SI{1375}{} & \SI{3.30e-4}{} & \SI{1064}{\second} \\ 
    & 9 & \SI{1375}{} & \SI{3.10e-4}{} & \SI{1287}{\second} \\ \hline
    \multirow{4}{*}[-3mm]{\begin{tabular}{@{}c@{}}\lapkrr (2) \\ $n_l =\SI{1.9e6}{}$ \end{tabular}}& 4 & \SI{1375}{} & \SI{1.34e-3}{} & \SI{4160}{\second} \\ 
    & 5 & \SI{1375}{} & \SI{1.30e-3}{} & \SI{5570}{\second} \\ 
    & 6 & \SI{1375}{} & \SI{9.44e-4}{} & \SI{7168}{\second} \\ 
    & 8 & \SI{1375}{} & \SI{3.16e-4}{} & \SI{11125}{\second} \\ 
    & 9 & \SI{1375}{} & \SI{3.01e-4}{} & \SI{13334}{\second} \\ \hline
    FALKON & -- & \SI{14830}{} & \SI{6.8e-4}{} & \SI{6590}{\second}+(\SI{37561}{\second})
\end{tabular}
\end{center}
\label{table:dumbell_experiment_complexSinus}
\end{table}

To understand the improvement in prediction accuracy, we need to discuss the effects of landmark selection.
In Fig. \ref{subfig:dumbell_lm_dist_lvl4}-\ref{subfig:dumbell_lm_dist_lvl6} we show the projections of landmarks for \method and \lapkrr on $\bbR^3$, and in Fig. \ref{subfig:dumbell_lm_avg_distance_lvl4}-\ref{subfig:dumbell_lm_avg_distance_lvl6} the average distance of each landmark to its $7$ nearest neighbors.
These distances are compared with the bandwidth $r_l$ selected for the given level $l$. We see that \method selects landmarks in regions where the average distance to nearest neighbors is comparable to the bandwidth. 
This means that in high dimensional regions, which correspond to $x_1\in[-1,1]\cup[3,5]$, the algorithm effectively stops collecting landmarks since it cannot maintain high enough density. 
On the other hand, \lapkrr uses Nystrom sub-sampling, which imposes a uniform selection of landmarks. 
Consequently, a significant number of landmarks come from high-dimensional regions.  

%%% Suggested placement of figure 6
\begin{figure}[h!]%
    \centering
    \subfloat[\label{fig:comparison_low_and_high_dim_landmarks}\centering Landmarks distribution ]{\includegraphics[width=0.42\textwidth, trim={0pt, 0pt, -3pt, -3pt}]{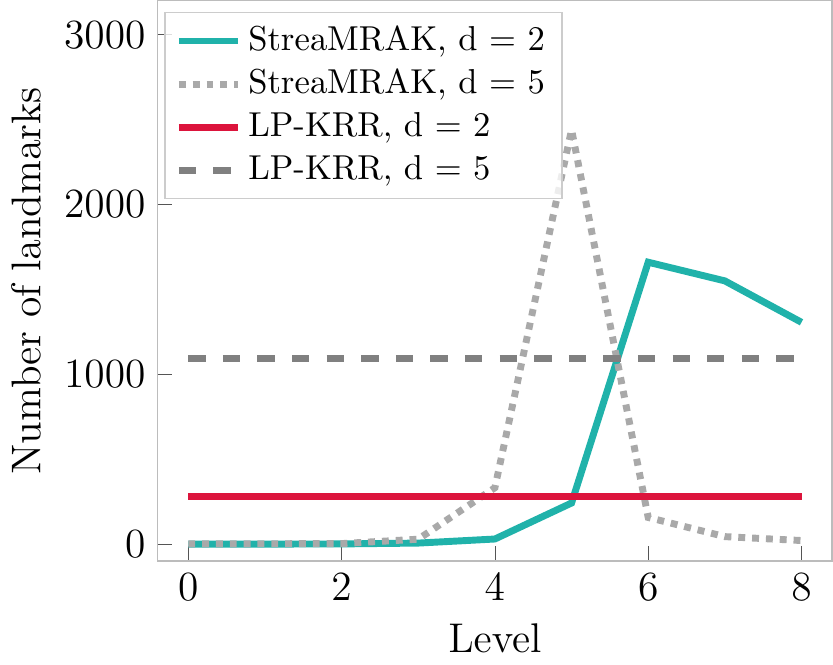}}%
    \qquad\qquad
    \subfloat[\label{fig:mse_low_and_high_dimensions}\centering Mean square error ]{{\includegraphics[width=0.42\textwidth]{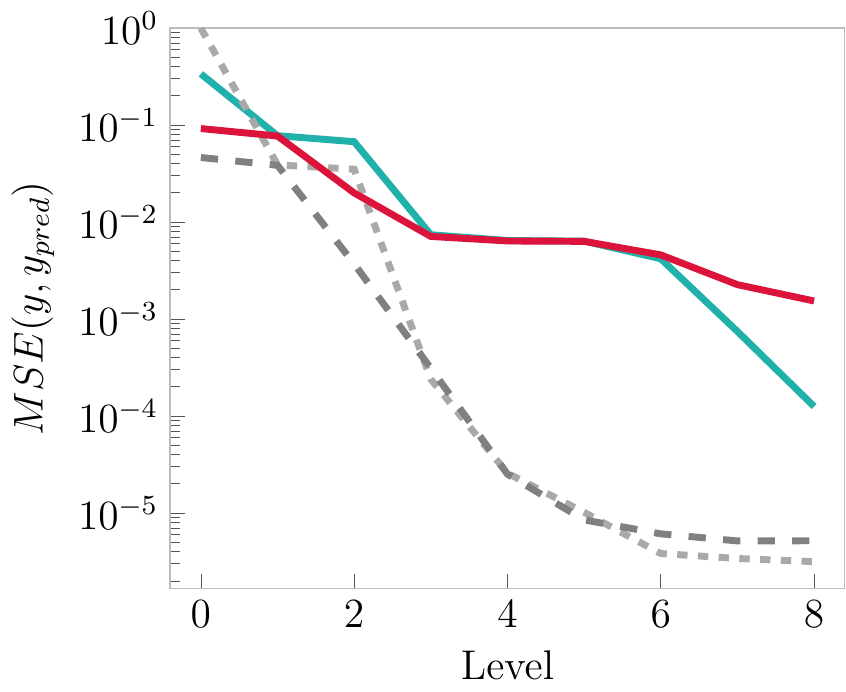} }}%
    \figcap
    \caption{Comparison of \method and \lapkrr (1) in the 2-dim and 5-dim regions of the Dumbbell domain. The solid blue line is \method for dimension $d=2$ while the solid red line is \lapkrr (1) for dimension $d=2$. The grey dotted line is \method for dimension $d=5$ and the dark-grey dashed line is \lapkrr (1) for dimension $d=5$  (a) shows the mean square error calculated according to Eq. \eqref{eq:MSEequationExperiments}. (b) shows the number of landmarks in the 2-dimensional and the 5- dimensional regions.}
    \label{fig:Comparison_high_and_low_dim_dumbell}%
\end{figure}

%%% Suggested placement of figure 7
\begin{figure}[htb!]%
    \centering
    \subfloat[\label{subfig:dumbell_lm_dist_lvl4}\centering Level 4]{{\includegraphics[width=0.29\textwidth]{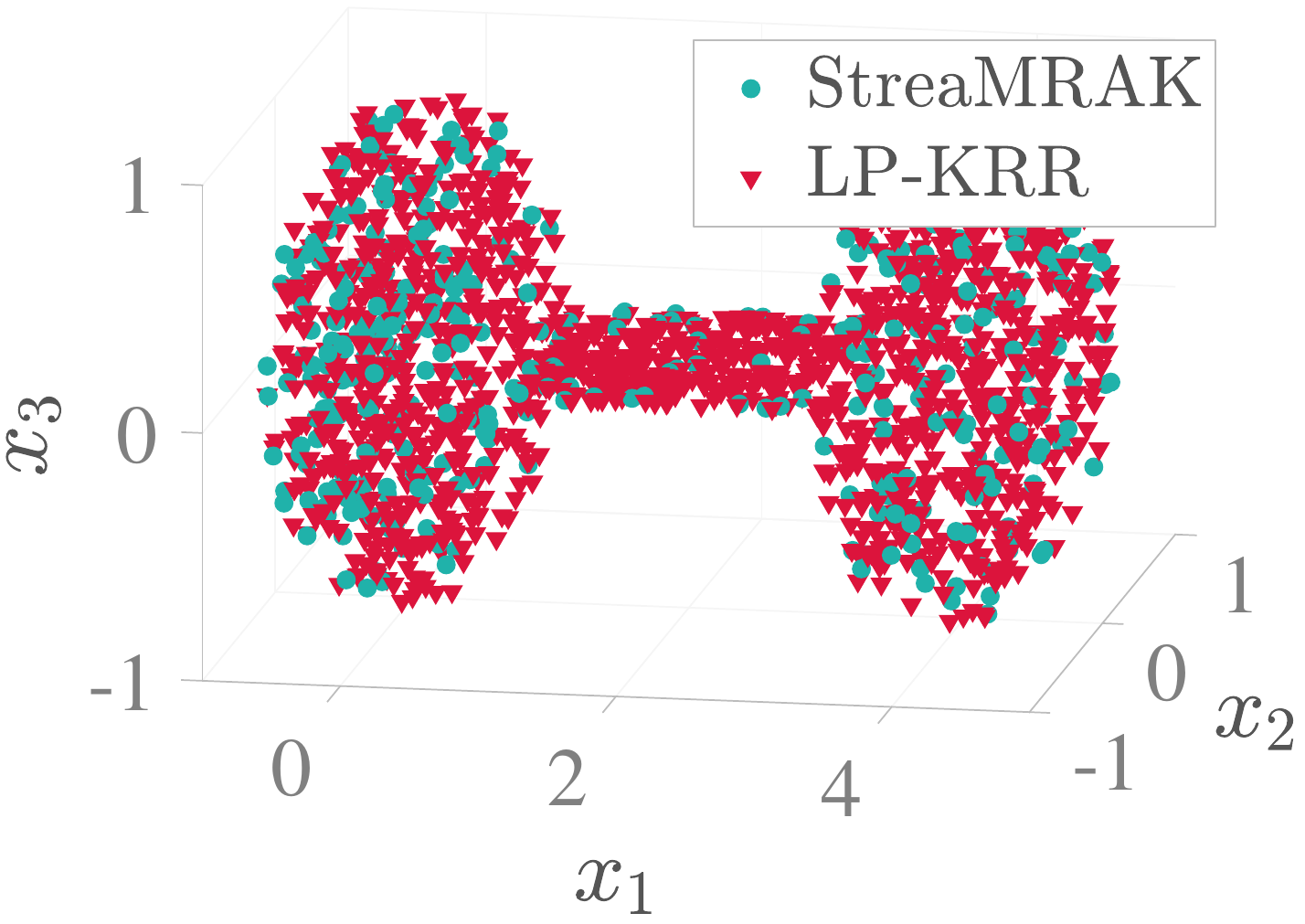} }}%
    \qquad
    \subfloat[\label{subfig:dumbell_lm_dist_lvl5}\centering Level 5]{{\includegraphics[width=0.29\textwidth]{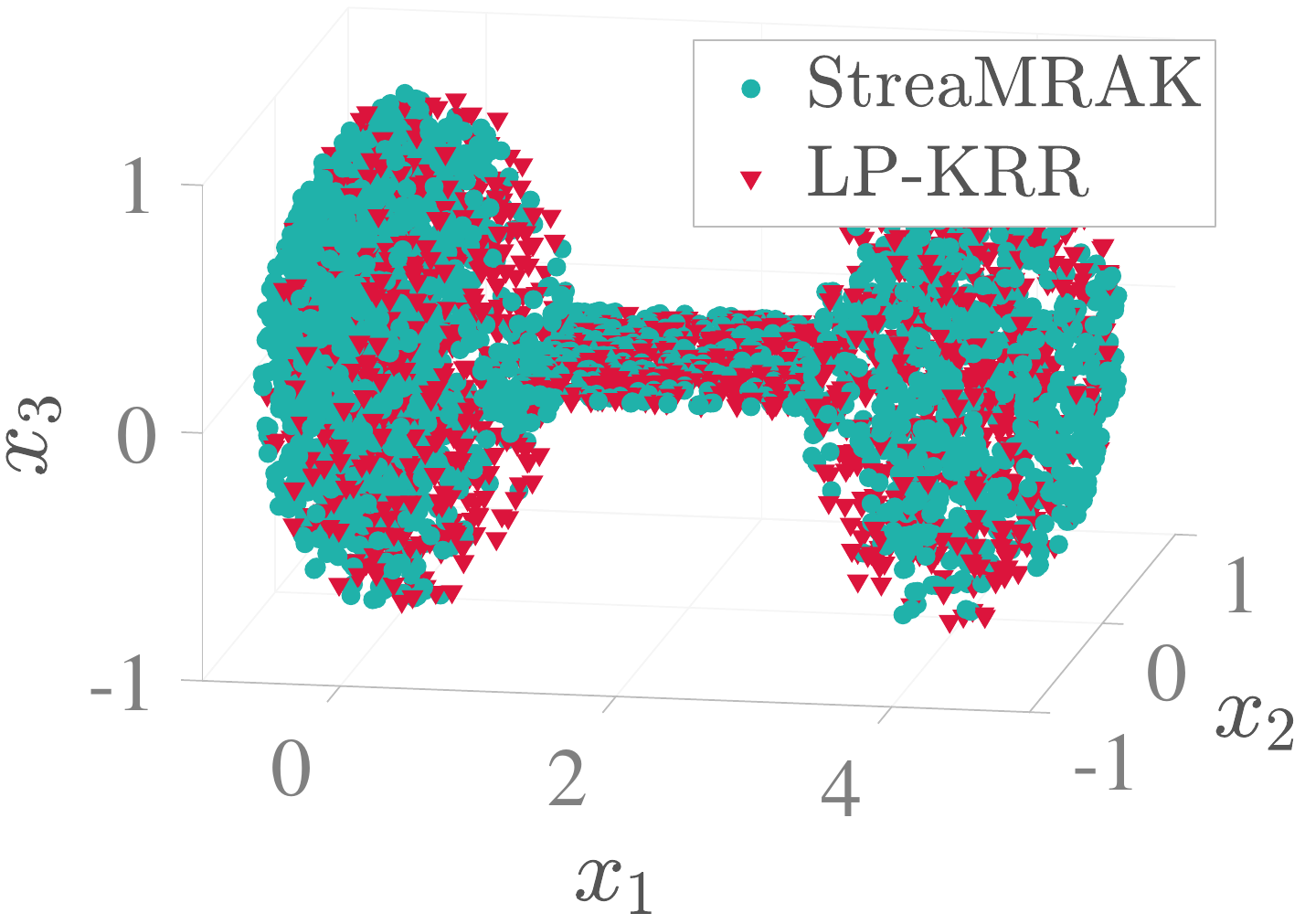} }}%
    \qquad
    \subfloat[\label{subfig:dumbell_lm_dist_lvl6}\centering Level 6]{{\includegraphics[width=0.29\textwidth]{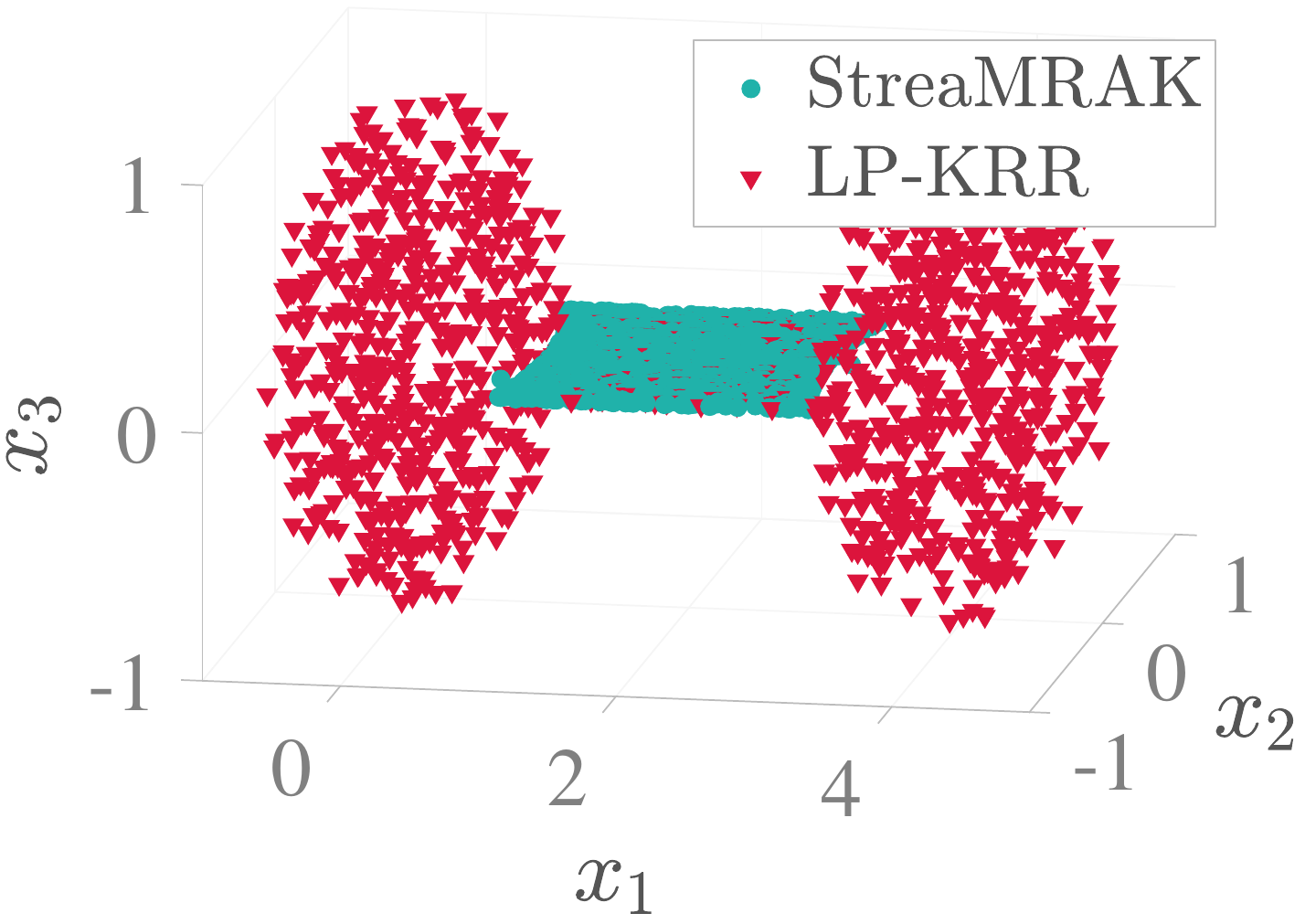} }}%
    \qquad
    \subfloat[\label{subfig:dumbell_lm_avg_distance_lvl4}\centering Level 4]{{\includegraphics[width=0.305\textwidth]{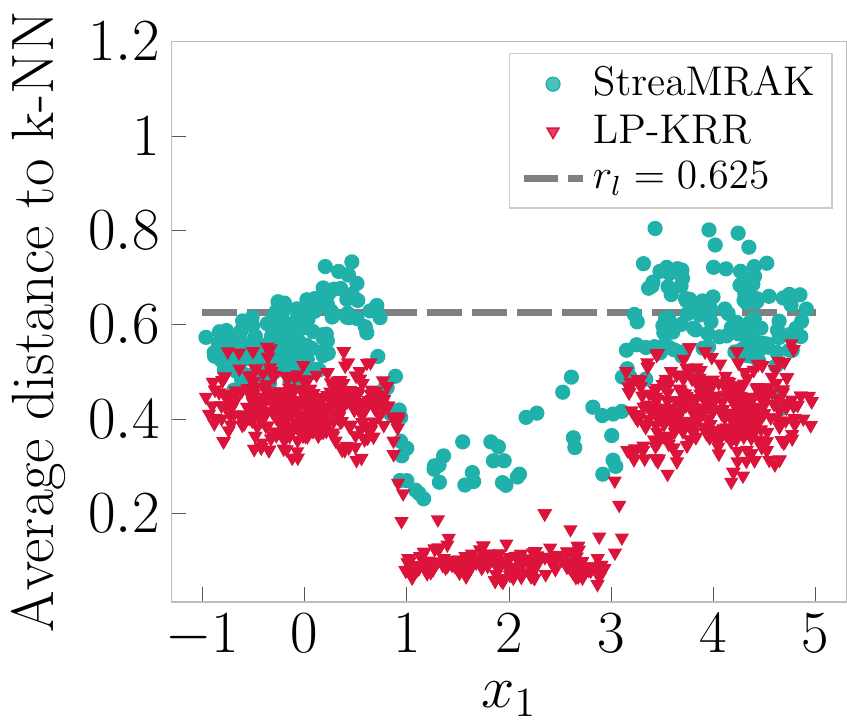} }}%
    \qquad
    \subfloat[\label{subfig:dumbell_lm_avg_distance_lvl5}\centering Level 5]{{\includegraphics[width=0.28\textwidth]{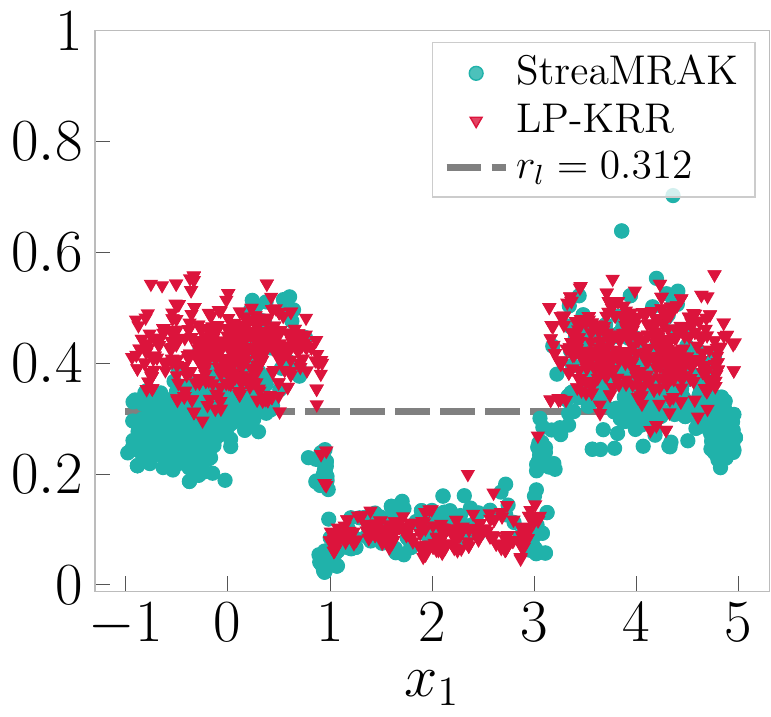} }}%
    \qquad
    \subfloat[\label{subfig:dumbell_lm_avg_distance_lvl6}\centering Level 6]{{\includegraphics[width=0.28\textwidth]{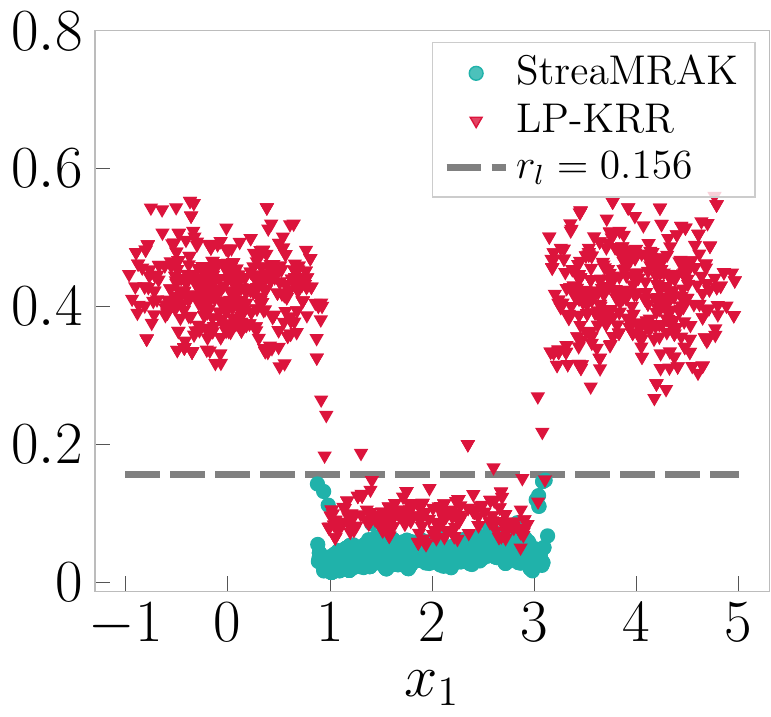} }}%
    %\figcap
    \caption{In the figure, red triangles correspond to \lapkrr and light-blue circles to \methodE. (a)-(c) shows the landmark distributions projected on $\bbR^3$ at level $l=4,5,6$ respectively. (d)-(f) shows the average distance between the 7 nearest neighbors; bandwidth $r_l$ is indicated with a dotted line.}
    \label{fig:Compare_landmarks_selection_StreaMRAK_vs_MRFALKON}%
\end{figure}

Moreover, Fig. \ref{fig:Compare_landmarks_selection_StreaMRAK_vs_MRFALKON} shows that in the case of \lapkrrE, the average distance between the landmarks in high dimensional regions is larger than the bandwidth $r_l$ when $l\geq5$. 
As a knock-on effect, \lapkrr makes only small improvements in high dimensional regions for  $l\geq 5$, as seen from Fig. \ref{fig:mse_low_and_high_dimensions}. 
Analogous behavior can be observed for \methodE. 
However, since \method devotes fewer resources to high dimensional regions, it sub-samples more from the low dimensional region, as illustrated in Fig. \ref{fig:comparison_low_and_high_dim_landmarks}. The consequence is that \method makes bigger improvements in the low dimensional region than \lapkrrE, as seen from Fig. \ref{fig:mse_low_and_high_dimensions}. Note that this was not the case in Section \ref{section:varsinus_experiment}, where the two methods had similar behavior, but unlike here, the input domain in Section \ref{section:varsinus_experiment} did not consist of regions with different dimensionalities.

\FloatBarrier
%%%%%%%%%%%%%%%%%%%%%%%%%%%%%
%%%%% Double pendulum %%%%%%%
%%%%%%%%%%%%%%%%%%%%%%%%%%%%%
\subsection{Forecasting the trajectory of a double pendulum}
\label{section:doublePend}
We consider the double pendulum, illustrated in Fig. \ref{fig:DoublePendIllustration}, which we model by the Lagrangian system
\begin{equation}
    \CL = ml^2(\omega_1^2+\frac{1}{2}\omega_2^2) +ml^2\omega_1\omega_2\cos{(\theta_1-\theta_2)} +mgl(2\cos{\theta_1}+\cos{\theta_2}),
    \label{eq:lagrangian_dp}
\end{equation}
under the assumption that the pendulums are massless rods of length $l_1=l_2=l$ with masses $m_1=m_2=m$ centered at the end of each rod. Here $g$ is the standard gravity, $\omega_1\coloneqq \dot\theta_1$, $\omega_2\coloneqq \dot\theta_2$ are the angular velocities, and the angles $\theta_1$, $\theta_2$ are as indicated in Fig. \ref{fig:DoublePendIllustration}. For the experiments we let $m=1$, $l=1$ and $g=10$.

The learning task is to forecast the trajectory of the pendulum, given only its initial conditions. We let $\sbs_t=[\theta_1(t),\, \theta_2(t),\, \omega_1(t),\, \omega_2(t)]\in \bbR^4$ be the state of the system at step $t\in \bbN$ and train \methodE, \lapkrr and \falkon to learn how $\sbs_t$ maps to a later state $\sbs_{t+\Delta}$, for $\Delta\in\bbN$. The trained model $\widehat{f}$ is used to forecast the state $\sbs_T$ for $T>>0$ by recursively predicting $\sbs_{t+\Delta} = \widehat{f}(\sbs_t)$ from the initial state $\sbs_0$ until $t=T$.

For the experiments we consider two settings: a low energy system $\sbs_0^\textit{low} = [-20\degree,\, -20\degree,\, 0\degree,\, 0\degree]$ and a high energy system $\sbs_0^\textit{high} = [-120\degree,\, -20\degree,\, -7.57\degree,\, 7.68\degree]$. For these systems, we initialize $8000$ pendulums as $\sbs_0\sim\CN(\sbs, \sigma(\sbs))$ for $\sbs = \sbs_0^\textit{low},\,\sbs_0^\textit{high}$ respectively, where $\sigma(\sbs)=[0.025|\theta_1|,\, 0.15|\theta_2|,\, 0.3|\omega_1|,\, 0.3|\omega_2|]$. Each pendulum is iterated for $500$ steps, which results in $5\times10^6$ training points distributed in $\bbR^4$. Furthermore, for the test data we consider $100$ pendulums $\sbs_0\sim\CN(\sbs, 0.01|\sbs|)$ for $\sbs = \sbs_0^\textit{low},\, \sbs_0^\textit{high}$, iterated for 500 steps.

To determine the number of training points for \methodE, we let $\delta_1,\delta_2=10^{-4}$, cf. Def. \ref{def:sufficien_training_points}. With this choice \method selects between \SI{30219}{} and \SI{70282}{} training points for each level for the low energy system, and between \SI{36300}{} and \SI{130200}{} for the high energy system. Meanwhile, \falkon uses all $\SI{5.0e6}{}$ training points and \lapkrr use $\SI{3.9e5}{}$ training points at each level. 

Results are presented in Table \ref{table:doublePend_falkon_vs_streamrak_lowE} and \ref{table:doublePend_falkon_vs_streamrak_highE}. Furthermore, to illustrate the prediction results we consider the center of mass $\overline{M}_x(\sbs_t)=\frac{1}{2}(x_1(t)+x_2(t))\in \bbR$ at state $\sbs_t$, where $x_1,\, x_2\in\bbR$ are the positions of the two pendulum masses as seen in Fig. \ref{fig:DoublePendIllustration}. The prediction results are illustrated in Fig. \ref{fig:Pred_doublePend_lowE} and \ref{fig:Pred_doublePend_highE} for the low and high energy pendulums respectively. We calculate the MSE at each step $t$ separately, such that for a given $t$ we use Eq. \ref{eq:MSEequationExperiments} with $\ybs_k=\overline{M}_x(\sbs_t)$, $\ybs^\textit{pred}_k=\overline{M}_x(\sbs^\textit{pred}_t)$ and $\Upsilon=100$.

%%% Suggested placement of table 3
\begin{table}[htb!]
\caption{Comparison of \methodE, \lapkrrE, and \falkon for the low energy system. For each level $l$ we show the number of landmarks, the MSE at step T=50, and the accumulated time to train the prediction model (Time). In parenthesis, in the time column of the \falkon row, is the time to find the optimal bandwidth through cross-validation.}
\tabcap
\begin{center}
\begin{tabular}{c|c|c|c|c}
    & Level &  \# Landmarks & MSE(T=50) & Time \\ \hline
    \multirow{4}{*}{\method} & 2 & \SI{1}{} & \SI{1.12e-1}{} & \SI{31}{\second}\\
    & 5 & \SI{66}{} & \SI{3.39e-5}{} & \SI{498}{\second} \\ 
    & 7 & \SI{659}{} & \SI{1.44e-6}{} & \SI{534}{\second} \\ 
    & 9 & \SI{4085}{} & \SI{3.01e-7}{} & \SI{812}{\second} \\ \hline
    \multirow{4}{*}{\lapkrr} & 2 & \SI{1979}{} & \SI{5.93e-2}{} & \SI{490}{\second}\\
    & 5 & \SI{1979}{} & \SI{2.73e-5}{} & \SI{1463}{\second} \\ 
    & 7 & \SI{1979}{} & \SI{2.16e-7}{} & \SI{2395}{\second} \\
    & 9 & \SI{1979}{} & \SI{1.16e-8}{} & \SI{3550}{\second} \\ \hline
    FALKON & -- & \SI{19790}{} & \SI{5e6}{} & \SI{2934}{\second}+(\SI{1498}{\second})
\end{tabular}
\end{center}
\label{table:doublePend_falkon_vs_streamrak_lowE}
\end{table}

%%% Suggested placement of table 4
\begin{table}[t!]
\caption{Comparison of the \methodE, \lapkrrE, and \falkon for the high energy system. For each level $l$ we show the number of landmarks, the MSE at step T=50, and the accumulated time to train the prediction model (Time). In parenthesis, in the time column of the \falkon row, is the time to find the optimal bandwidth through cross-validation.}
\tabcap
\begin{center}
\begin{tabular}{c|c|c|c|c}
    & Level &  \# Landmarks & MSE(T=50) & Time \\ \hline
    \multirow{4}{*}[2mm]{\method} & 2 & \SI{1}{} & \SI{2.70e-1}{} & \SI{49}{\second}\\
    & 5 & \SI{1106}{} & \SI{8.53e-3}{} & \SI{915}{\second} \\ 
    & 7 & \SI{6376}{} & \SI{2.16e-4}{} & \SI{1999}{\second} \\ \hline
    \multirow{4}{*}[2mm]{\lapkrr} & 2 & \SI{1979}{} & \SI{1.72e-2}{} & \SI{522}{\second}\\
    & 5 & \SI{1979}{} & \SI{5.09e-3}{} & \SI{1474}{\second} \\ 
    & 7 & \SI{1979}{} & \SI{1.39e-4}{} & \SI{2431}{\second} \\ \hline
    \falkon & -- & \SI{19790}{} & \SI{5e6}{} & \SI{23830}{\second}+(\SI{11050}{\second})
\end{tabular}
\end{center}
\label{table:doublePend_falkon_vs_streamrak_highE}
\end{table}

For the low energy system, we see from Fig. \ref{subfig:mse_vs_time_dp_lowE} how \method is trained significantly faster than \lapkrrE, although at a cost of reduced precision. The reduced training time of \method is a consequence of the low doubling dimension of the training data, which allows the selection of far fewer landmarks for \method than what is used at each level in \lapkrrE.

For the high-energy pendulum, we see from Fig. \ref{subfig:dp_mse_vs_time_highE} that \method is again able to achieve good precision faster than \lapkrr. Furthermore, we see that the number of landmarks selected for \method increases abruptly with the levels, reflecting the high doubling dimension of the training data. Due to this \method stops the training after level $7$, as the next levels require too many landmarks. By continuing for $2$ more levels \lapkrr is able to achieve marginally better precision but at an increased computational cost.

%%% Suggested placement of figure 8
\begin{figure}[h!]
    \centering
    \subfloat[\label{subfig:mse_error_doublePend_lowE}\centering Mean square error ]{{\includegraphics[width=0.30\textwidth, trim={2.5pt, 2.5pt, 0pt, 0pt}]{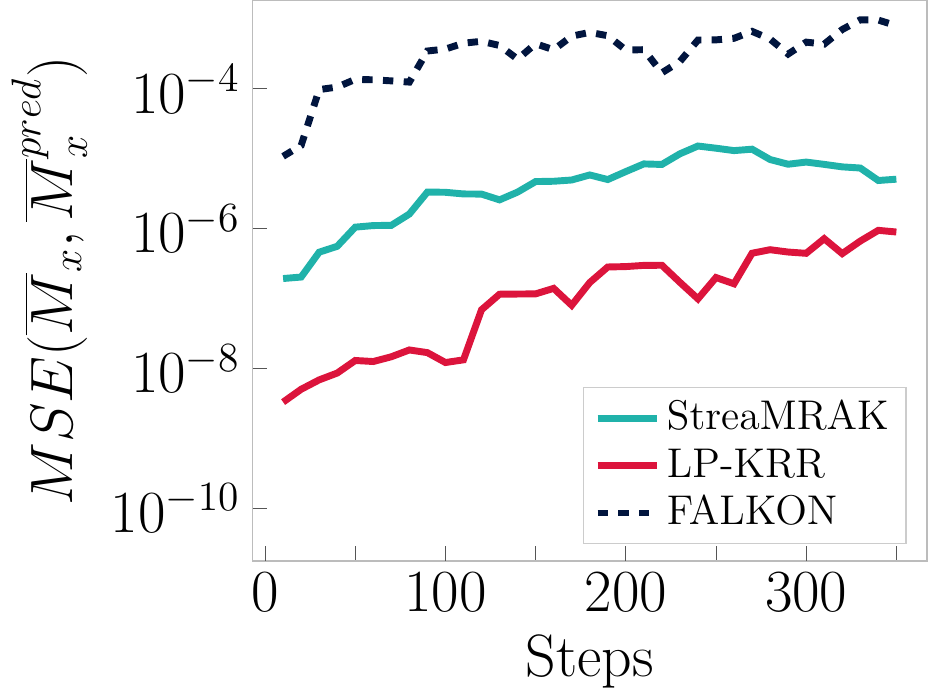} }}%
    \qquad
    \subfloat[\label{subfig:cm_pred_doublePend_lowE}\centering Forecasting trajectory ]{{\includegraphics[width=0.28\textwidth, trim={0pt, 0pt, 9pt, 9pt}]{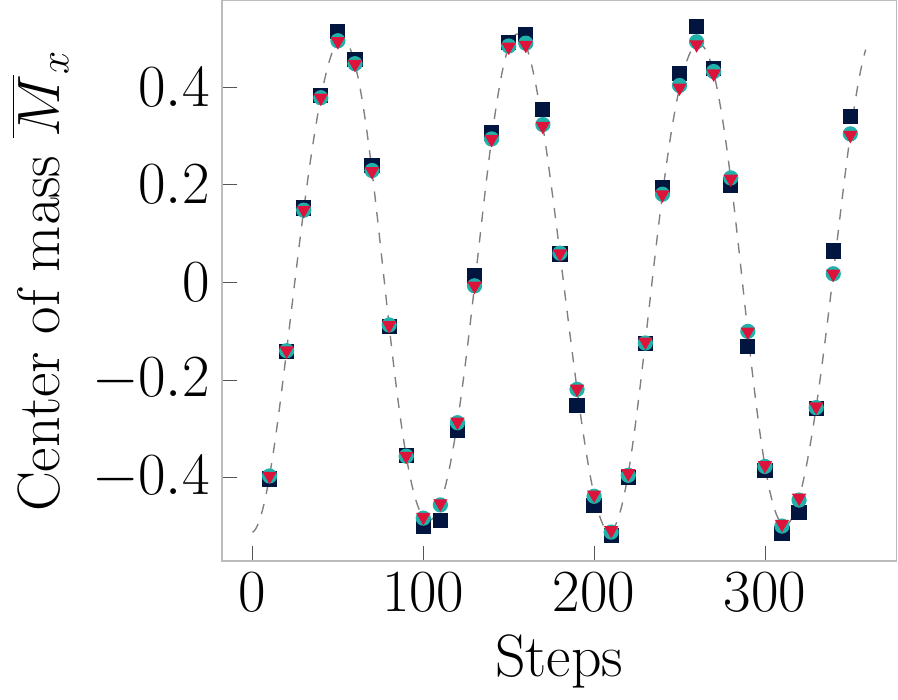} }}%
    \qquad
    \subfloat[\label{subfig:mse_vs_time_dp_lowE}\centering Training time ]{{\includegraphics[width=0.28\textwidth, trim={0pt, 0pt, 9pt, 9pt}]{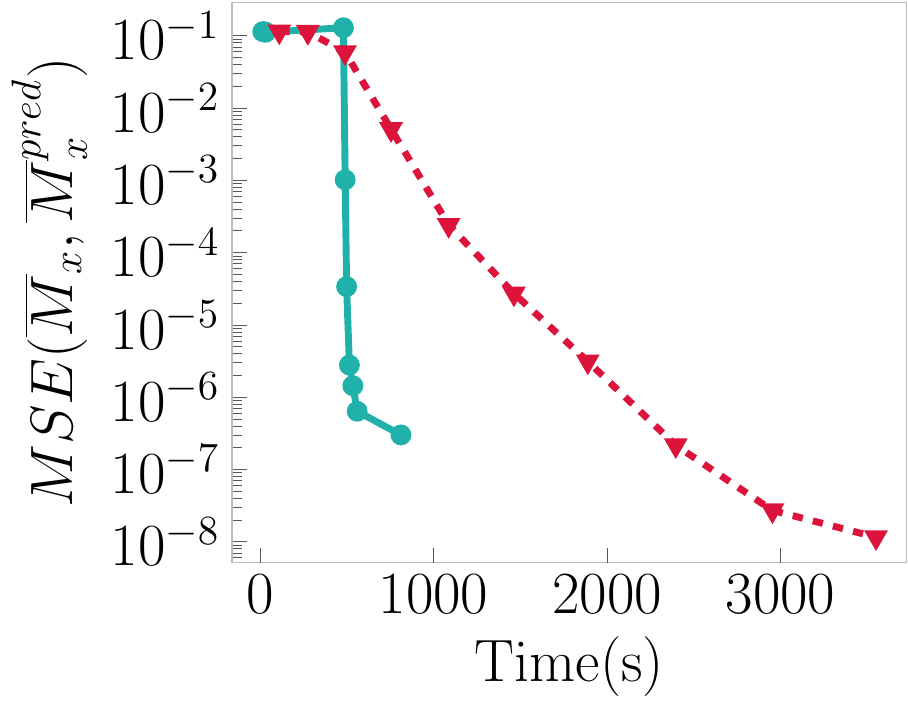} }}%
    \figcap
    \caption{Comparison of \method (light blue lines and circles), \lapkrr (red lines and triangles), and \falkon (dark blue dotted lines and squares) for the low energy pendulum. (a) Shows the mean square error of the center of mass $\overline{M}_x(\sbs_t)$ for the level $7$ prediction, with step $T$ along the x-axis. (b) shows the true center of mass trajectory as a grey dotted line and the predictions of \methodE, \lapkrrE, and \falkon at level $9$. (c) The x-axis shows the accumulated training time until a level in the LP is completed. The y-axis shows the MSE of the predicted system state after $T=50$ steps. We note that \method includes $7$ levels, while \lapkrr includes $9$.}
    \label{fig:Pred_doublePend_lowE}%
\end{figure}

%%% Suggested placement of figure 9
\begin{figure}[tb!]
    \centering
    \subfloat[\label{subfig:mse_error_doublePend_highE}\centering Mean square error ]{{\includegraphics[width=0.30\textwidth, trim={0pt, 0pt, -3pt, -3pt }]{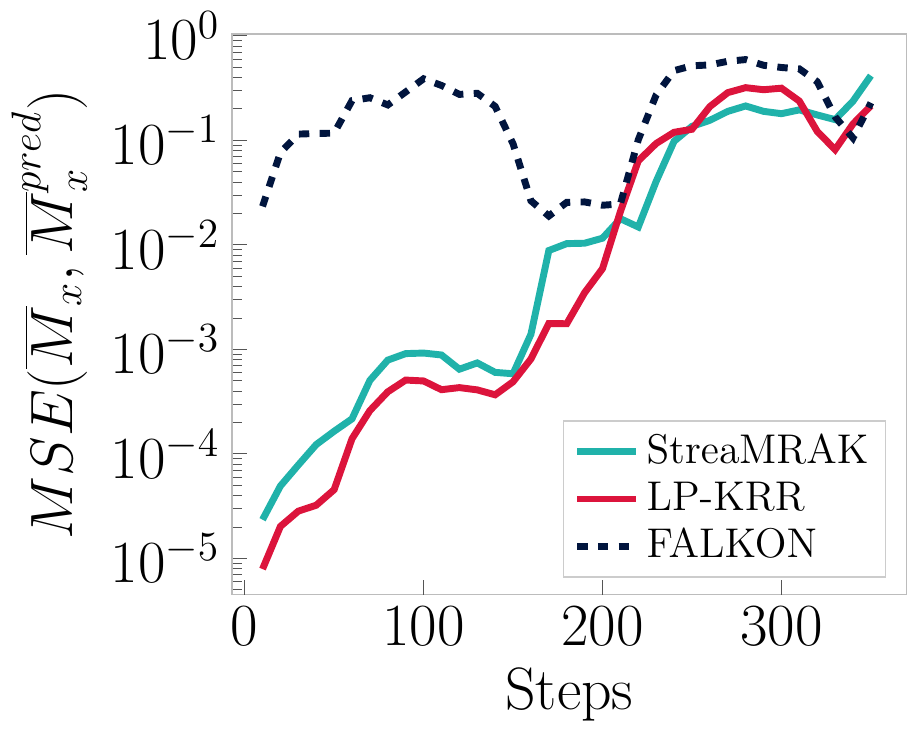} }}%
    \qquad
    \subfloat[\label{subfig:cm_pred_doublePend_highE}\centering Forecasting trajectory ]{{\includegraphics[width=0.28\textwidth]{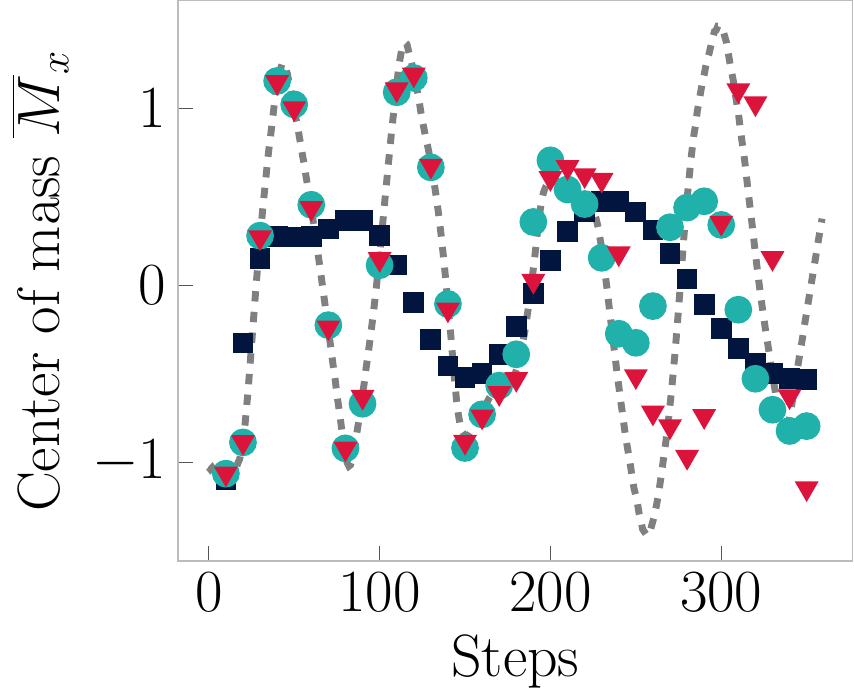} }}%
    \qquad
    \subfloat[\label{subfig:dp_mse_vs_time_highE}\centering Training time]{{\includegraphics[width=0.30\textwidth]{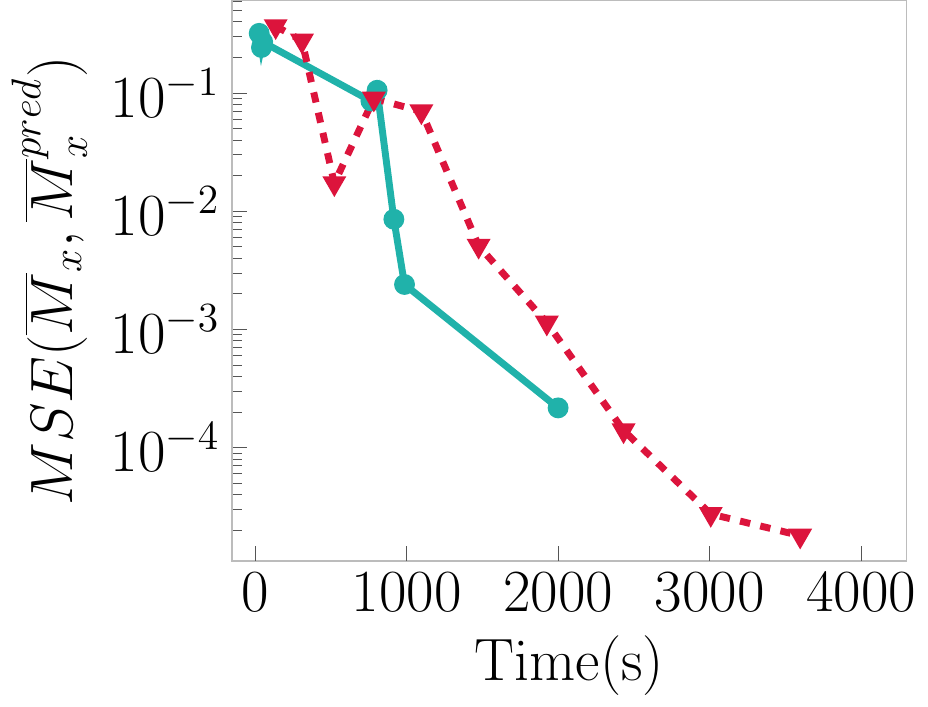} }}%
    \figcap
    \caption{Comparison of \method (light blue lines and circles), \lapkrr (red lines and triangles), and \falkon (dark blue dotted lines and squares) for the high-energy pendulum. (a) Shows the mean square error of the center of mass $\overline{M}_x(\sbs_t)$ for the level $7$ prediction, with step $T$ along the x-axis. (b) shows the true center of mass trajectory as a grey dotted line and the predictions of \methodE, \lapkrrE, and \falkon at level $9$. (c) The x-axis shows the accumulated training time until a level in the LP is completed. The y-axis shows the MSE of the predicted system state after $T=50$ steps.}
    \label{fig:Pred_doublePend_highE}%
\end{figure}

As seen in Fig. \ref{subfig:cm_pred_doublePend_highE}, the forecasting of \method and \lapkrr breaks down after $T\approx 200$ steps. In Fig. \ref{subfig:phase_diagram_with_bifurcation} we observe the trajectory of a pendulum with initial condition $\sbs_0^\textit{high}$, as well as four pendulums with a $0.5\%$ perturbation on the angles $\theta_1$ and $\theta_2$ in $\sbs_0^\textit{high}$. We observe that after roughly $T=205$ time steps the trajectory of the five pendulums diverge significantly from each other. Therefore, it seems that a bifurcation point occurs around this time, which may explain why all the algorithms are unable to make good forecasting beyond this point.

%\makeatletter
%\setlength{\@fptop}{0pt}
%\setlength{\@fpbot}{0pt}
%\makeatother

%%% Suggested placement of figure 10
\begin{figure}[t!]
    \centering
    \subfloat[\label{subfig:pendulums_positions_step19}\centering Step $T=200$.]{{\includegraphics[width=0.28\textwidth]{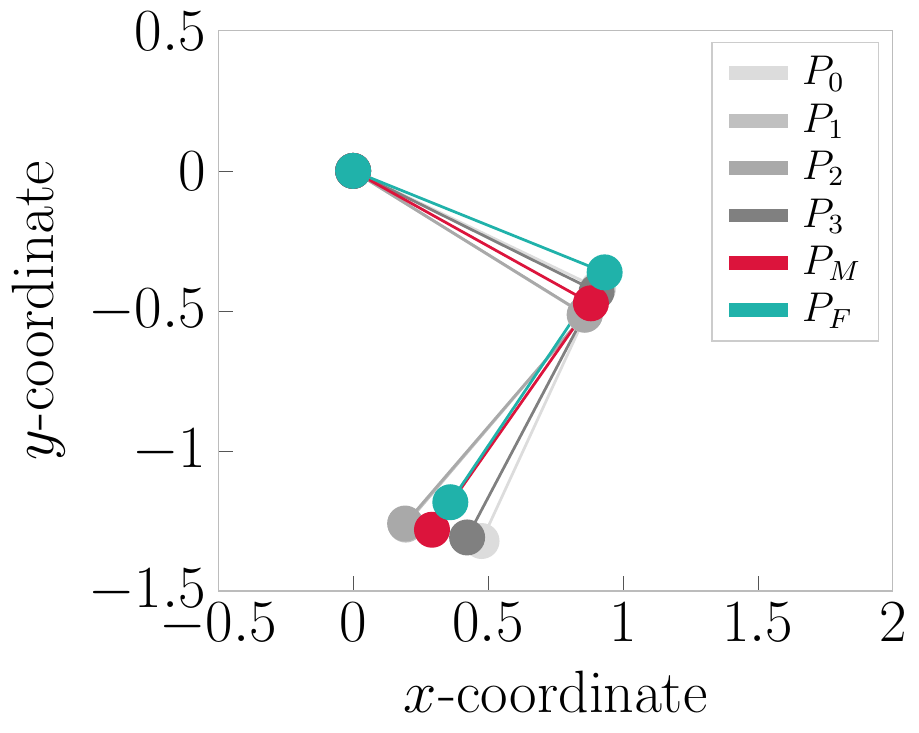} }}%
    \qquad
    \subfloat[\label{subfig:phase_diagram_with_bifurcation}\centering Projection of the pendulum trajectories on the $\theta_1\theta_2$-plane.]{{\includegraphics[width=0.265\textwidth, trim={2pt, 2pt, 0pt, 0pt}]{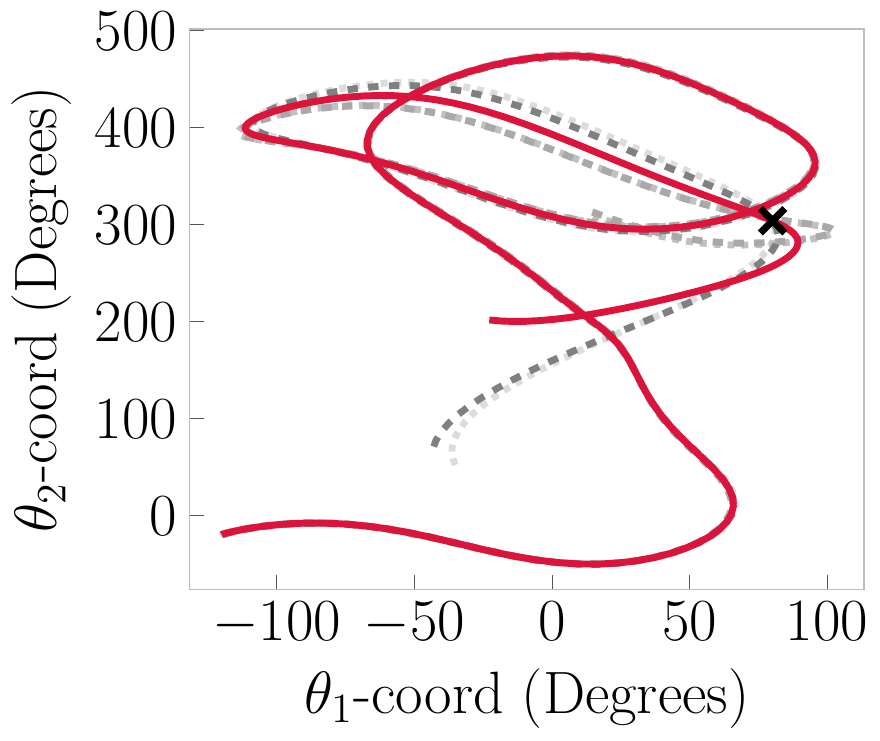} }}%
    \qquad
    \subfloat[\label{subfig:pendulums_positions_step20}\centering Step $T=210$.]{{\includegraphics[width=0.28\textwidth]{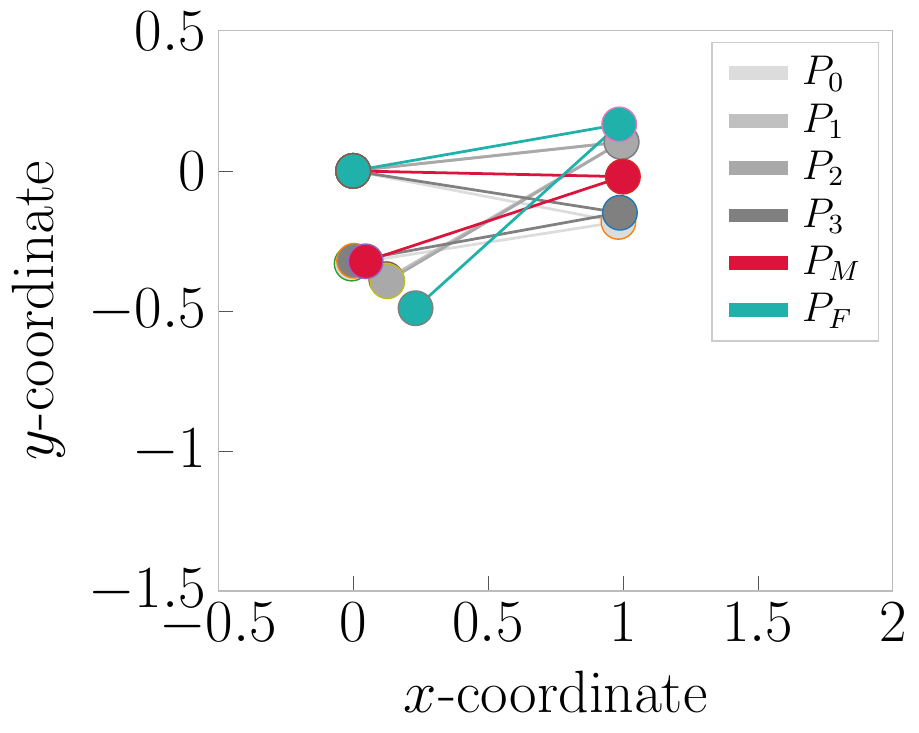} }}%
    \figcap
    \caption{(a) Pendulum positions at $T=200$ and $(c)$ The positions at $T=210$. In (a) and (c), $P_M$ is the main pendulum with initial conditions $\sbs_0^\textit{high}$, while $P_F$ is the \methodE forecast of the pendulum position. Similarly, $P_0$-$P_3$ are four training pendulums with a perturbation of $0.5\%$ on the initial angles $\theta_1$ and $\theta_2$ of the main pendulum. (b) Projection of the training data on the $\theta_1\theta_2$-plane. The thick red line is the main pendulum corresponding to $P_M$ and the four grey dotted lines are the test pendulums $P_0$-$P_3$, where the X indicates the time $T = 205$.}
    \label{fig:Illustration_of_bifurcation_point}%
\end{figure}

%###########################################################################
%%%%%%%%%%%%%%%%%%%%%%%%%%%%%                                              #
%%%%%%%%% Outlook %%%%%%%%%%%                                              #
%%%%%%%%%%%%%%%%%%%%%%%%%%%%%                                              #
%###########################################################################
\section{Outlook}
\label{section:Outlook}
Further development of \method is intended with focus on four objectives.

\begin{enumerate}[label=(O\arabic*)]
    \item \label{Track_error} Augmentation of the DCT to track the error at each node
    \item \label{Improve_estim} Improve the estimator in Def. \ref{def:sufficien_training_points} and Eq. \ref{eq:estimator_of_covering_fraction}.
    \item \label{refine_prev_lvl} Refinement of previously fitted levels in the LP as new data arrives.
    \item \label{further_theory} Further theoretical analysis of the LP.
\end{enumerate}

Considering objective \ref{Track_error} we intend to develop the DCT to track the error at each node. This way the growth can be restricted in regions where the error is small, which allows for more focus on regions where the error is large. The intention is that this will reduce the problem complexity even further, while also increasing the precision. Regarding objective \ref{Improve_estim}, a drawback with the estimator in Eq. \ref{eq:estimator_of_covering_fraction} was already mentioned in Remark \ref{remark:weakness_with_cf_estimator}. Furthermore, for the estimator in Def. \ref{def:sufficien_training_points}, we intend to implement and evaluate alternative ways to estimate the convergence of the matrices. Another focus area will be objective \ref{refine_prev_lvl}, as we believe new information may be revealed as new training data arrive, and refinement of previously fitted levels can therefore be beneficial. Finally, the theoretical analysis in objective \ref{further_theory} will focus on analyzing the generalization error for the LP, particularly in combination with the adaptive sub-sampling scheme.

%###########################################################################
%%%%%%%%%%%%%%%%%%%%%%%%%%%%%                                              #
%%%%% Acknowledgement %%%%%%%                                              #
%%%%%%%%%%%%%%%%%%%%%%%%%%%%%                                              #
%###########################################################################
\section{Acknowledgement}
We especially would like to thank Prof. Pieter Abeel at UC Berkeley and Asst. Prof. Sicun Gao at UC San Diego for their input on the double pendulum system, and for providing a code example for this system. We would also like to thank Sami Ortoleva at UC San Diego for his discussion on the analysis of the damped cover-tree. AO is part of the Simula-UCSD-UiO Research and Ph.D. training program (SUURPh), an international collaboration in computational biology and medicine funded by the Norwegian Ministry of Education and Research, {\v Z}K is funded by UK EPSRC grant EP/T000864/1, AC is funded by NSF DMS 1819222, 2012266, and Russell Sage Foundation grant 2196 and YF is funded by the NIH grant NINDS (PHS) U19NS107466 Reverse Engineering the Brain Stem Circuits that Govern Exploratory Behavior.

\bibliography{References}
\newpage

%%%%%%%%%%%%%%%%%%%%%%%%%%%%%
%%%%%%%% Appendix A %%%%%%%%%
%%%%%%%%%%%%%%%%%%%%%%%%%%%%%
\section*{Appendix A. Algorithms}
\label{appendix:A}
% format the equation environment
\renewcommand{\theequation}{A.\arabic{equation}}

% format the algorithm environment
\renewcommand{\thealgorithm }{A.\arabic{algorithm}}

% reset the counter
\setcounter{equation}{0}
\setcounter{algorithm}{0}

We here denote nodes by $p,q, c$ and $\xbs_p,\,\xbs_q, \xbs_c\in\CX\subset\bbR^D$ are the corresponding points.

\begin{algorithm}[htb]
\caption{\textsc{Insert}(point $q$, node $p$, level $l$)}
\begin{algorithmic}[1]
\STATE We assume $q$ already satisfies $\|\xbs_q-\xbs_p\| \leq 2^{-l}r_0$.
\IF{$\|\xbs_q-\xbs_c\| > 2^{-(l+1)}r_0$ for all $c\in \textit{Children}(p)$}
    \STATE Insert $q$ into $\textit{Children}(c)$.
    \STATE\textsc{Update\_CoverFraction}(\textit{Parent}($Q_l$), "No parent found")
    \STATE \textbf{Break}
\ELSIF{$\|\xbs_q-\xbs_c\| < 2^{-(l+1)}r_0$ for some $c\in\textit{Children}(p)$}
    \STATE Consider all children of $c$, namely $\textit{Children}(c)$
    \IF{$\textit{Children}(c)$ is empty}
        \IF{Covering fraction of $p$, Def. \ref{def:cover_fraction}, satisfy $\cofp\ge \CD_{\cof}$ for some threshold $\CD_{\cof}$}
            \STATE Insert q into $\textit{Children}(c)$
            \STATE \textbf{Break}
        \ELSE
            \STATE \textsc{Update\_CoverFraction}(p, "parent found")
            \COMMENT{c is found to be a potential parent. However, since $\cofp < \CD_{\cof}$ we can not add $q$ to $\textit{Children(c)}$}
        \ENDIF
    \ELSE
        \STATE\textsc{Insert}($q$, $c$, $l+1$)
    \ENDIF
\ENDIF
\end{algorithmic}
\label{alg:DampedCoverTree}
\end{algorithm}

\begin{algorithm}[htb!]
\caption{\textsc{StreaMRAK}(point $\xbs$, target $y$)}
\begin{algorithmic}[1]
    \STATE Let $l$ be the level. Let $p_{(0)}$ be the root node, $r_0$ the radius of the root node.
    \STATE \textbf{Sub-sampling thread}
    \STATE Insert $\xbs$ into the cover tree with $\textsc{Insert}(\xbs, p_{(0)}, l=0)$. \COMMENT{See Alg. \ref{alg:DampedCoverTree}}
    \IF{a new level has $\coflvl \geq \mathcal{D}_{level}$.}
        \STATE Extract the landmarks at level $l$ as sub-samples, namely $\Gamma^{(l)}_{m^{(l)}}$.
    \ENDIF
    \STATE \textbf{Training thread}
    \STATE Consider level $l$ and assume that the landmarks $\Gamma^{(l)}_{m^{(l)}}$ are extracted.
    \WHILE{$l$ is not sufficiently covered with training points according to Def. \ref{def:sufficien_training_points}.}
        \STATE Update $ \big[ (\VK^{(l)}_{nm})^\top \VK^{(l)}_{nm} \big]_{ij}$ and $\zbs^{(l)}_i$ according to Eq. \eqref{eq:updateFormula_KnmTKnm} and Eq. \eqref{eq:updateFromualte_Zm} as new samples $(\xbs,y)$ arrive, using the landmarks in $\widetilde\Gamma^{(l)}_m$ from Def. \ref{def:landmarks}.
            \STATE Continuously check if matrices have converged.
        \IF{Matrices converge according to Def. \ref{def:sufficien_training_points}}
        \STATE Update the \method regression model $\widetilde{f}^{(L)}$, by including the correction term $s^{(l)}$ into the Laplacian pyramid, as described in Section \ref{section:TheLaplacianPyramid}. Let $L=l$ and update $l=l+1$.
    \ENDIF
\ENDWHILE
\end{algorithmic}
\label{alg:PseudoCode_StreaMRAK}
\end{algorithm}

\begin{algorithm}[htb]
\caption{\textsc{Update\_CoverFraction}(node $p$, string s)}
\begin{algorithmic}[1]
\IF{s="No parent found"}
    \STATE Update covering fraction of $p$ with $\cofp = (1-\alpha)\cofp$
\ELSIF{s= "parent found"}
    \STATE Update covering fraction of $p$ with $\cofp = (1-\alpha)\cofp + \alpha$
\ENDIF
\end{algorithmic}
\label{alg:updateCoverFraction}
\end{algorithm}

\FloatBarrier

%%%%%%%%%%%%%%%%%%%%%%%%%%%%%
%%%%%%%% Appendix B %%%%%%%%%
%%%%%%%%%%%%%%%%%%%%%%%%%%%%%
\section*{Appendix B. Preparatory material}
\label{appendixB}

% format the equation environment
\renewcommand{\theequation}{B.\arabic{equation}}

% format the theorem environments
\renewcommand{\thetheorem }{B.\arabic{theorem}}

% reset the counter
\setcounter{equation}{0}
\setcounter{theorem}{0}

We offer preparatory material on the damped cover-tree and kernel methods.

\subsection*{B.1 Preparatory material on the damped cover-tree}
This section shows how the recursive formula in Eq. \ref{eq:estimator_of_covering_fraction} approximates the weighted average of the outcome of the last $N$ random trails. 
Where the trails are as described in Section \ref{subsection:DCT_construction}. 
By expanding Eq. \ref{eq:estimator_of_covering_fraction} we have $ (\cofp)_t = (1-\alpha)^t(\cofp)_1 + \alpha \sum_{i=1}^{t-1}(1-\alpha)^i\mathbbm{1}_{\mathcal{B}_c}(\xbs_{t-i})$. Since $(1-\frac{1}{N})^N\approx 1/e$, the first term becomes negligible when $t \gg N$. Similarly, all terms $i > N$ in the sum becomes negligible. This leaves,
\begin{equation*}
    (\cofp)_t \approx \frac{1}{N}\sum_{i=1}^N\bigg(1-\frac{1}{N}\bigg)^i\mathbbm{1}_{\mathcal{B}_c}(\xbs_{t-i})
\end{equation*}
which is a weighted average of the outcome of the $N$ last draws as claimed.

\begin{remark}
We mention a weakness of the estimator in Eq. \eqref{eq:estimator_of_covering_fraction}. As follows from Algorithm \ref{alg:DampedCoverTree}, every time a new point $\xbs$ is not covered by the existing children, a new child is added. This consequently updates $\CB_c$, leading to the posterior distribution $\text{Prob}(\mathbbm{1}_{\CB_c}(\xbs)=0|\xbs)$ to changed every time $\mathbbm{1}_{\CB_c}(\xbs)=0$. 
\label{remark:weakness_with_cf_estimator}
\end{remark}

\subsection*{B.2 Preparatory material on Kernel methods}
Kernel methods in the context of reproducing kernel Hilbert spaces (RKHS) offer a powerful approach to machine learning with a well-established mathematical foundation \cite{hofmann2008kernel, scholkopf2002learning}. In this paper we consider an input space $\CX \subset \bbR^D$, a corresponding target space $\CY \subset \bbR$ and let $\rho$ be the probability distribution on $\CX\times\CY$. Furthermore, we assume an RKHS $\CH_k$ generated by a positive definite kernel $k:\CX\times\CX\rightarrow\bbR$. In other words, the eigenvalues $\sigma_i,\dots,\sigma_n$ of the corresponding kernel matrix $\VK_{nn}=(k(\xbs_i,\xbs_j))_{i,j=1}^n$ satisfies $\sigma_i > 0$ for all $i\in n$. In this setting the inner product between two feature vectors $\phi(\xbs), \phi(\xbs^\prime) \in \CH_k$ satisfies the property that $\DP{\phi(\xbs)}{\phi(\xbs^\prime)}_{\CH_k} = k(\xbs, \xbs^\prime)$. This relation, known as the "kernel trick" \cite{aiserman1964theoretical, boser1992training}, effectively circumvents the need for explicit construction of non-linear mappings $\phi$. 

Given a training set $\{(\xbs_i,y_i):i\in[n]\}$ sampled according to $\rho$ with $\Gamma_n=\{\xbs_i:i\in[n]\}$, we formulate the kernel ridge regression (KRR) problem as
\begin{equation}
    \widehat f_{n,\lambda} =\argmin_{f\in\widehat\CH_n} \frac{1}{n}\sum_{i=1}^n (f(\xbs_i)-y_i)^2  + \lambda \N{f}_\CH^2,
    \label{eq:KRR_argmin_formulation}
\end{equation}
where $\lambda>0$ is a regularisation parameter and $\widehat\CH_n=\overline{\textrm{span}}\{k(\cdot,\xbs_i):i\in[n]\}$ is a finite-dimensional subspace of $\CH_k$. What is more, for all $f\in\widehat\CH_n$ the Representer theorem \cite{kimeldorf1970correspondence, scholkopf2001generalized} guarantees that there exists coefficients $\alpha_1,\ldots,\alpha_n$ such that the solution to Eq. \eqref{eq:KRR_argmin_formulation} is on the form
\begin{equation*}
    f(\xbs) = \sum_{i=1}^n \alpha_i k(\xbs,\xbs_i).
\end{equation*}
Computing the KRR estimator is therefore reduced to solving the linear system
\begin{equation*}
    (\VK_{nn}+\lambda \VI_n)\Valpha = \ybs,
    %\label{appendix_eq:KRRlinearSystem}
\end{equation*} 
where $\ybs=(y_1,\ldots,y_n)^\top$, $\Valpha=(\alpha_1,\ldots,\alpha_n)^\top$, and $[\VK_{nn}]_{ij}=k(\xbs_i,\xbs_j)$.

%%%%%%%%%%%%%%%%%%%%%%%%%%%%%
%%%%%%%% Appendix C %%%%%%%%%
%%%%%%%%%%%%%%%%%%%%%%%%%%%%%
\section*{Appendix C. Proofs and definitions}
\label{appendixC}

% format the equation environment
\renewcommand{\theequation}{C.\arabic{equation}}

% format the theorem environments
\renewcommand{\thetheorem }{C.\arabic{theorem}}

% reset the counter
\setcounter{equation}{0}
\setcounter{theorem}{0}

\begin{lemma}
Consider a domain $\CX\in \bbR^D$, a ball $\CB(\xbs_p, r)\subset \CX$ and let $S=\{\xbs_i,\,\xbs_j\in \CB(\xbs_p,r) | \|\xbs_i-\xbs_j\|\geq \delta \, \text{for}\,\, i\neq j \}$. Furthermore, let the doubling dimension of the set $S$ be $\dd\coloneqq\dd(S, r)$. We
let $c_d\coloneqq |S|$ when $\cofp = 1$. We then have $ 2^{\dd-1} \leq c_d \leq 5^\dd$.
\label{lemma:Bound_on_c_D}
\end{lemma}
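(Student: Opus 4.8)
The plan is to establish the two bounds separately; both ultimately come down to the identity $\dd=\lceil\log_2\kappa(S,\xbs_p,r)\rceil$ from Def.~\ref{def:doubling_dim_at_range}, together with the fact that the points of $S$ are (essentially) $r/2$-separated — this is exactly the level-$(l+1)$ separation invariant~\ref{inv:separation} applied to the children of a node $p$ living at a level of radius $r$ — and that the hypothesis $\cofp=1$ simply identifies $S$ as a set whose half-radius balls $\{\CB(\xbs_i,r/2)\}_{\xbs_i\in S}$ cover $\CB(\xbs_p,r)$.

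For the \emph{lower bound} I would note that $S$ is itself a covering of $\CB(\xbs_p,r)$ in $S$ in the sense of Def.~\ref{def:covering_number}: we have $S\subset\CB(\xbs_p,r)$ and trivially $S=S\cap\CB(\xbs_p,r)\subset\bigcup_{\xbs_i\in S}\CB(\xbs_i,r/2)$ since $\xbs_i\in\CB(\xbs_i,r/2)$ (when $\cofp=1$ this also follows from the half-balls covering the whole ball). Hence $\kappa(S,\xbs_p,r)\le|S|=c_d$, and since $\dd=\lceil\log_2\kappa(S,\xbs_p,r)\rceil$ forces $\kappa(S,\xbs_p,r)>2^{\dd-1}$, we get $c_d>2^{\dd-1}$, in particular $c_d\ge 2^{\dd-1}$.

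For the \emph{upper bound} I would use a packing/covering argument. Since distinct points of $S$ are more than $r/2$ apart, any ball of radius $r/4$ contains at most one point of $S$ (two such points would be within $r/2$ of each other). Covering $S\subset\CB(\xbs_p,r)$ by at most $\kappa(S,\xbs_p,r)\le 2^{\dd}$ balls of radius $r/2$, and then covering each of those by at most $2^{\dd}$ balls of radius $r/4$ (a second application of the doubling property, now at scale $r/2$), one counts $c_d=|S|\le 2^{\dd}\cdot 2^{\dd}=4^{\dd}\le 5^{\dd}$.

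The lower bound is routine; the main obstacle lies in the scale bookkeeping of the upper bound. The second covering step invokes the estimate ``a ball is covered by $2^{\dd}$ half-balls'' \emph{at radius $r/2$}, whereas $\dd$ is defined at radius $r$ — and the paper itself stresses that the doubling dimension can change with the radius. I would resolve this precisely as the paper does elsewhere, by reading $\dd$ as a bound over the relevant range of radii (the interval version of Def.~\ref{def:doubling_dim_at_range}), or by estimating the two-scale covering number of $\CB(\xbs_p,r)$ by radius-$r/4$ balls directly. A secondary subtlety is the boundary case of the separation (``$\ge\delta$'' rather than a strict inequality), where a closed radius-$r/4$ ball might contain two antipodal points of $S$; one then either shrinks the packing radius slightly (e.g.\ to $r/5$, presumably the source of the constant $5$) or carries a bounded-overlap factor — and it is exactly this, plus the slack in the dyadic covering count, that the gap between the clean bound $4^{\dd}$ and the stated $5^{\dd}$ absorbs.
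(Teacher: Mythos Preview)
Your lower bound argument is exactly what the paper does --- it just says ``the lower bound follows from the definition of the doubling dimension~\ref{def:doubling_dim_at_range}'', and you have unpacked that correctly.

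For the upper bound, however, you take a genuinely different route. The paper does \emph{not} iterate the doubling definition. Instead it invokes the Euclidean volume-packing estimate of Lemma~\ref{lemma:upperBoundOnNumberOfPointsInBall}: a set of points in $\CB(\xbs_p,r)\subset\bbR^D$ that are pairwise at least $\delta$ apart has cardinality at most $(2r/\delta+1)^D$, and with $\delta=r/2$ this is exactly $5^D$. That is a one-line volume comparison with no scale bookkeeping at all --- but note that the exponent it produces is the ambient dimension $D$, not the intrinsic doubling dimension $\dd$, so the paper is tacitly identifying the two here. Your iterated-covering argument stays entirely inside the doubling framework and actually yields the sharper constant $4^{\dd}$, at the price of the two issues you correctly flagged (invoking the doubling bound at radius $r/2$ when $\dd$ is defined at radius $r$, and the closed-versus-open separation edge case). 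So your approach is more faithful to the intrinsic-dimension statement of the lemma, while the paper's is shorter and sidesteps the scale issue by appealing to ambient volume --- and incidentally explains why the base $5=2\cdot 2+1$ appears rather than $4$.
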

\begin{proof}
The upper bound on $c_d$ follows from Lemma \ref{lemma:upperBoundOnNumberOfPointsInBall} with $r=r_0$ and $\delta=r_0/2$. The lower bound follows from the definition of the doubling dimension \ref{def:doubling_dim_at_range}.
\end{proof}

\begin{lemma} Let $\dbs^{(l)}$ be the residual at level $l$ as defined in Eq. \eqref{eq:residual_at_level}. We then have,
\begin{equation*}
    \dbs^{(l+1)} = (\VI-\VK_{nn}^{(l)}(\VK_{nn}^{(l)}+\lambda n\VI)^{-1})\dbs^{(l)} 
    % = (\VI-\VP_{nn}^{(l)})\dbs^{(l)}.
\end{equation*}
% Where we introduced the short hand $\VP_{nn}^{(l)}\coloneqq\VK^{(l)}_{nn}(\VK^{(l)}_{nn}+\lambda n \VI)^{-1}$.
\label{lemma:residual_expression}
\end{lemma}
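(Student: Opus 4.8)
The plan is to unwind the recursive definitions of the residuals and of the Laplacian-pyramid estimator, reducing the statement to a single algebraic identity. First I would record the effect of one correction term on the training points. By Eq.~\eqref{eq:LP_approx_at_specific_level} together with $\Valpha^{(l)}=(\VK^{(l)}_{nn}+\lambda n\VI)^{-1}\dbs^{(l)}$, evaluating $s^{(l)}$ at $\xbs_1,\dots,\xbs_n$ gives
\[
s^{(l)}([\xbs_n]) \;=\; \VK^{(l)}_{nn}\Valpha^{(l)} \;=\; \VK^{(l)}_{nn}(\VK^{(l)}_{nn}+\lambda n\VI)^{-1}\dbs^{(l)} \;=\; \VP^{(l)}_{nn}\dbs^{(l)},
\]
where $\VP^{(l)}_{nn}$ is exactly the matrix appearing in Thm.~\ref{thm:LP_KRR_convergence} and in the claimed identity; note that $\VK^{(l)}_{nn}+\lambda n\VI$ is invertible since $\VK^{(l)}_{nn}$ is positive semidefinite and $\lambda n>0$.

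Next I would exploit the telescoping structure of the estimator. From Eq.~\eqref{eq:laplacian_pyramid_model} we have $\widehat f^{(l)}([\xbs_n]) = \widehat f^{(l-1)}([\xbs_n]) + s^{(l)}([\xbs_n])$ for $l\ge 1$. Combining this with the definition of the residual in Eq.~\eqref{eq:residual_at_level}, for $l\ge 1$,
\[
\dbs^{(l+1)} \;=\; \ybs-\widehat f^{(l)}([\xbs_n]) \;=\; \bigl(\ybs-\widehat f^{(l-1)}([\xbs_n])\bigr) - s^{(l)}([\xbs_n]) \;=\; \dbs^{(l)} - s^{(l)}([\xbs_n]).
\]
Substituting the identity from the first step then yields $\dbs^{(l+1)} = \dbs^{(l)} - \VP^{(l)}_{nn}\dbs^{(l)} = (\VI-\VP^{(l)}_{nn})\dbs^{(l)}$, which is the claim.

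Finally I would dispatch the base case $l=0$ separately, since there $\widehat f^{(0)}=s^{(0)}$ and $\dbs^{(0)}=\ybs$: this gives $\dbs^{(1)}=\ybs-s^{(0)}([\xbs_n]) = \dbs^{(0)} - \VP^{(0)}_{nn}\dbs^{(0)} = (\VI-\VP^{(0)}_{nn})\dbs^{(0)}$, matching the general formula. There is no genuine obstacle here; the only point requiring care is the index bookkeeping between the level at which a correction term $s^{(l)}$ is fit and the level at which the residual is updated to $\dbs^{(l+1)}$. An equivalent route, if one prefers to avoid the case split, is a short induction on $l$ using only Eqs.~\eqref{eq:laplacian_pyramid_model}--\eqref{eq:residual_at_level} and the first-step identity.
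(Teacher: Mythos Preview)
Your proposal is correct and follows essentially the same approach as the paper: both establish $s^{(l)}([\xbs_n])=\VK^{(l)}_{nn}(\VK^{(l)}_{nn}+\lambda n\VI)^{-1}\dbs^{(l)}$, then use the telescoping relation $\dbs^{(l+1)}=\dbs^{(l)}-s^{(l)}([\xbs_n])$ together with a separate check of the base case. The paper phrases the main step as an induction, but the inductive hypothesis is not actually used in the step, so the two arguments are the same up to presentation.
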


\begin{proof}
Denote $\sbs^{(l)}=s^{(l)}([\xbs_n])$, and note that $\sbs^{(l)}=\VK_{nn}^{(l)}(\VK_{nn}^{(l)}+\lambda n\VI)^{-1}\dbs^{(l)}$.
For $l=1$, we have
\begin{align*}
\dbs^{(1)}&=\ybs-\sbs^{(0)}=\ybs-\VK_{nn}^{(l)}\Valpha^{(0)}=(\VI-\VK_{nn}^{(0)})(\VK_{nn}^{(0) }+\lambda n \VI)^{-1}\ybs.
\end{align*}
We proceed by induction. Assume the statement holds for an $l\geq2$.
We now have
\begin{align*}
\dbs^{(l+1)}=\ybs - \sum_{j=0}^l\sbs^{(j)}=\dbs^{(l)}-\sbs^{(l)}=\dbs^{(l)}-\VK_{nn}^{(l)}(\VK_{nn}^{(l)}+\lambda n\VI)^{-1}\dbs^{(l)}
=(\VI-\VK_{nn}^{(l)}(\VK_{nn}^{(l)}+\lambda n\VI)^{-1})\dbs^{(l)}.
\end{align*}
% \qedsymbol
\end{proof}

\subsection*{C.1 Proof of Thm. \ref{thm:LP_KRR_convergence}}
Let $\VP_{nn}^{(l)}\coloneqq\VK^{(l)}_{nn}(\VK^{(l)}_{nn}+\lambda n \VI)^{-1}$.
By definition of $\dbs^{(l)}$ and Lemma \ref{lemma:residual_expression} it follows
\begin{equation}
    f([\xbs_n])-\widehat{f}^{(l+1)}([\xbs_n])=\dbs^{(l+1)}=(\VI-\VP_{nn}^{(l)})\dbs^{(l)} =(\VI-\VP_{nn}^{(l)})(f([\xbs_n])-\widehat{f}^{(l)}([\xbs_n])).
    \label{eq:residual_relation_before_norm}
\end{equation}
We then have
\begin{equation}
    \|\widehat{f}^{(l+1)}([\xbs_n])-f([\xbs_n])\| \leq \|\VI-\VP_{nn}^{(l)}\| \|\widehat{f}^{(l)}([\xbs_n])-f([\xbs_n])\|.
    \label{eq:appendix_res_ident}
\end{equation}

Consider the SVD $\VK^{(l)}_{nn} = \VU\VSigma \VU^\top$ where $\VSigma= \diag{(\sigma_{l,i})}$ and $\sigma_{l,n} \leq \dots \leq \sigma_{l,1}$. We then have
\begin{align}
\begin{split}
    \|\VI-\VP^{(l)}_{nn}\| &=\bigg\| \VU\diag{\Big(\frac{n\lambda}{n\lambda+\sigma_{l,i}}\Big)}\VU^\top\bigg\|=\bigg\|\diag{\Big(\frac{n\lambda}{ n\lambda+\sigma_{l,i}}\Big)}\bigg\| \\
    &= \frac{n\lambda}{n\lambda+\sigma_{l,n}}\coloneqq 1-\varepsilon(l),
\end{split}
\label{eq:defining_eps_l}
\end{align}
and Thm. \ref{thm:LP_KRR_convergence} follows recursively from Eq. \eqref{eq:appendix_res_ident} and Eq. \eqref{eq:defining_eps_l}.
\hfill\qedsymbol

\subsection*{C.2 Proof of Thm. \ref{thm:LP_KRR_convergence_rate}}
To bound the smallest eigenvalue of the kernel matrix $[\VK^{(l)}_{nn}]_{ij} = \Phi(\|\xbs_i-\xbs_j\|)$, namely $\sigma_{l,n}$, we will assume that there exists a lower bound on the  minimal distance between any two points $\xbs_i,\xbs_j\in \CX$, namely $\delta\coloneqq\min\limits_{i\neq j\in \CX}\N{\xbs_i-\xbs_j}>0$. Consider the Gaussian $\Phi(\xbs)=\exp(-\beta\|\xbs\|_2^2)$, $\beta>0$, with the Fourier transform $\widehat{\Phi}(\omega) = (\pi/\beta)^{D/2}\exp(-\|\omega\|_2^2/4\beta)$. From \cite[Corollary 12.4]{Wendland2004Scattered} we have the bound
\begin{equation*}
    \sigma_{l,n} \geq C_D 2^D (2\beta)^{-D/2} \delta^{-D}\exp(-4M_D^2/(\delta^2\beta)),
\end{equation*}
where
\begin{equation*}
    M_D = 12\bigg(\frac{\pi\Gamma^2(D/2+1)}{9}\bigg)^{1/(D+1)} \quad \text{and} \quad
    C_D = \frac{1}{2\Gamma(D/2+1)}\bigg(\frac{M_D}{2^{3/2}}\bigg)^D.
\end{equation*}
With $\beta = (\sqrt{2}2^{-l}r_0)^{-2}$ we then have
\begin{align*}
    \begin{split}
    \sigma_{l,n} &\geq C_D 2^D 2^{-Dl}\bigg(\frac{r_0}{\delta}\bigg)^D\exp\big(-(2\sqrt{2}M_D)^2(r_0/\delta)^2 4^{-l}\big) \\
    & = C_{1,D} 2^{-Dl}\exp\big(-C_{2,D} 4^{-l}\big) \coloneqq B(l),
    \end{split}
\end{align*}
where we define
\begin{equation*}
    C_{1,D} = \frac{1}{2}(6\sqrt{2})^D\Gamma(D/2+1)^{\frac{D-1}{D+1}}\bigg(\frac{\pi}{9}\bigg)^{\frac{D}{D+1}}\bigg(\frac{r_0}{\delta}\bigg)^D \quad \text{and} \quad C_{2,D} = 1152\bigg(\frac{\pi\Gamma^2(D/2+1)}{9}\bigg)^{\frac{2}{D+1}}\bigg(\frac{r_0}{\delta}\bigg)^2.
\end{equation*}
The first bound in Thm. \ref{thm:LP_KRR_convergence_rate} follows from this result. \hfill\qedsymbol
\begin{remark}
In \cite[Thm. 12.3]{Wendland2004Scattered} they also offer an a fortiori bound corresponding to $M_D = 6.38D$, $C_{1,D}=\frac{1}{2}\big(\frac{12.76}{2^{3/2}}\big)^D\big(\frac{D^D}{\Gamma(D/2+1)}\big)\big(\frac{r_0}{\delta}\big)^D$ and $C_{2,D}=(12.76\sqrt{2}D)^2(r_0/\delta)^2$.
\label{remark:fortiori_bound}
\end{remark}

\begin{corollary}
We note that $B(l)$ has a maximum at
\begin{align*}
\begin{split}
    l^* &= \frac{1}{2}\log_2\bigg(\frac{C_{2,D}\log 4}{D\log 2}\bigg) =\log_2\bigg(\sqrt{\frac{D}{2}}\bigg(\frac{r_0}{\delta}\bigg)\bigg) + \log_2\bigg(\frac{4M_D}{D}\sqrt{2}\bigg)
\end{split}
\end{align*}
and is monotonically increasing with $l$ on the interval $l\in(0, l^*)$. Furthermore, with the a fortiori expression for $M_D$ from Remark \ref{remark:fortiori_bound} we have
\begin{equation*}
    l^* = \log_2\bigg(\sqrt{\frac{D}{2}}\bigg(\frac{r_0}{\delta}\bigg)\bigg) + \log_2\bigg(25.52\sqrt{2}\bigg).
\end{equation*}
\label{corollary:Monotonicaly_increasing}
\end{corollary}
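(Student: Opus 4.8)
The plan is to study $B(l)=C_{1,D}\,2^{-Dl}\exp(-C_{2,D}4^{-l})$ through its logarithm. Since $B(l)>0$ for every $l$, maximizing $B$ is the same as maximizing $g(l):=\log B(l)=\log C_{1,D}-(D\log 2)\,l-C_{2,D}4^{-l}$. First I would differentiate, using $\frac{d}{dl}4^{-l}=-(\log 4)\,4^{-l}$, to get $g'(l)=-D\log 2+(C_{2,D}\log 4)\,4^{-l}$ and $g''(l)=-(C_{2,D}(\log 4)^2)\,4^{-l}<0$. Thus $g$ is strictly concave on $\bbR$, so it has a unique critical point, which is its global maximum, and $g$ (hence $B$) is strictly increasing on the half-line lying to the left of that point.

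Next I would solve $g'(l^*)=0$, which gives $4^{-l^*}=\frac{D\log 2}{C_{2,D}\log 4}$, i.e. $2^{2l^*}=\frac{C_{2,D}\log 4}{D\log 2}$, so $l^*=\tfrac12\log_2\!\big(\frac{C_{2,D}\log 4}{D\log 2}\big)$, the first displayed expression. Together with the concavity observation this already shows $B$ is monotonically increasing on $(0,l^*)$ (the interval being empty, and the claim vacuous, when $l^*\le 0$), so no separate sign analysis of $l^*$ is needed.

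It then remains only to rewrite $l^*$ in the two closed forms. Here I would use the identity $C_{2,D}=(2\sqrt2\,M_D)^2(r_0/\delta)^2=8M_D^2(r_0/\delta)^2$, which is precisely how $C_{2,D}$ arose in the proof of Thm.~\ref{thm:LP_KRR_convergence_rate} (with $M_D=12(\pi\Gamma^2(D/2+1)/9)^{1/(D+1)}$). Substituting this and $\log 4/\log 2=2$ gives $\frac{C_{2,D}\log 4}{D\log 2}=\frac{16M_D^2(r_0/\delta)^2}{D}$, so $l^*=\tfrac12\log_2\!\big(\frac{16M_D^2(r_0/\delta)^2}{D}\big)=\log_2\!\big(\tfrac{4M_D}{\sqrt D}\big)+\log_2(r_0/\delta)$. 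A short regrouping of the constants ($\log_2(4M_D/\sqrt D)=\log_2\sqrt{D/2}+\log_2(\tfrac{4M_D}{D}\sqrt2)$) puts this in the stated shape $\log_2\!\big(\sqrt{D/2}\,(r_0/\delta)\big)+\log_2\!\big(\tfrac{4M_D}{D}\sqrt2\big)$. Finally, inserting the a fortiori value $M_D=6.38D$ from Remark~\ref{remark:fortiori_bound} turns the factor $\tfrac{4M_D}{D}\sqrt2$ into $25.52\sqrt2$, yielding the last expression.

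I do not expect any real obstacle: the only mathematical content is the one-line concavity/first-derivative computation, and the rest is bookkeeping of the constants $C_{2,D}$ and $M_D$. The single point worth flagging is that the monotonicity assertion is made only on a left-neighborhood of the maximizer $l^*$ and is vacuous if $l^*\le 0$; the strict concavity of $g$ covers all cases uniformly.
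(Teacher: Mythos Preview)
Your proposal is correct and is precisely the natural calculus computation the paper leaves implicit: the corollary is stated in the paper without a separate proof, and your log-derivative/concavity argument together with the constant bookkeeping for $C_{2,D}=(2\sqrt2 M_D)^2(r_0/\delta)^2$ and $M_D=6.38D$ fills it in exactly as intended.
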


%We note that \cite[Thm. 12.3]{Wendland2004Scattered} also offer a fortiori bound with
%\begin{equation*}
%    M_D = 6.38D,
%\end{equation*}
%which might be more interpretable to the reader.

%%%%%%%%%%%%%%%%%%%%%%%%%%%%%%%%%%%%%%%%%%%%%%%%%%%%%%%%%%%%%%%%%%%%
%%%%%%%%%%%%%%%%%%%%%%%%%%%%%%%%%%%%%%%%%%%%%%%%%%%%%%%%%%%%%%%%%%%%
When the level $l$ becomes sufficiently large, the kernel matrix $\VK^{(l)}_{nn}$ becomes diagonally dominant, and we can therefore bound the eigenvalues using Garschgorins Theorem \cite[Thm. 1.1]{gomez2006more}, which gives
\begin{equation}
    |\sigma_{l,i} - [\VK^{(l)}_{nn}]_{jj}| = |\sigma_{l,i} - 1| < \sum_{\substack{q=1,\\ q\neq j}}^n |[\VK^{(l)}_{nn}]_{jq}|\quad \text{for}\quad i,j\in [n].
    \label{eq:gaurchinThm_bounding_eValues_close_to_1}
\end{equation}

To find a more explicit bound, we analyze the sum on the right-hand side using Lemma \ref{lemma:upperBoundOnNumberOfPointsInBall}.

\begin{lemma} Consider a ball $\CB(\xbs,r) \in \mathbb{R}^D$ and let $\delta>0$.
The number of points in any (discrete) set of points within $\CB(\xbs,r)$ that are at least $\delta$ apart, $S=\{\xbs_i\in \CB(\xbs,r) | d(\xbs_i, \xbs_j)\geq \delta \, \text{for}\,\, i\neq j \}$, is bounded by $ |S| \leq \bigg( \frac{2r}{\delta}+1\bigg)^D$.
\label{lemma:upperBoundOnNumberOfPointsInBall}
\end{lemma}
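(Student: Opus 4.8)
The plan is to use the standard volume‑packing (pigeonhole) argument. Since any two distinct points of $S$ are at distance at least $\delta$, the open balls $\CB(\xbs_i,\delta/2)$ with centres $\xbs_i\in S$ are pairwise disjoint: if $\xbs_i\neq\xbs_j$ then $\N{\xbs_i-\xbs_j}\ge\delta$, so by the triangle inequality no point can lie within distance $<\delta/2$ of both. Moreover, because each $\xbs_i\in\CB(\xbs,r)$, the triangle inequality gives $\CB(\xbs_i,\delta/2)\subseteq\CB(\xbs,r+\delta/2)$. Hence the disjoint family $\{\CB(\xbs_i,\delta/2)\}_{\xbs_i\in S}$ is contained in $\CB(\xbs,r+\delta/2)$.

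The second step is to compare volumes. Writing $\omega_D$ for the volume of the unit ball in $\bbR^D$, so that a ball of radius $\rho$ has volume $\omega_D\rho^D$, disjointness and containment yield
\begin{equation*}
  |S|\,\omega_D\Big(\tfrac{\delta}{2}\Big)^D \;=\; \sum_{\xbs_i\in S}\vol\big(\CB(\xbs_i,\delta/2)\big)\;\le\;\vol\big(\CB(\xbs,r+\delta/2)\big)\;=\;\omega_D\Big(r+\tfrac{\delta}{2}\Big)^D .
\end{equation*}
Cancelling $\omega_D(\delta/2)^D>0$ gives $|S|\le\big((r+\delta/2)/(\delta/2)\big)^D=(2r/\delta+1)^D$, which is the claim. (Finiteness of $S$, needed to make the sum legitimate, is itself a consequence of this packing estimate, or alternatively follows from compactness of the closed ball together with the $\delta$‑separation; I would mention this in one line.)

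I do not expect any genuine obstacle: the argument is entirely elementary, and the only care required is the routine verification of pairwise disjointness of the half‑radius balls and of the containment in the enlarged ball. If desired, I would add a remark that the bound is not tight but is exactly what is needed downstream — e.g.\ taking $r=r_0$, $\delta=r_0/2$ recovers the upper bound $|S|\le 5^D$ used in Lemma~\ref{lemma:Bound_on_c_D}, and the same estimate controls the off‑diagonal row sums appearing in Eq.~\eqref{eq:gaurchinThm_bounding_eValues_close_to_1}.
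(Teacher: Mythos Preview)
Your proof is correct and follows exactly the same volume-packing argument as the paper: place disjoint balls of radius $\delta/2$ around each point of $S$, observe they all lie inside $\CB(\xbs,r+\delta/2)$, and compare volumes to obtain $|S|\le (2r/\delta+1)^D$. Nothing to add.
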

\begin{proof} 
Since the points in $S$ are at least $\delta$ apart, it follows that the balls $\CB(\xbs_i, \delta/2)$ are disjoint. Consider now the ball $\CB(\xbs, r+\delta/2)$. All of the balls $\CB(\xbs_i, \delta/2)$ are entirely contained within $\CB(\xbs, r+\delta/2)$. Since the balls $\CB(\xbs_i, \delta/2)$ are disjoint, it follows that
\begin{equation*}
    |S| \leq \frac{\vol\Big(\CB(\xbs, r+\delta/2)\Big)}{\vol\Big( \CB(\xbs_i, \delta/2)\Big)} = \bigg(\frac{2r}{\delta}+1\bigg)^D.
\end{equation*}
\end{proof}

Consider a family of annuli $\{R_t\}_{t=0}^\infty$ where $R_t = \CB(\xbs_j, 2^{t+1}\delta) \backslash \CB(\xbs_j, 2^t\delta)$. 
Inspired by \cite{Leeb2019}, we can interpret the right hand side of Eq. \eqref{eq:gaurchinThm_bounding_eValues_close_to_1} as a sum over $\{R_t\}_{t=0}^\infty$. The entries of $\VK^{(l)}_{nn}$ are defined as
\begin{equation*}
    [\VK^{(l)}_{nn}]_{ij} = \exp{\bigg(-\frac{\N{\xbs_i-\xbs_j}^2}{ 2 r_l^2}\bigg)},\quad \forall i,j\in [n],
\end{equation*}
where $r_l = 2^{-l}r_0$ for $r_0>0$. It follows

\begin{align*}
    \begin{split}
        \sum_{\substack{q=1,\\ q\neq j}}^n |[\VK^{(l)}_{nn}]_{jq}| = \sum_{t=0}^{\infty}\sum_{\xbs_q\in R_t} k^{(l)}(\xbs_j, \xbs_q) \leq \sum_{t=0}^\infty  \bigg(\frac{2^{t+2}\delta}{\delta}+1\bigg)^D 
        \exp\big(-(2^{t}\delta 2^{-1/2}r_l^{-1})^2\big)
        % C r^{-T},
    \end{split}
\end{align*}
where in the first term on the right-hand side we bound the number of summands using Lemma \ref{lemma:upperBoundOnNumberOfPointsInBall},
and in the second we use  $\|\xbs_q - \xbs_j\| \geq 2^t\delta$ for $\xbs_q\in R_t$. 
Note now that for all $T\ge 1$ there exists $C_T>0$ such that $\exp(-r^2)\leq C_T r^{-T}$ holds for all $r>0$.
Such a constant is given by the Lambert W function and satisfies $C_T =\left(\frac{T}{2{\rm e}}\right)^{T/2}$. 
Moreover, $2^{t+2}+1 \leq 2^{t+2+\alpha}$, for $\alpha\ge \ln(1+1/4)/\ln(2)$.
Thus, 
\begin{align*}
    \sum_{\substack{q=1,\\ q\neq j}}^n |[\VK^{(l)}_{nn}]_{jq}| \leq C_T \left(\frac{r_l}{\delta}\right)^T 2^{(2+\alpha)D+T/2}\sum_{t=0}^\infty 2^{t(D-T)}\leq C_{DT}\left(\frac{r_l}{\delta}\right)^T,
\end{align*}
where using $\sum_{t=0}^\infty 2^{t(D-T)}\leq 2$, which holds for $D-T< 0$, we let
\[ 2 \cdot 2^{D(2+\alpha)+T/2} C_T
% =2 \cdot 2^{D(2+\alpha)+T/2-T/2}T^{T/2}\exp(-T/2)
\le2 \cdot 2^{D(2+\alpha)-T/2(1+1/\ln(2))}T^{T/2}=:C_{DT} ,\]
where we used $\exp(1)\ge 2^{1+1/\ln(2)}$.

We now consider the function
\begin{align*}
\begin{split}
    F(T) &\coloneqq 2^{-\frac{T}{2}(1+1/\ln{2})}2^{-lT}T^{\frac{T}{2}}\bigg(\frac{r_0}{\delta}\bigg)^T \\
    & = 2^{-\frac{T}{2}(1+1/\ln{2})}2^{-lT}2^{T/2\log_2T} 2^{T\log_2(r_0/\delta)}\\
    & = 2^{-T/2(B-\log_2 T)} = 2^{f(T)},
\end{split}
\end{align*}
Where $B=1+\frac{1}{\ln 2} + 2l - 2\log_2\bigg(\frac{r_0}{\delta}\bigg)$. $F$ is minimized by
\begin{equation*}
    T^{*} = 2^{B-1/\ln 2} = 2^{1+1/\ln 2 + 2l -2\log_2(r_0/\delta)-1/\ln 2} = 2\cdot 4^{l-\log_2(r_0/\delta)},
\end{equation*}
such that
\begin{equation*}
    F(T^{*}) = 2^{-\frac{T}{2}(B-\log_2 2^B - \log_2 2^{-1/\ln 2})} = 2^{\frac{-T^{*}}{2\ln 2}} = 2^{-\frac{1}{\ln 2}4^{l-\log_2 (r_0/\delta)}}.
\end{equation*}
Inserting this back and with $\alpha=\ln (1+1/4)/\ln 2$, we have
\begin{equation*}
    \sigma_{l,n} > 1-\sum_{\substack{q=1,\\ q\neq j}}^n |[\VK^{(l)}_{nn}]_{jq}| \geq 1 - 2^{1+\frac{1}{\ln{2}}((\ln{(1+1/4)}+2\ln{2}) D - g(l))},\quad g(l) =4^{l-\log_2{r_0/\delta}}
\end{equation*}
With $C_3=(\ln{(1+1/4)}+2\ln{2})$ this leads to
\begin{equation}
     0 < 1-\varepsilon(l) < \bigg(1+\big(1-2^{1+\frac{1}{\ln{2}}(C_3 D - g(l))}\big)/n\lambda\bigg)^{-1},
     \label{eq:garschgorin_bound}
\end{equation}
We note that the bound in Eq. \eqref{eq:garschgorin_bound} holds for $T^{*}>D$ which means that $    l > \log_2(\sqrt{D/2}r_0/\delta)$.\hfill\qedsymbol

\subsection*{C.3 Proof of Corollary \ref{cor:spectrally_bandlim_res}}

Follows from Eq. \eqref{eq:residual_relation_before_norm}-\eqref{eq:defining_eps_l} with $P^{(l)}_{nn} =P^{(l)}_{nn,k} + \big(P^{(l)}_{nn,k}\big)^\perp $, where $P^{(l)}_{nn,k}$ is the projection on the eigenvectors associated with the $k$ largest eigenvalues and $\big(P^{(l)}_{nn,k}\big)^\perp(\widehat{f}^{(l)}([\xbs_n])-f([\xbs_n])) = 0$.\hfill\qedsymbol

\end{document}